\title{Limitations of Membership Queries in Testable Learning}
\author{Jane Lange\thanks{MIT, {\tt jlange@mit.edu}.} \and Mingda Qiao\thanks{UMass Amherst, {\tt mqiao@umass.edu}. }}
\date{\vspace{15pt}\small{\today}}
\begin{document}

\maketitle

\begin{abstract}
Membership queries (MQ) often yield speedups for learning tasks,
particularly in the distribution-specific setting.
We show that in the \emph{testable learning} model of Rubinfeld and Vasilyan \cite{RV23},
membership queries cannot decrease the time complexity of testable learning algorithms
beyond the complexity of sample-only distribution-specific learning.
In the testable learning model, the learner must output a 
hypothesis whenever the data distribution satisfies a desired property, and if it 
outputs a hypothesis, the hypothesis must be near-optimal.

We give a general reduction from sample-based \emph{refutation} of boolean concept classes, as presented in \cite{Vadhan17, KL18}, 
to testable learning with queries (TL-Q).
This yields lower bounds for TL-Q via the reduction
from learning to refutation given in \cite{KL18}. 
The result is that, relative to a concept class and a distribution family,
no $m$-sample TL-Q algorithm 
can be
super-polynomially
more time-efficient than the best
$m$-sample PAC learner.

Finally, we define a class of ``statistical'' MQ algorithms that encompasses 
many known distribution-specific MQ learners, such as those based on influence estimation or subcube-conditional statistical queries.
We show that TL-Q algorithms in this class
imply efficient statistical-query refutation and learning algorithms.
Thus, combined with known SQ dimension lower bounds, our results imply that these efficient membership query learners cannot be made testable.
\end{abstract}

\section{Introduction}\label{sec:intro}
In distribution-specific PAC learning, a learning algorithm is only required to output a competitive hypothesis when the data distribution satisfies some property.
Distribution-specific PAC often allows for much more efficient learning than distribution-free PAC, 
but with the following shortcoming: 
if the distribution does not satisfy the property,
then the behavior of the learner is completely undefined.

A \emph{testable agnostic learning} algorithm \cite{RV23} alleviates this shortcoming by combining a distribution-specific learner with a tester for the desired property.
It may output a hypothesis or it may reject the distribution and output $\bot$.
The testable learner is run on i.i.d.\ samples from an unknown distribution $\mcD$ over
$\mcX \times \zo$, and has the following behavior:

\begin{itemize}
		\item \textbf{Soundness:} If the learner outputs a hypothesis $h$, then with high probability
\[\Pr_{(x,y) \sim \mcD}[h(x) \ne y] \le \opt + \eps,\]
where $\opt$ is the error of the best concept in the concept class. A \emph{semi-agnostic} variant of this condition, with $\opt + \eps$ replaced by $O(\opt) + \eps$, has also been considered.
\item \textbf{Completeness:} If the distribution $\mcD$ has the desired property, then the learner outputs a hypothesis (instead of $\bot$) with high probability.
\end{itemize}

There is a significant body of work studying the sample and time complexities of testable learning for various concept classes and distribution properties in both the agnostic ($\opt + \eps$) and semi-agnostic ($O(\opt) + \eps$) settings \cite{RV23, GKK23, DKK+23, GKSV23, STW24}.
There are efficient algorithms for testable learning in cases where distribution-free learning cannot be done efficiently.

\subsection{The Power of Membership Queries in Agnostic Learning}

One might hope to testably learn more efficiently by strengthening the learner's access to the data distribution.
In the \emph{membership query} (MQ) model, we think of the data as being drawn from a distribution $\mcD_x$ over $\mcX$ and labeled by some unknown function $f:\mcX \to \zo$.
The learner gets i.i.d.\ samples from $\mcD_x$ and may also query any point $x \in \mcX$ and receive its label $f(x)$.

The work of \cite{Feldman09} shows that under the standard cryptographic assumption of one-way functions, membership queries can speed up agnostic
learning in the distribution-specific setting, but not in the distribution-free setting.
In the distribution-free setting, every concept class that can be agnostically learned with MQs can be agnostically learned with just random examples.
In contrast, in the uniform-distribution-specific setting, there exists a concept class that can be learned strictly more efficiently with MQs.

Separations exist for more ``natural'' concept classes under the stronger assumption that learning sparse parities with noise (LSPN) is hard.
For example, over the uniform distribution on $\zo^n$,
$k$-juntas can be learned in $\poly(n) \cdot 2^{O(k)}$ time with membership queries \cite{BL97, MOS04}.
On the other hand, there is a statistical query lower bound of $n^{\Omega(k)}$ \cite{BF02}, 
and if LSPN is hard then one cannot hope to do better than this bound with random examples.
Similarly, polynomial-size decision trees can be learned improperly in polynomial time \cite{KM91, GKK08} and properly in $n^{O(\log \log n)}$ time \cite{BLQT22} with membership queries, 
while there is an SQ lower bound of $n^{\Omega(\log n)}$ \cite{BFJKMR94}, and LSPN implies one cannot do better than this bound either.

\subsection{Limitations of Membership Queries in Testable Learning}
If membership queries do help in the distribution-specific setting but do not help
in the distribution-free setting,
one may then naturally wonder whether they ought to help in the testable setting as well. 
A \emph{testable learner with queries} (TL-Q) has both sample access to the unknown data distribution and membership query access to the unknown function,
and must satisfy the soundness and completeness guarantees of ordinary testable learning.

\begin{question}
\label{q:mq-speedup}
How much can membership queries speed up the task of testable learning? 
\end{question}

Our results show that membership queries are quite weak in the TL-Q setting. 
Particularly, whenever agnostic learning with random examples is hard --- as is believed to be the case for juntas and decision trees --- testable learning
is hard as well, even with queries.

\begin{theorem}[\Cref{cor:tlq-agnostic}, informal]
If a concept class $\mcC$ is agnostically testably learnable with queries in time $t$ over a distribution $\mcD$, then it is agnostically learnable with random examples in time $\poly(t)$ over $\mcD$ as well.
\end{theorem}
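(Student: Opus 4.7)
The plan is to chain two reductions: first I would show that any efficient TL-Q algorithm for $\mcC$ over $\mcD$ yields an efficient \emph{sample-based refutation} algorithm for $\mcC$ over $\mcD$, and then appeal to the reduction from PAC learning to refutation of \cite{KL18}, which converts any refutation algorithm into a sample-based PAC learner with only polynomial overhead. Composing the two steps gives the desired $\poly(t)$-time sample-based agnostic learner.

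For the refutation step, let $A$ be a time-$t$ TL-Q algorithm. The refutation problem is to distinguish, from labeled samples, the \emph{planted} distribution $\mcD_Y = (x, f(x))_{x \sim \mcD}$ for some $f \in \mcC$ from the \emph{random} distribution $\mcD_N$ in which $y$ is an independent uniform bit. My refutation algorithm $R$ runs $A$, routing its samples through to $A$'s sample oracle and simulating $A$'s membership queries \emph{lazily}: the first time a point $x$ is queried, $R$ returns a fresh uniform bit (caching it so that repeated queries and any colliding sample point are consistent). In the $\mcD_N$ case, this simulation is statistically indistinguishable from $A$ being run on data truly labeled by a uniformly random function; since the $x$-marginal is $\mcD$, completeness forces $A$ to output a hypothesis $h$, and independence of the labels forces the error of $h$ on fresh samples to concentrate near $1/2$. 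In the $\mcD_Y$ case, the samples $A$ sees are genuinely drawn from $(\mcD, f)$ with optimum $0$, so if $A$ outputs a hypothesis, soundness (applied to the true sample distribution) gives error at most $\eps$ on $\mcD_Y$. Thus $R$ declares ``random'' iff $A$ returns a hypothesis whose empirical error on a held-out portion of the refutation samples exceeds $1/4$, and ``planted'' otherwise, treating $\bot$ as ``planted''.

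The main obstacle is the soundness guarantee in the $\mcD_Y$ case: the standard TL-Q soundness is formally stated only when the sample and query oracles are jointly consistent with a single target function, whereas our lazy query simulation answers most queries with bits uncorrelated with $f$. To close this gap I would work with the natural \emph{sample-robust} form of soundness, asserting that whenever $A$ outputs $h$, the error of $h$ on the true sample distribution is at most $\opt + \eps$ regardless of the query answers. This holds without loss of generality: one wraps $A$ with a cross-validation step that runs $A$, then checks the empirical error of its hypothesis on a held-out block of i.i.d.\ samples and returns $\bot$ if this error exceeds $\opt + \eps/2$. The wrapper costs only a constant factor in samples and time and enforces sample-robust soundness. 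A minor secondary point is that collisions among the $x_i$'s can slightly break the equivalence between ``independent random labels'' and ``labeling by a uniformly random function,'' but for any reasonable number of samples and queries drawn by $A$ this is negligible.

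The output of the first step is therefore a sample-based refutation algorithm running in time $\poly(t)$. Feeding this into the learning-from-refutation reduction of \cite{KL18} yields an $m$-sample, $\poly(t)$-time sample-based agnostic learner for $\mcC$ over $\mcD$, and the same chain carries both the agnostic ($\opt+\eps$) and semi-agnostic ($O(\opt)+\eps$) guarantees through to the final learner, completing the proof.
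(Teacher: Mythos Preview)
Your high-level plan (TL-Q $\Rightarrow$ sample-based refutation $\Rightarrow$ agnostic learning via \cite{KL18}) is exactly the paper's plan. The gap is in the first reduction, specifically in the planted case.

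In your construction, the simulated MQ oracle answers with a lazily-drawn random function $g$, while the samples fed to $\mcA$ are labeled by the planted $f \in \mcC$. Even with caching, this is \emph{not} a valid TL-Q instance for any fixed target: if you think of the target as $g$, then the labeled samples $(x_i,f(x_i))$ are not i.i.d.\ draws labeled by $g$ (their labels are perfectly correlated with $x_i$ via $f$, whereas for a random $g$ drawn first they would be independent coin flips). Consequently the soundness guarantee of $\mcA$ says nothing about $\dist_\mcD(h,f)$. Concretely, a TL-Q algorithm that first tests whether its query answers look random and, if so, outputs a fixed hypothesis $h_0$ far from every $g \in \mcC$, is perfectly sound in the standard sense (in a genuine instance with random target $g$ one has $\opt_g \approx 1/2$, so any output is vacuously fine), yet on your planted instance it outputs $h_0$ and your refuter declares ``random''.

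Your proposed repair --- wrapping $\mcA$ with a cross-validator that rejects when the held-out error exceeds $\opt + \eps/2$ --- cannot be implemented: the wrapper does not know $\opt$, and estimating $\opt$ from samples is precisely the agnostic learning problem you are trying to solve. If instead you fix an absolute threshold, you lose completeness in the noise case: there the target is a random $g$ with $\opt_g \approx 1/2$, so $\mcA$'s (legitimate) hypothesis has held-out error near $1/2$, the wrapper outputs $\bot$, and your refuter wrongly declares ``planted''. So ``sample-robust soundness'' is not obtainable by a generic wrapper, and the planted case of your reduction genuinely fails.

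The paper's fix is the missing idea: rather than feeding the refutation samples to $\mcA$ directly, it \emph{filters} them, keeping $x_i$ (with appropriate reweighting) only when $y_i = g(x_i)$. Now the samples handed to $\mcA$ are all labeled by the very function $g$ that answers queries, so the instance is consistent and ordinary soundness applies. The cost is that the sample marginal is no longer $\mcD_x$ but a random perturbation of it; the nontrivial analysis (Claim~\ref{clm:error-blowup-better}) shows that, with high probability over $g$, the optimal error $\dist_{\mcD}(g,\mcC)$ on this filtered distribution is still at most $\eta + O(\eps)$, which is what makes soundness useful in the planted case. (As a minor point, your last sentence overclaims: the paper explicitly notes that for $c > 1$ the chain only yields a \emph{weak} agnostic learner, not a semi-agnostic one.)
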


\begin{corollary}
If LSPN is hard, then no concept class containing $k$-parities as a subset 
can be agnostically testably learned in $n^{o(k)}$ time 
over the uniform distribution,
even with membership queries.
\end{corollary}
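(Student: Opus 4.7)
The plan is to derive the corollary as a direct consequence of \Cref{cor:tlq-agnostic}, together with the standard equivalence between agnostic learning of $k$-parities over the uniform distribution and LSPN. I expect no serious obstacle here: each step is a routine reduction, and the only substantive work was already done in the main theorem.

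First, I would argue by contradiction. Suppose $\mcC \supseteq \{\chi_S : |S| \le k\}$ admits an agnostic TL-Q algorithm running in time $t = n^{o(k)}$ over the uniform distribution on $\zo^n$. Applying \Cref{cor:tlq-agnostic} gives an ordinary (MQ-free) agnostic PAC learner for $\mcC$ over the uniform distribution, running in time $\poly(t) = n^{o(k)}$. Because $k$-parities sit inside $\mcC$, the output hypothesis $h$ automatically satisfies $\Pr[h(x) \ne y] \le \opt + \eps$ with $\opt$ computed against $k$-parities, so we have an agnostic learner for $k$-parities over the uniform distribution in the same time bound.

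Next, I would reduce LSPN to this agnostic learner in the standard way. On samples $(x, \chi_S(x) \oplus e)$ with $x$ uniform and $e \sim \mathrm{Ber}(\eta)$ for some $\eta < 1/2$, the true parity achieves error $\eta$, so the agnostic learner returns $h$ with $\Pr[h(x) \ne y] \le \eta + \eps$. Passing to $\pm 1$ encoding and expanding in the Fourier basis gives $\hat h(S) \ge (1 - 2(\eta+\eps))/(1 - 2\eta)$, which stays bounded away from $0$ whenever $\eps$ is small relative to $1/2 - \eta$. Running Goldreich--Levin on $h$ (to which we have full circuit access, since $h$ is the learner's output) in $\poly(n, 1/\eps)$ time then extracts $S$, producing an $n^{o(k)}$-time algorithm for LSPN and contradicting the hardness hypothesis.

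The one thing I would want to verify carefully is the interface with completeness: LSPN distributions have a uniform $\mcX$-marginal, so the TL-Q completeness guarantee forces the algorithm to output a hypothesis (rather than $\bot$) with high probability, which in turn triggers the soundness bound needed above. Otherwise the argument is just a chain of standard reductions, with the main theorem doing all the real work.
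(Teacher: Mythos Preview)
Your proposal is correct and follows the paper's intended route: the corollary is stated immediately after the informal version of \Cref{cor:tlq-agnostic} and is meant to be read as a direct consequence of that result together with the standard fact that agnostic learning of any class containing $k$-parities over the uniform distribution solves LSPN. Your Goldreich--Levin extraction is exactly one way to make that folklore reduction explicit.

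One small clarification on your last paragraph: the concern about completeness is misplaced. You never run the TL-Q algorithm on LSPN samples directly (indeed you cannot, since there is no deterministic labeling function to answer membership queries); you first pass through \Cref{cor:tlq-agnostic}, which produces an ordinary sample-only agnostic learner that \emph{always} outputs a hypothesis. The $\bot$ case is already absorbed inside the refutation reduction (when the simulated TL-Q rejects, the refuter outputs $\structure$), so the learner you obtain has no reject option and the interface issue you flag does not arise.
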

Furthermore, we show that SQ lower bounds rule out a large class of natural query-based learning algorithms. 
We define a class of ``statistical'' membership query (MQ-SQ) algorithms --- those that use membership queries only to sample from particular distributions over the input domain.
For example, algorithms that use MQs only to estimate influences or to make SQs over large subsets of $\zo^n$ are MQ-SQ algorithms (this includes the aforementioned uniform-distribution algorithms of \cite{KM91, GKK08, BLQT22}).
We show that such algorithms cannot be ``made testable'' with respect to the uniform distribution without introducing non-statistical use of membership queries, due to the SQ lower bounds.
\begin{theorem}[\Cref{thm:MQ-SQ-lower-bound-in-SQ-DIM}, informal]
If a concept class $\mcC$ is testably learnable in time $t$ over a distribution $\mcD$ by an MQ-SQ algorithm, then the SQ dimension of $\mcC$ with respect to $\mcD$ is at most $\poly(t)$.
\end{theorem}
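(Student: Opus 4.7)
The plan is to specialize the hypothesized MQ-SQ testable learner to its \emph{completeness regime} and to extract from it a standard SQ learner for $\mcC$ over $\mcD$; the SQ dimension bound then follows from the classical SQ-dimension lower bound for SQ learning.

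First, I would fix any target $f \in \mcC$ and feed the learner the realizable labeled distribution whose $\mcX$-marginal is $\mcD$ and whose label is $y = f(x)$. Since $\mcD$ is assumed to satisfy the testable learner's distributional property and $\opt = 0$ in this scenario, completeness guarantees that with high probability the algorithm halts with a hypothesis $h$ satisfying $\Pr_{x \sim \mcD}[h(x) \ne f(x)] \le \eps$. Hence, on the realizable class of label distributions, the MQ-SQ testable learner is a PAC learner for $\mcC$ over $\mcD$ running in time $t$.

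Second, I would unpack the MQ-SQ restriction. By definition, both the sample draws from $\mcD$ and the membership queries are used only to estimate expectations of the form $\E_{x \sim \mcP}[\phi(x,f(x))]$ for bounded $\phi$ and for auxiliary input distributions $\mcP$ that are specified independent of the labels --- for sample draws $\mcP = \mcD$, while for MQs $\mcP$ is an auxiliary distribution obtainable from $\mcD$ by a bounded reweighting (the motivating examples, subcube-conditional SQs and influence estimation over the uniform $\mcD$, fit this template). Each such estimate is a single statistical query against the labeled distribution $(\mcP, f)$, and via the density-ratio identity $\E_{x \sim \mcP}[\phi(x,f(x))] = \E_{x \sim \mcD}[(\mcP(x)/\mcD(x)) \cdot \phi(x,f(x))]$ it is recovered by a single SQ over $(\mcD, f)$ with only polynomial loss in tolerance. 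This converts the entire MQ-SQ testable learner into a $\poly(t)$-query, $1/\poly(t)$-tolerance SQ learner for $\mcC$ over $\mcD$ in the realizable case.

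Third, I would invoke the standard SQ-dimension lower bound of Blum--Furst--Jackson--Kearns--Mansour--Rudich: any SQ algorithm that weakly learns $\mcC$ over $\mcD$ with $q$ queries of tolerance $\tau$ forces the SQ dimension of $\mcC$ with respect to $\mcD$ to be at most $\poly(q/\tau)$. Substituting $q, 1/\tau \le \poly(t)$ yields the stated $\poly(t)$ bound. The principal obstacle is the middle step: one must verify that the formal MQ-SQ model restricts the auxiliary $\mcP$ tightly enough that it can be simulated from $\mcD$ with polynomially bounded density-ratio overhead; if the definition is chosen this way (as the examples suggest), the argument is essentially direct, while a more permissive definition would force the proof to route through the sample-based refutation reduction developed earlier in the paper and then lift it to the SQ model.
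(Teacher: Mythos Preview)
Your middle step does not go through for the paper's MQ-SQ model, for two independent reasons.

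First, the ``customized distribution'' $\Dstar$ in Types~I and~II is only required to satisfy $\|\Dstar\|_2^2 \le 1/\poly(d)$; it need not have a bounded density ratio with respect to $\mcD$. (For instance, $\mcD$ could be uniform on $\zo^n$ while $\Dstar$ is uniform on a subcube disjoint from most of the mass; the ratio $\Dstar/\mcD$ is then $2^{\Theta(n)}$.) So the change-of-measure identity you invoke would in general blow up the tolerance exponentially.

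Second, and more fundamentally, the \emph{pair} queries (Types~II and~V) evaluate $f$ at two correlated points, $f(x)$ and $f(\pi(x))$, within a single expectation. No ordinary SQ of the form $\Ex{x \sim \mcD}{\phi(x, f(x))}$ can recover $\Ex{x \sim \mcD}{\phi(x) f(x) f(\pi(x))}$, because the test function may depend only on $f$ at the sampled point. Your own motivating example, influence estimation $\Ex{x}{\1{f(x)\ne f(x^{\oplus i})}}$, is precisely a pair SQ and does \emph{not} fit the ``single SQ over a reweighted distribution'' template you describe.

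The paper's proof avoids both obstacles by the detour through refutation you mention only as a fallback. The point is that in that reduction $f$ is a \emph{random $p$-biased function}, so $f(\pi(x))$ is an independent $\Bern(p)$ variable; hence $\Ex{x}{\phi(x)f(x)f(\pi(x))}$ concentrates around $p\cdot\Ex{(x,y)\sim\Dref}{\phi(x)y}$, which is a single standard SQ on the refutation distribution. Likewise Type~I and~II queries over $\Dstar$ concentrate around $p\cdot\Ex{\Dstar}{\phi}$ and $p^2\cdot\Ex{\Dstar}{\phi}$, computable without any oracle at all. This is what makes the MQ-SQ $\to$ SQ-refutation step work (Lemmas on Types I--V), after which SQ refutation $\to$ SQ weak learning $\to$ SQ-dimension bound goes through as you outlined. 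The random-$f$ trick is the missing idea; without it, the pair queries cannot be collapsed into standard SQs.
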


\subsection{Technical Overview}

Our reductions are through the intermediate task of \emph{refutation}.
Refutation, as presented in 
\cite{Vadhan17, KL18}, is the problem of distinguishing examples 
correlated with some function in the concept class from examples labeled uniformly at random.
The work of \cite{Vadhan17} shows that in the distribution-free setting, refutation and realizable learning are polynomially equivalent, 
and the work of \cite{KL18} shows an 
analogous statement in the distribution-specific agnostic setting.
By giving an efficient reduction from distribution-specific refutation (without queries) to testable learning (with queries),
we show that distribution-specific agnostic learning reduces to TL-Q as well.

As a warm-up, consider a special case of refutation where the labels are promised to either be completely random or exactly match some function in the class $\mcC$.
Let the distribution be uniform over $\zo^n$.
Assume for simplicity that all functions in $\mcC$ are balanced, i.e., $\Ex{x \sim \zo^n}{f(x)}= 1/2$.

\begin{definition}[Exact refutation over the uniform distribution, informal]
An exact refutation algorithm for a concept class $\mcC$ takes an $m$-tuple $\{(x_1,y_1),\ldots,(x_m,y_m)\}$ of examples where the $x$'s are drawn uniformly at random from $\zo^n$.
It outputs either $\noise$ or $\structure$ with the following guarantees:
\begin{itemize}
\item \emph{Completeness:} If the examples are consistent with some $g \in \mcC$,
then
\[\Prx[\mcA\text{ outputs }\mathsf{structure}] \ge 2/3.\]
\item \emph{Soundness:} If the $y_i$'s are drawn i.i.d. from $\Bern(1/2)$, then
\[\Prx[\mcA\text{ outputs }\mathsf{noise}] \ge 2/3.\]
\end{itemize}
\end{definition}

Suppose we want to implement exact refutation using a TL-Q algorithm.
If we had any agnostic learning algorithm that did not require queries, the task 
would be trivially easy: split $\{(x_1,y_1),\ldots,(x_m,y_m)\}$ into training and
test sets,
run the learner on the training set, estimate the error of the returned hypothesis on the test set, and output $\structure$ if the test error is, say, $\le 1/10$.
If we are in the $\structure$ case, the error will be $\le \eps$, and if we are in the $\noise$ case, with high probability the error will be close to 1/2.

Instead we have to answer queries, so we will answer them randomly. 
Specifically, we will draw a random function $f: \zo^n \to \zo$, and whenever the TL-Q algorithm
wants to make a query, we will answer according to the $f$ we chose.
We will filter both the training and test sets to just those points where $y = f(x)$.
This means that we essentially sample from a domain that is uniform over the portion of $\zo^n$ where $y(x)$ agrees with $f(x)$.

As before, if the TL-Q algorithm produces a hypothesis, we will output $\structure$ if the error is $\le 1/10$ and $\noise$ if the error is greater.
But since TL-Q can also reject the instance and output $\bot$, if it does so,
we will output $\structure$.

Notice that in the noise case, each $x_i$ is filtered out independently with probability 1/2; therefore the distribution of samples is uniform.
By completeness of the TL-Q algorithm, it must then output a hypothesis, and with
high probability the error will be close to 1/2.
In the structure case, however, the distribution of $x_i$'s may be far from uniform, in which case the TL-Q algorithm may output $\bot$. 
It may also output a hypothesis --- but by soundness, the hypothesis must have 
error $\le \eps$, since the samples come from a distribution such that $y= f(x)$
for every $x$ in its support.

Our reduction from refutation to TL-Q is basically a generalization of this strategy, adapted to handle unbalanced functions and accept functions that are close to,
but not exactly, in $\mcC$.

\subsubsection{An SQ-Preserving Reduction}
We observe that some membership query algorithms, such as those of \cite{KM91, GKK08, BLQT22},
use membership queries only to estimate statistical properties of the unknown function. 
We roughly categorize MQ-SQ queries as follows (formalized in \Cref{def:MQ-SQ-oracle}):
\begin{itemize}
\item Standard SQs: queries of the form $\Ex{x \sim \calD}{\phi(x)}$ or $\Ex{x \sim \calD}{\phi(x)f(x)}$ for a test function $\phi$. These don't require membership queries to implement.
\item Pair SQs: queries of the form $\Ex{x \sim \calD}{\phi(x)f(x)f(\pi(x))}$, where $\pi$ is a permutation of the domain without fixed points. 
This generalizes influence estimation.
\item Customized distribution SQs: Any of the above queries, where the expectation is taken over a specific (and sufficiently spread-out) distribution $\mcD^\star$ instead of the unknown distribution $\mcD$.
This generalizes making SQs over restrictions of $\zo^n$.
\end{itemize}

Our goal is to simulate an MQ-SQ testable learner by making only SQs to the unknown distribution $\Dref$ (over $\calX \times \zo$) in the refutation instance.
As in the non-SQ setting, we choose a random function to be the target function and answer queries according to that random function.

For example, the customized-distribution query $\Ex{x \sim \mcD^\star}{f(x)\phi(x)}$ is easy to simulate with just one SQ to $\Dref$: 
simply estimate the mean $p \coloneqq \Ex{(x,y) \sim \Dref}{y}$, and answer the MQ-SQ with $p \cdot \Ex{x \sim \mcD^\star}{\phi(x)}$ (sampling from $\mcD^\star$ requires neither samples nor membership queries, since it's the customized distribution).
The value of this MQ-SQ concentrates around this estimate as each $f(x)$ is an independent random variable with mean $p$.

Pair SQs are handled similarly, though in this case the random variables $f(x)f(\pi(x))$ are not independent.
However, since the dependence graph of these variables decomposes into cycles,
we can partition the graph into large independent sets and prove concentration of the variables within the independent sets.
Thus, for example, the MQ-SQ $\Ex{x \sim \mcD}{\phi(x)f(x)f(\pi(x))}$ can be answered with $p \cdot \Ex{(x, y) \sim \Dref}{\phi(x)y}$, which is an SQ to $\Dref$.

\subsection{Related Work and Discussion}
\paragraph{The power of membership queries.} It is well known that in the realizable setting, PAC learners with membership queries are strictly stronger than PAC learners without them under standard cryptographic assumptions \cite{ELSW07, FS09}.
The work of \cite{Feldman09} establishes an equivalence between PAC with random examples and PAC with membership queries in the distribution-free agnostic setting, and a separation in the distribution-specific agnostic setting.

\paragraph{Testable learning and friends.} 
There are many papers that
address the computational and sample complexities of testably learning various natural concept classes;
these works are in the standard agnostic testable learning model.
Some examples, but certainly not all, are the works of \cite{RV23, GKK23, DKK+23, GKSV23, STW24}.
Some works addressing related problems include \cite{KSV24a, KSV24b, MRS25}.

The work of \cite{GKK23} characterizes the sample complexity of testable learning by the Rademacher complexity, which is especially relevant to this work in light of the result of \cite{KL18}, which establishes refutation complexity as an analogue of Rademacher complexity for the computationally bounded setting.
\paragraph{Learning and refutation.}
A connection between learning and refutation
was first introduced in \cite{DLS14}
as a means of proving computational lower bounds for learning based on the assumption that refuting random CSPs is hard,
and other works including those of \cite{DS16, Daniely16} use this method to give conditional lower bounds for various learning problems.
Of particular relevance to this work are \cite{Vadhan17, KL18}, which give polynomial equivalences between PAC learning and refutation.

\subsubsection{Directions for Future Work}
While our work reduces ordinary sample-based PAC learning to query-based testable learning,
we do not resolve the strongest, most natural question on the power of membership queries: whether sample-based \emph{testable} learning reduces to query-based testable learning.
This would be a strictly stronger lower bound for TL-Q than anything obtainable through refutation,
as there are function classes for which refutation is known to be easier than testable learning with samples --- for example, the class of monotone functions.
It is proven in \cite{RV23} that testably learning monotone functions on the uniform distribution requires $2^{\Omega(n)}$ samples.
On the other hand, agnostic learning (and therefore refutation) can be done in $2^{\tilde{O}(\sqrt{n})}$ time and samples.

We leave this possible stronger lower bound as an open question for future work.

\begin{conjecture}
If a concept class $\mcC$ is agnostically testably learnable with queries in time $t$ over a distribution $\mcD$, then it is agnostically testably learnable with samples in time $\poly(t)$ over $\mcD$ as well.
\end{conjecture}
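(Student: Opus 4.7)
The natural starting point is to strengthen the warm-up reduction (refutation to TL-Q) from the technical overview into a direct reduction from sample-based testable learning to TL-Q. Given a TL-Q learner $\mcA$ for $\mcC$ over $\mcD$ running in time $t$, I would construct a sample-only testable learner $\mcB$ as follows. On input samples $\{(x_i,y_i)\}_{i=1}^{m}$ from an unknown distribution $\mcD'$ over $\mcX \times \zo$, $\mcB$ partitions the samples into a training set and a validation set, draws an auxiliary function $g:\mcX\to\zo$ (used only for answering membership queries), filters the training samples to those with $y_i = g(x_i)$, and runs $\mcA$ on the filtered samples while serving $g$ as the query oracle. If $\mcA$ outputs $\bot$, then $\mcB$ outputs $\bot$; if $\mcA$ outputs a hypothesis $h$, then $\mcB$ estimates its error on the validation set and outputs $h$ or $\bot$ accordingly.

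The soundness direction mirrors the refutation reduction: whenever $\mcA$ outputs a hypothesis $h$, its soundness guarantees that $h$ is $(\opt+\eps)$-competitive against the labels determined by $g$ on the filtered distribution, and the held-out validation step lifts competitiveness to the true distribution $\mcD'$.

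The main obstacle lies in the completeness direction. When $\mcD'$ has $\mcX$-marginal $\mcD$ and labels come from some $f\in\mcC$, we need $\mcB$, and hence $\mcA$, to produce a hypothesis with high probability. However, if $g$ is drawn uniformly and independently of $f$, then filtering by $y = g(x)$ yields an input distribution that is $\mcD$ conditioned on the measure-$1/2$ event $\{f = g\}$; this is typically far from $\mcD$, and the TL-Q completeness guarantee of $\mcA$ does not cover such distributions. So $\mcA$ is free to reject, breaking completeness of $\mcB$. This asymmetry is precisely what distinguishes sample-based testable learning from refutation: the latter only requires one-sided (soundness-like) guarantees on arbitrary inputs, whereas testable learning demands a matching completeness guarantee on a restricted family of distributions.

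Any working proof will therefore need a query-answering mechanism that approximately preserves the input marginal. One direction worth trying is a data-dependent choice of $g$: use a small subsample to fit a rough proxy $\hat{f}$ for the true labeling, then set $g = \hat{f}$, so that filtering removes only a small fraction of samples and the filtered marginal stays close to $\mcD$. This introduces a circularity — we are trying to build a testable learner by first running a weaker learner inside it — and any residual error in $\hat{f}$ may push the simulated distribution outside the completeness regime of $\mcA$. Making such a construction go through seems to require a hybrid or coupling argument showing that $\mcA$'s output distribution under the simulated query oracle is statistically close to its output distribution on the true $(\mcD,f)$ instance, even though the simulated input marginal is only approximately $\mcD$. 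Designing such a coupling — or alternatively exhibiting a separation, perhaps via a concept class and distribution for which TL-Q is efficient but sample-based TL requires super-polynomial time — is the technical heart of the conjecture.
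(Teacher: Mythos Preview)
The statement you are attempting is a \emph{conjecture} in the paper, not a theorem; the paper does not prove it and explicitly leaves it open. So there is no paper proof to compare against, and your write-up is (appropriately) a discussion of obstacles rather than an actual argument.

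Your diagnosis of the obstacle is correct and matches the paper's own reasoning: the filtering trick that powers the refutation-to-TL-Q reduction distorts the $\mcX$-marginal in the $\structure$ case, so completeness of the simulated TL-Q instance is lost, and this is exactly where the reduction to sample-based \emph{testable} learning (as opposed to refutation or ordinary agnostic learning) breaks. The paper goes one step further than you do in ruling out the refutation route: it points out that monotone functions over the uniform distribution are refutable (and agnostically learnable) in $2^{\tilde O(\sqrt{n})}$ time but require $2^{\Omega(n)}$ samples to testably learn, so any argument that factors through refutation is provably too weak to establish the conjecture.

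Your proposed fix --- choosing the query oracle $g$ to be a learned proxy $\hat f$ so that filtering barely changes the marginal --- is circular in exactly the way you flag: producing a good $\hat f$ from samples alone is the problem you are trying to solve. More concretely, even if $\hat f$ has small but nonzero error, the filtered marginal is $\mcD$ conditioned on $\{f = \hat f\}$, and TL-Q completeness is only promised at $\mcD$ itself, not at nearby distributions; there is no robustness assumption to appeal to. So this line, as stated, does not close the gap, and the conjecture remains open.
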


We also remark that in the semi-agnostic setting, our method relates semi-agnostic TL-Q to \emph{weak} agnostic learning.
It is an open question for future work to resolve the connection between semi-agnostic TL-Q and semi-agnostic learning as well.

\section{Preliminaries}
\subsection{Distances and Errors}
\begin{definition}[Distance of functions and distance to a concept class]
Relative to a distribution $\mcD$ over $\mcX \times \zo$ with $\mcX$-marginal $\mcD_x$, we denote
\[\dist_{\mcD_x}(f,g) = \Prx_{x \sim \mcD_x}[f(x) \ne g(x)]\]
\[\err_{\mcD}(f) = \Prx_{(x,y) \sim \mcD}[f(x) \ne y].\]
We also use the following notation to denote the classification error of the most
accurate concept in a class, which we often refer to as $\opt$:
\[\dist_{\mcD_x}(f, \mcC) = \inf_{g \in \mcC}\dist_{\calD_x}(f, g).\]
\[\err_{\mcD}(\mcC) = \inf_{g \in \mcC}\err_{\calD}(g).\]

\end{definition}

\subsection{Refutation and Learning}
We state a definition of refutation similar to the definition presented in \cite{KL18}.
We have modified it to use classification error rather than correlation,
for ease of use in our $\zo$-labeled setting (\cite{KL18} uses $\bits$ labels).

\begin{definition}[$\eta$-refutation]
\label{def:unbiased-refutation}
Let $\mcC \subseteq \{f:\mcX \to \zo\}$ be a concept class over a finite input domain $\mcX$, and let $\mcF$ be a
family of distributions
on $\mcX$. An $\eta$-refutation algorithm $\mcA$ for $\mcC$ on $\mcF$ with $m$ 
samples is an algorithm that takes an $m$-tuple of labeled examples 
$\{(x_1,y_1),\ldots,(x_m,y_m)\}$ and outputs either $\mathsf{noise}$ or $\mathsf{structure}$.
If the examples are i.i.d. from a distribution $\mcD$ over $\mcX \times \zo$ such that the marginal on $\mcX$ is some $\mcD_x \in \mcF$, then the following guarantees hold:
\begin{itemize}
\item \emph{Completeness:} If there exists $g \in \mcC$ such that $\err_{\calD}(g) \le \eta$,
then
\[\Prx_{\substack{\{(x_i,y_i)\} \sim \mcD \\ \text{internal randomness of }\mcA}}[\mcA\text{ outputs }\mathsf{structure}] \ge 2/3.\]
\item \emph{Soundness:} If the $y_i$'s are drawn i.i.d.\ from $\Bern(1/2)$, then
\[\Prx_{\substack{\{(x_i,y_i)\} \sim \mcD \\ \text{internal randomness of }\mcA}}[\mcA\text{ outputs }\mathsf{noise}] \ge 2/3.\]
\end{itemize}
\end{definition}

We also define a similar but stronger task:

\begin{definition}[Biased $(\alpha,\eta)$-refutation]
\label{def:refutation}
A biased-$(\alpha, \eta)$-refutation algorithm is as above except the soundness condition is the following:
\begin{itemize}
\item \emph{Soundness:} For all $p \in [\alpha, 1-\alpha]$, if the $y_i$'s are drawn i.i.d. from $\Bern(p)$, then
\[\Prx_{\substack{\{(x_i,y_i)\} \sim \mcD \\ \text{internal randomness of }\mcA}}[\mcA\text{ outputs }\mathsf{noise}] \ge 2/3.\]
\end{itemize}
\end{definition}

Here we state definitions and facts used in \cite{KL18}'s reduction from refutation to agnostic learning.
Again we modify these statements to use classification error rather than correlation.

\begin{definition}[Weak agnostic learning]
A $(\gamma, \alpha)$-weak agnostic learner for the concept class $\mcC$ over the distribution $\mcD$ 
outputs a hypothesis $h$ satisfying the following:
\[\err_\mcD(h) \le \frac{1+\alpha-\gamma}{2} + \gamma \err_\mcD(\mcC).\]
\end{definition}

\begin{lemma}[Learning by refutation: Lemma 6 of \cite{KL18}]
\label{lem:KL18-reduction}
Suppose there is an $\eta$-refutation algorithm for the class $\mcC$ over distribution $\mcD$ running in $T(n)$
time with $m$ samples. 
Then there is an algorithm that runs in $T(n) \cdot \frac{m^2}{\eps^2}$ and uses $O(\frac{m^3}{\eps^2})$ samples to
agnostically learn $\mcC$ on $\mcD$ with excess error $1 - 2\eta + \eps$.
\end{lemma}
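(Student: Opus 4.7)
The strategy is to use the $\eta$-refutation algorithm $\mcA_\text{ref}$ as a weak agnostic learner and then amplify to a full agnostic learner via held-out-set validation. The key observation is that the two output cases of $\mcA_\text{ref}$ correspond to a correlation gap of $1-2\eta$: labels consistent with a concept at error $\le \eta$ correlate with that concept at advantage $\ge 1-2\eta$, whereas $\Bern(1/2)$ labels have expected correlation $0$ with any fixed function. Consequently, running $\mcA_\text{ref}$ on ``residual'' samples of the form $(x_i, y_i \oplus h(x_i))$ acts as a test for whether some $g \in \mcC$ has advantage $1-2\eta$ over the current hypothesis $h$.

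The overall procedure has three stages. First, I would produce $K = O(m^2/\eps^2)$ candidate hypotheses by invoking $\mcA_\text{ref}$: for each trial, draw $m$ fresh examples and use $\mcA_\text{ref}$ inside a decision-to-search reduction to extract a candidate hypothesis whenever $\mcA_\text{ref}$ accepts. Second, for each of the $K$ candidates, estimate its classification error on an independent held-out set of size $O(m^3/\eps^2)$, obtaining $O(\eps)$-accurate estimates with high probability via Hoeffding's inequality and a union bound over the $K$ candidates. Third, output the candidate with smallest estimated error. The runtime $T(n)\cdot m^2/\eps^2$ and sample complexity $O(m^3/\eps^2)$ arise naturally: the $m^2/\eps^2$ refutation invocations each cost $T(n)$ time and $m$ samples, and the held-out set accounts for the remaining sample budget.

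\textbf{Main obstacle.} I expect the crux to be the first stage --- converting the black-box \textsf{structure}/\textsf{noise} decision output of $\mcA_\text{ref}$ into an actual hypothesis function achieving excess error $\le 1-2\eta+\eps$. A natural route is a self-reducibility argument: fix the labels of most of the samples, probe a candidate label at a target point, and observe how this probe shifts the acceptance probability of $\mcA_\text{ref}$ across many independent trials. For this to yield a quantitatively tight advantage of $1-2\eta$ (as opposed to some lossy $\Omega(1-2\eta)$ bound), the $1/3$-versus-$2/3$ gap of $\mcA_\text{ref}$ must be boosted by repetition, and the concentration analysis must carefully handle the correlations between probes that share samples. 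A secondary obstacle is ensuring that the extracted hypothesis is efficiently evaluable on new points, which I would handle by defining the hypothesis implicitly through invocations of $\mcA_\text{ref}$ on samples augmented with the query point --- a device that fits within the stated runtime budget.
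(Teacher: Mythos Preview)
The paper does not prove this lemma; it is stated in the preliminaries as a quotation of Lemma~6 from \cite{KL18} and invoked only as a black box in the proof of \Cref{cor:tlq-agnostic}. There is therefore no in-paper proof to compare your proposal against.

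On the proposal itself: the device you arrive at in your final paragraph --- define $h(x)$ implicitly by how the acceptance probability of $\mcA_{\mathrm{ref}}$ shifts when a fresh $m$-sample is augmented by $(x,0)$ versus $(x,1)$, and extract the $1-2\eta$ advantage via a hybrid interpolating between structured labels and pure noise --- is indeed the engine of the \cite{KL18} reduction. The scaffolding you place around it, however, is off in two places. First, the ``residual samples $(x_i, y_i \oplus h(x_i))$'' step does not say what you claim: feeding these to $\mcA_{\mathrm{ref}}$ asks whether some $g \in \mcC$ satisfies $\Pr[g(x) \ne y \oplus h(x)] \le \eta$, i.e., whether $g \oplus h$ has error $\le \eta$, but $g \oplus h$ need not lie in $\mcC$ and in any case this yields no mechanism for improving $h$. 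Second, the ``generate $K$ candidates then validate'' architecture is superfluous once the implicit hypothesis is in hand; \cite{KL18} analyzes the error of that single implicitly-defined hypothesis directly, and the $m^2/\eps^2$ factor in the running time arises from the repetitions needed to estimate the relevant acceptance probabilities to precision $O(\eps/m)$, not from enumerating candidate hypotheses.
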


\subsection{Testable Learning with Queries}

We now define semi-agnostic \emph{testable learning with queries}.

\begin{definition}[Testable learning with queries, or TL-Q]
A concept class $\mcC$ over the input domain $\mcX$ is $(c, \eps,\delta)$-PAC-testably-learnable with $q$ queries, $m$ samples, and $t$ time, 
on a set $\mcF$ of distributions over $\mcX$, if 
there is a $t$-time algorithm $\mcA$ that takes $m$ samples from an unknown distribution $\mcD \in \mcF$ and $q$ membership queries to an unknown function $f^\star$, and outputs $h \in \mcC \cup \{\bot\}$ such that:
\begin{itemize}
\item Soundness: If $h \ne \bot$, then 
\[\Pr[\dist_{\mcD}(h,f^\star) \ge c \cdot \opt + \eps] \le \delta.\]
\item Completeness: If $\mcD \in \mcF$, then 
\[\Pr[h= \bot] \le \delta.\]
\end{itemize}
\end{definition}

\section{Refutation, Learning, and Testable Learning with Queries}
\subsection{A General Reduction from Refutation to TL-Q}
\label{sec:reduction}
In this section, we show that if a class is efficiently testably-learnable, then it is efficiently refutable as well, 
with polynomial dependence on the sample, time, and query complexity of the testable learner.

In fact, our \Cref{alg:refutation} solves the harder problem of biased refutation (\Cref{def:refutation}), though the distinction between the two will not be relevant until \Cref{sec:juntas}.
A biased refutation algorithm can always be used to solve the standard unbiased refutation problem, as the soundness guarantee for biased refutation must always hold when $p=1/2$.
To avoid confusion, we let $\Dref$ denote the distribution of labeled pairs $(x,y) \in \mcX \times \zo$ in the refutation instance,
while $\mcD$ denotes the unknown marginal distribution in a TL-Q instance.

\begin{algorithm}[h]
\begin{algorithmic}[1]
\caption{$\textsc{BiasedRefutation}(\mathsf{samples}, \eta, \eps, m,q,c)$:
\label{alg:refutation}}
\State \textbf{Input:} sample set $\mathsf{samples}$ of size $m'$ drawn from the refutation distribution $\Dref$, gap parameters $\eta$ and $\eps$, TL-Q parameters $m, q, c$
\State \textbf{Output:} $\noise$, $\structure$, or an error
\State
\State Use the first $C_1 \cdot 1/\eps^2$ samples to estimate 
$p \coloneqq \Ex{(x,y) \sim \Dref}{y}$.
If the estimated value $\hat{p} < 2\eps$ or $\hat{p} > 1-2\eps$, $\Return$ $\structure$.
\State Reserve the next $C_2 \cdot \Paren{\frac{m+1/\eps^2}{\eps} + q}$ samples to be used only for their labels (whenever a draw from $\Bern(p)$ is required, use a new label from this set). 
\State Initialize an empty set $S$ and an empty truth table $f$.
\For{each remaining example $(x_i,y_i)$}
\State Draw $b \sim \Bern(p)$ and store $f(x_i) = b$.
\State If $y_i = f(x_i) = 0$, add $x_i$ to $S$ with probability $p$.
\State If $y_i = f(x_i) = 1$, add $x_i$ to $S$ with probability $1-p$.
\State If $y_i \ne f(x_i)$, continue without adding $x_i$ to $S$.
\EndFor
\State If $|S| < m + C_3/\eps^2$, $\Return$ an error. Let the first $m$ examples be $S_{\mathsf{train}}$ and the remaining be $S_{\mathsf{test}}$.
\State Call the TL-Q algorithm $\mcA$ with the first $m$ members of $S$. Whenever $\mcA$ makes a query to some point $x$, if $f(x)$ is in the table, answer with $f(x)$; otherwise draw $b \sim \Bern(p)$, store $f(x)=b$, then answer $f(x)$.
\State If $\mcA$ returned $\bot$, $\Return$ $\structure$.
\State If $S_{\mathsf{test}}$ contains any duplicate elements, elements in $S_{\mathsf{train}}$, or elements queried by $\mcA$, $\Return$ an error.
\State If $\mcA$ returned a hypothesis $h$, evaluate
\[\wh{\mathsf{err}}(h) = \Prx_{x \sim S_{\mathsf{test}}}[h(x) \ne f(x)].\]
\State If $\wh{\mathsf{err}}(h) > c\eta + 3\eps$, $\Return$ $\noise$. Otherwise $\Return$ $\structure$.
\end{algorithmic}
\end{algorithm}

The main result of this section is the following:

\begin{theorem}
\label{thm:tlq-refutation}
Let $\mcC$ be $(c, \eps, \tfrac{1}{10})$ PAC-testably-learnable with $m$ samples, $q$ queries, and $t$ time, on a distribution family $\mcF$ satisfying 
\[m + q/\eps^2 \ll \frac{1}{\sup_{\mcD_x \in \mcF}(\norm{\mcD_x}_2)}.\]
Then for any $\eps$ satisfying $\eps^2 \ge ck \cdot \sup_{\mcD_x \in \mcF} (\norm{\mcD_x}_2)$ for sufficiently large constant $k$, and any $\eta < \frac{1/2 - 4\eps}{c}$, $\mcC$ is $(c\eta + 4\eps, \eta)$-refutable over all members of $\mcF$ with $m'$ samples and $t'$ time, where
\[m' = O\Paren{\frac{m + 1/\eps^2}{\eps} + q}\]
\[t' = O(m' + t).\]
\end{theorem}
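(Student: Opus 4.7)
The plan is to prove \Cref{thm:tlq-refutation} by case-splitting on whether the refutation instance is in the \emph{completeness} regime (some $g \in \mcC$ has $\err_\Dref(g) \le \eta$) or the \emph{biased noise} regime ($y_i \sim \Bern(p_0)$ independent of $x_i$ for some $p_0 \in [c\eta + 4\eps,\,1 - c\eta - 4\eps]$). The structural observation used in both cases is that the rejection sampling in lines 8--10 is \emph{label-unbiased}: conditioned on $(x_i, y_i)$, the acceptance probability equals $\hat p(1-\hat p)$ independent of $y_i$. Hence the samples retained in $S$ are i.i.d.\ draws from $\Dref$; the $x$-marginal of $S$ coincides with $(\Dref)_x \in \mcF$; and $f(x_i) = y_i$ on $S$. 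In particular, from the TL-Q's viewpoint it is being fed samples whose marginal lies in $\mcF$, so its completeness guarantee forces $h \ne \bot$ with probability at least $9/10$. Standard Hoeffding/Chernoff applications, invoked once up front, then control $\hat p = p \pm \eps$, the event $|S| \ge m + C_3/\eps^2$, the concentration of $\wh{\mathsf{err}}(h)$ around $\err_\Dref(h)$ within $\eps$, and the non-collision of $S_{\mathsf{test}}$ with $S_{\mathsf{train}} \cup Q$.

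In the noise case, the truth table $f$ is effectively i.i.d.\ $\Bern(p_0)$ at both the training and the queried points (the rejection step renormalizes $f|_S$ to the true $p_0$, and the reserved-label mechanism of line 5 makes the query responses $\Bern(p_0)$ rather than $\Bern(\hat p)$). Since $h$ depends only on $S_{\mathsf{train}}$ and on the query responses, it is independent of $\{y_x\}_{x \in S_{\mathsf{test}}}$, so
\[
\mathbb{E}[\wh{\mathsf{err}}(h)] \;=\; p_0(1-\mu_h) + (1-p_0)\mu_h \;\ge\; \min(p_0,\,1-p_0) \;\ge\; c\eta + 4\eps,
\]
where $\mu_h = \Prx_{x \sim (\Dref)_x}[h(x) = 1]$. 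Combining with Hoeffding on the $\ge C_3/\eps^2$ test samples yields $\wh{\mathsf{err}}(h) > c\eta + 3\eps$ with high probability, so line 17 correctly returns $\noise$.

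In the completeness case, the algorithm already returns $\structure$ whenever line 4 fires or TL-Q outputs $\bot$, so the only remaining sub-case is when TL-Q returns some $h$, and I must show $\wh{\mathsf{err}}(h) \le c\eta + 3\eps$. The key is an \emph{extension trick}: TL-Q's soundness must hold for any fixed target $f^\star$, and its transcript depends only on the values of $f^\star$ at the sampled and queried points. I define $\tilde f(x) = f(x)$ for $x \in S \cup Q$ and $\tilde f(x) = g(x)$ otherwise; then the TL-Q's view is exactly what it would be if the target were $\tilde f$ and samples were drawn from $((\Dref)_x, \tilde f)$, so soundness gives
\[
\dist_{(\Dref)_x}(h, \tilde f) \;\le\; c \cdot \dist_{(\Dref)_x}(\tilde f, \mcC) + \eps \;\le\; c \cdot \dist_{(\Dref)_x}(\tilde f, g) + \eps.
\]
Since $\tilde f = g$ off $S \cup Q$, $\dist_{(\Dref)_x}(\tilde f, g) \le \Prx_{x \sim (\Dref)_x}[x \in S \cup Q]$, which the theorem's spread-out hypotheses bound by $O(\eps/c)$ via $\mathbb{E}_S\!\left[\Prx_{x \sim (\Dref)_x}[x \in S]\right] = m\|\mcD_x\|_2^2$ and $\Prx_{x \sim (\Dref)_x}[x \in Q] \le q\|\mcD_x\|_\infty \le q\|\mcD_x\|_2$. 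A triangle inequality then gives $\dist_{(\Dref)_x}(h, g) = O(\eps)$, and combining with $\err_\Dref(h) \le \err_\Dref(g) + \dist_{(\Dref)_x}(h, g) \le \eta + O(\eps)$ and the concentration of $\wh{\mathsf{err}}$ yields $\wh{\mathsf{err}}(h) \le \eta + 3\eps \le c\eta + 3\eps$.

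The main obstacle is the completeness analysis: I must verify carefully that (a) the TL-Q's transcript is genuinely unchanged when $f$ is replaced by $\tilde f$, which follows from the lazy evaluation of $f$ in line 13 (so that TL-Q's adaptive queries interact only with values of $f$ we have actually realized, allowing us to invoke soundness against any extension); (b) the $(\Dref)_x$-mass of the observed set $S \cup Q$ is $O(\eps/(c+1))$ simultaneously over the randomness of $S$ and the potentially adversarial choice of $Q$, which requires combining the $\ell_2$-based expectation bound with the theorem's two spread-out inequalities; and (c) the various concentration and approximation errors fit inside the $\eps$-gap between line 17's $c\eta + 3\eps$ threshold and the theorem's $c\eta + 4\eps$ biased-refutation parameter. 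The error-return events of lines 12 and 14 are controlled by Chernoff on $|S|$ and by the same spread-out mass bound, respectively.
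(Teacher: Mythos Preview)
Your noise-case analysis is essentially correct and matches the paper. The structure-case analysis, however, has a genuine gap in the ``extension trick.'' You want to invoke TL-Q soundness for the instance $((\Dref)_x,\tilde f)$, but TL-Q's soundness guarantee is stated for a \emph{fixed} target: for each fixed $(\mcD,f^\star)$, the probability over fresh samples and TL-Q's internal randomness that soundness fails is at most $\delta$. Your $\tilde f$ is defined post hoc from $S$ and from the adaptive query set $Q$, so it depends on exactly the randomness you are taking the probability over. In general, if for each $f^\star$ the bad set $B(f^\star)$ has measure $\le\delta$, it does \emph{not} follow that $\Pr[R\in B(\tilde f(R))]\le\delta$ when $\tilde f$ is a function of $R$; a union bound over all extensions is exponential. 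Relatedly, your ``label-unbiased'' observation---that each sample is accepted with probability $p(1-p)$ regardless of $y_i$, so $S$ is i.i.d.\ from $\Dref$---is correct only \emph{after marginalizing over $f$}. For each fixed $f$ (which is the regime in which TL-Q's guarantees are stated), the $x$-marginal of $S$ is the filtered distribution $\mcD(f)$, not $(\Dref)_x$. This is also why your claim that completeness forces $h\ne\bot$ in the structure case is unjustified: for fixed $f$ one may have $\mcD(f)\notin\mcF$, and indeed the algorithm handles this by returning $\structure$ when $\bot$ is output.

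The paper's approach avoids all of this by conditioning on $f$ first. For each fixed $f$, the accepted samples are genuinely i.i.d.\ from $\mcD(f)$ and the target is genuinely $f$, so TL-Q soundness applies directly to the pair $(\mcD(f),f)$. The work then shifts to controlling $\opt$ in that instance: the key lemma (\Cref{clm:error-blowup-better}, proved via Hoeffding on the random $f$ together with the concentration of the normalizer $Z$ around $p(1-p)$ from \Cref{lemma:Z-concentration}) shows that with high probability over $f$, $\dist_{\mcD(f)}(f,g)\le \err_{\Dref}(g)+\eps/c\le\eta+\eps/c$. Soundness then gives $\dist_{\mcD(f)}(h,f)\le c\eta+2\eps$, and since $S_{\mathsf{test}}$ is drawn from $\mcD(f)$, Hoeffding yields $\wh{\mathsf{err}}(h)\le c\eta+3\eps$. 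The missing ingredient in your plan is precisely this filtered-distribution analysis; without it, there is no legitimate fixed TL-Q instance in the structure case on which to invoke soundness.
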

\Cref{alg:refutation} essentially draws a random function $f$ of bias $p$, where $p$ is the mean of the labels in the refutation distribution $\Dref$, and filters the samples to just pairs where $y = f(x)$.
The drawing of $f$ is ``lazy:'' to draw $f$ and answer queries to it, it suffices to draw each value of $f(x)$ from $\Bern(p)$ the first time we need to know $f(x)$, then store $(x,f(x))$ in a table for consistency.
We implement the random coin by reading a label,
as the labels are distributed as $\Bern(p)$.
\subsubsection{Properties of the Filtered Sample Distribution}
\label{sec:filtered-sample}
Before proving the theorem, it will be useful to prove the following supporting claims about the distribution that the sets $S_{\mathsf{train}}$ and $S_{\mathsf{test}}$ are drawn from.
This distribution $\mcD$ --- which is the unknown distribution that the TL-Q instance is running on --- depends on the random function $f$.
Formally, the PMF of $\mcD$ is the following, and one may observe that the construction of $S$ in lines 7-11 of \Cref{alg:refutation} produces samples from this distribution:
\begin{definition}[Filtered sample distribution]
Let $\Dref$ be a distribution over $\mcX \times \zo$ with $\mcX$-marginal $\mcD_x$ and let $f: \mcX \to \zo$ be a $p$-biased random function. 
Let the function $y(x)$ be defined as $\Ex{(x',y') \sim \Dref}{y'\mid x'=x}$.
Then the filtered sample distribution $\mcD$ is defined by the following PMF:
\[\mcD(x) = \frac {\mcD_x(x)}{Z} \cdot (p(1-y(x))(1-f(x)) + (1-p)y(x)f(x)),\]
where $Z$ is the normalization factor
\[Z = \Ex{x \sim \mcD_x}{(1-p)y(x)f(x) + p(1-y(x))(1-f(x))}.\]
\end{definition}

We first show that $Z$ concentrates around $p(1-p)$.

\begin{lemma}\label{lemma:Z-concentration}
    For every $\delta \ge 0$, it holds with probability at least $1 - 2\exp\left(-\Omega\left(\frac{\delta^2 \cdot p^2(1-p)^2}{\|\calD_x\|_2^2}\right)\right)$ over the randomness of $f$ that
    \[
        \left|Z - p(1-p)\right| \le \delta \cdot p(1-p).
    \]
\end{lemma}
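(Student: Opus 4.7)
The plan is to view $Z$ as a sum of independent random variables indexed by the points of the (finite) domain $\mcX$ and apply Hoeffding's inequality. Writing
\[
Z = \sum_{x \in \mcX} \mcD_x(x) \cdot \bigl[(1-p)\,y(x)\,f(x) + p\,(1-y(x))(1-f(x))\bigr] \;=\; \sum_{x \in \mcX} X_x,
\]
each summand $X_x$ depends only on the single coin $f(x) \sim \Bern(p)$, and these coins are mutually independent by the definition of a $p$-biased random function. So the $X_x$ are independent.

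Next I would compute the mean term by term. For each $x$, $\Ex{f}{X_x} = \mcD_x(x) \cdot \bigl[(1-p)\,y(x)\,p + p\,(1-y(x))(1-p)\bigr] = p(1-p)\,\mcD_x(x)$, and summing over $x$ gives $\Ex{f}{Z} = p(1-p)$. To bound the range of each $X_x$, note that as $f(x)$ varies over $\{0,1\}$ the value lies in an interval of length exactly $\mcD_x(x)\cdot\bigl|\,(1-p)\,y(x) - p\,(1-y(x))\,\bigr| = \mcD_x(x)\cdot|y(x) - p| \le \mcD_x(x)$. Therefore $\sum_{x} (b_x - a_x)^2 \le \sum_x \mcD_x(x)^2 = \|\mcD_x\|_2^2$.

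Applying Hoeffding's inequality with deviation $t = \delta\cdot p(1-p)$ then yields
\[
\Prx_f\!\left[\,|Z - p(1-p)| \ge \delta \, p(1-p)\,\right] \;\le\; 2 \exp\!\left(-\frac{2\,\delta^2 \, p^2 (1-p)^2}{\|\mcD_x\|_2^2}\right),
\]
which is exactly the claimed bound.

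I do not anticipate a serious obstacle: the only thing to get right is the observation that the contributions of distinct points $x$ are independent (so that Hoeffding applies in the form above rather than requiring a martingale or union-bound argument), and the elementary calculation that the per-coordinate range squared sums to $\|\mcD_x\|_2^2$. The factor $|y(x) - p| \le 1$ is wasteful but does not affect the stated bound; one could retain it to sharpen the inequality if needed later, but the lemma as stated does not require it.
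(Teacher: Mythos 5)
Your proof is correct and follows essentially the same route as the paper: decompose $Z$ as a sum of independent summands indexed by $x \in \mcX$, compute $\Ex{}{Z} = p(1-p)$, bound each summand's range by $\mcD_x(x)$, and apply Hoeffding with deviation $\delta \cdot p(1-p)$. Your observation that the per-$x$ range is actually $\mcD_x(x)\cdot|y(x)-p|$ is a correct (if unused) refinement; the paper simply notes the summand lies in $[0, \mcD_x(x)]$.
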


\begin{proof}
    Note that for any $x \in \calX$, regardless of the values of $p, y(x) \in [0, 1]$, and $f(x) \in \zo$,
    \[
        (1 - p) \cdot y(x) \cdot f(x) + p \cdot (1 - y(x)) \cdot (1 - f(x))
    \]
    is always between $0$ and $1$. Thus, over the randomness of $f$,
    \[
        Z = \sum_{x \in \calX}\calD_x(x)\left[(1 - p) \cdot y(x) \cdot f(x) + p \cdot (1 - y(x)) \cdot (1 - f(x))\right]
    \]
    is a sum of $|\calX|$ independent random variables, and the summand that corresponds to each $x \in \calX$ lies in $[0, \calD_x(x)]$. Furthermore, since $\Ex{}{f(x)} = p$ holds for every $x \in \calX$, we have
    \[
        \Ex{}{Z}
    =   \sum_{x \in \calX}\calD_x(x)\left[(1 - p) \cdot y(x) \cdot p + p \cdot (1 - y(x)) \cdot (1 - p)\right]
    =   p(1-p).
    \]
    Therefore, the lemma follows from Hoeffding's inequality.
\end{proof}

We now show that the distance of a function to $f$, with respect to $\mcD$, concentrates around the error of that function on $\Dref$.
\begin{claim}
\label{clm:error-blowup-better}
Let $g: \mcX \to \zo$ be an arbitrary function. 
For any $\delta$, with probability at least $1 - \exp \Paren{-\Omega \Paren{\frac{(\delta p)^2(1-p)^2}{\norm{\mcD_x}_2^2}}}$ over the randomness of $f$, we have
\[\Prx_{x\sim \mcD}[g(x) \ne f(x)] \le \Prx_{(x,y)\sim \Dref}[g(x) \ne y] + \delta.\]
\end{claim}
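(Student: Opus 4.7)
The plan is to mimic the proof of \Cref{lemma:Z-concentration}. I would write $\Prx_{x\sim\calD}[g(x) \neq f(x)]$ as $W/Z$, where $Z$ is already controlled by \Cref{lemma:Z-concentration} and $W$ is a second sum of independent bounded random variables (one per $x \in \calX$) that I can handle by Hoeffding as well. The main work is then to identify $\Ex{}{W}$ and to pass from two-sided concentration of $W$ and $Z$ to a one-sided bound on the ratio.

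Concretely, using the PMF of $\calD$, the event $g(x) \neq f(x)$ selects exactly one of the two terms in $p(1-y(x))(1-f(x)) + (1-p)y(x)f(x)$ depending on whether $f(x) = 0$ or $f(x) = 1$, yielding
\[
Z \cdot \Prx_{x\sim\calD}[g(x)\neq f(x)] = \sum_{x:\, g(x)=0}\calD_x(x)(1-p)y(x)\mathbbm{1}[f(x)=1] + \sum_{x:\, g(x)=1}\calD_x(x)\,p\,(1-y(x))\mathbbm{1}[f(x)=0].
\]
Call the right-hand side $W$. Each summand lies in $[0, \calD_x(x)]$ and depends on a distinct independent coin $f(x)$, so $\sum (b_i - a_i)^2 \le \|\calD_x\|_2^2$. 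Taking expectations (using $\Ex{}{\mathbbm{1}[f(x)=1]} = p$ and $\Ex{}{\mathbbm{1}[f(x)=0]} = 1-p$) gives $\Ex{}{W} = p(1-p) \cdot \eta_0$, where $\eta_0 := \Prx_{(x,y)\sim\Dref}[g(x) \neq y]$, because the resulting bracket is exactly the expression for $\err_{\Dref}(g)$ broken out by the value of $g(x)$.

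To finish, apply Hoeffding to $W$ with deviation $\tfrac{\delta}{2} p(1-p)$ and invoke \Cref{lemma:Z-concentration} with $\delta_Z = \delta/2$; both failure probabilities are of the desired form $\exp\!\left(-\Omega\!\left(\tfrac{(\delta p(1-p))^2}{\|\calD_x\|_2^2}\right)\right)$. On the intersection of the two good events, $W/Z \le \tfrac{\eta_0 + \delta/2}{1 - \delta/2}$, which an elementary calculation shows is at most $\eta_0 + \delta$ whenever $\eta_0 + \delta \le 1$; the complementary case is trivial because $W/Z \le 1$ deterministically. I do not foresee a real obstacle: the only mildly delicate point is passing from concentration of $W$ and $Z$ to concentration of the ratio, which is handled by splitting $\delta$ evenly across numerator and denominator and using the trivial bound $W/Z \le 1$ for the corner case.
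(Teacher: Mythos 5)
Your proposal is correct and follows essentially the same route as the paper's proof: the paper also writes $\Prx_{x \sim \calD}[g(x) \ne f(x)] = W/Z$ with the exact same $W$ (your indicator form $\Ind[f(x)=1]$, $\Ind[f(x)=0]$ is just a rewriting of the paper's $f(x)$, $1-f(x)$), computes the same mean $\Ex{}{W} = p(1-p)\Prx_{(x,y) \sim \Dref}[g(x) \ne y]$, applies Hoeffding to $W$ and invokes \Cref{lemma:Z-concentration} for $Z$, then bounds the ratio. The only cosmetic difference is that the paper splits the deviation as $\delta/4$ on each side and verifies the ratio inequality under the always-true condition $\eta_0 + \delta \le 3$, whereas you split as $\delta/2$ and handle the boundary case $\eta_0 + \delta > 1$ via the trivial bound $W/Z \le 1$.
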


\begin{proof}
First, we note that 
\[\Prx_{(x,y) \sim \Dref}[g(x) \ne y] = \Ex{x \sim \mcD_x}{g(x)(1-y(x)) + y(x)(1-g(x))}.\]
For each fixed $f$, we have:
\begin{align*}
\Prx_{x \sim \mcD}[g(x) \ne f(x)]&= \frac 1Z \Ex{x \sim \mcD_x}{(p(1-y(x))(1-f(x)) + (1-p)y(x)f(x)) \cdot \Ind[g(x) \ne f(x)]}\\
&=\frac 1Z \Ex{x \sim \mcD_x}{p(1-y(x))(1-f(x))g(x) + (1-p)y(x)f(x)(1-g(x))}.
\end{align*}
By \Cref{lemma:Z-concentration} we have that with probability at least $1 - \exp \Paren{-\Omega \Paren{\frac{(\delta p)^2(1-p)^2}{\norm{\mcD_x}_2^2}}}$ over the randomness of $f$, $Z \ge (1- \tfrac{\delta}{4})p(1-p)$.
Now we examine the random variable
\[W \coloneqq \Ex{x \sim \mcD_x}{p(1-y(x))(1-f(x))g(x) + (1-p)y(x)f(x)(1-g(x))}.\]
In expectation over $f$, we have:
\begin{align*}
\Ex{f}{W} &=\Ex{x \sim \mcD_x}{p(1-y(x))(1-p)g(x) + (1-p)y(x)p(1-g(x))} \\
&= p(1-p) \Ex{x \sim \mcD_x}{(1-y(x))g(x) + y(x)(1-g(x))}\\
&= p(1-p) \Prx_{(x,y) \sim \Dref}[g(x) \ne y].
\end{align*}
Since $p(1-y(x))(1-f(x))g(x) + (1-p)y(x)f(x)(1-g(x))$ is always bounded between 0 and 1,
the random variable $W$
is a sum of $|\mcX|$ elements, each of which is bounded by $\mcD_x(x)$.
Therefore, by Hoeffding's inequality:
\begin{align*}
\Prx_f[W > p(1-p) (\Prx_{(x,y) \sim \Dref}[g(x) \ne y] + \lfrac{\delta}{4})] \le \exp\Paren{-\Omega \Paren{\frac{(\delta p)^2(1-p)^2}{\norm{\mcD_x}_2^2}}}.
\end{align*}

Whenever $W \le p(1-p) (\Prx_{(x,y) \sim \Dref}[g(x) \ne y] + \lfrac{\delta}{4})$ and $Z \ge p(1-p)(1 - \lfrac{\delta}{4})$, we have 
\[\frac WZ \le \frac{p(1-p)(\Prx[g(x) \ne y] + \lfrac{\delta}{4})}{p(1-p)(1 - \lfrac{\delta}{4})} \le \frac{\Prx[g(x) \ne y] + \lfrac{\delta}{4}}{1 - \lfrac{\delta}{4}} \le \Prx[g(x) \ne y] + \delta.\]
The claim follows by a union bound over the tail events of $W$ and $Z$.
\end{proof}

\subsubsection{Proof of \Cref{thm:tlq-refutation}}
With the above properties in hand, we will now prove the main theorem of this section.

\begin{proof}[Proof of \Cref{thm:tlq-refutation}]
Let $\mcA$ be the $(c,\eps, 1/10)$-testable learner for $\mcC$. We will show that \Cref{alg:refutation} is a
$(c\eta + 4\eps, \eta)$ refutation algorithm over any $\Dref$ with $\mcX$-marginal $\mcD_x$ such that $\mcD_x \in \mcF$.
Since the samples come from a refutation instance, one of the following must hold:
\begin{itemize}
		\item Structure: $\Prx_{(x,y)\sim \Dref}[g(x) \ne y] \le \eta$ for some $g \in \mcC$, or
		\item Noise: $y \sim \Bern(p)$ for some $p \in [c\eta + 4\eps, 1-c\eta - 4\eps]$.
\end{itemize}

We will refer to $\Ex{(x,y) \sim \Dref}{y}$ as $p$ regardless of whether we are in the structure case or the noise case.
We consider the following possibilities:
\begin{itemize}
\item \textbf{Structure:} In this case, there is some function $g \in \mcC$ such that $\Prx_{(x,y)\sim \Dref}[g(x) \ne y] \le \eta$.
By setting the constant $C_1$ large enough, we have by Hoeffding's inequality that $\Prx[|\hat{p} - p| > \eps] \le 1/100$.
With the remaining probability, if $p < \eps$ or $p > 1-\eps$ we would output $\structure$ after line 4,
so we will assume from here that $\eps \le p \le 1-\eps$.

We will denote by $\mcD$ the distribution over $\mcX$ from which our sample set $S$ is drawn, as discussed in \Cref{sec:filtered-sample}.
Since $\eps \le p \le 1 - \eps$, by our assumption that $\eps^2 \ge ck \cdot \sup_{\mcD_x \in \mcF} (\norm{\mcD_x}_2)$ we have
\[\frac{(\eps/c)^2 \cdot p^2(1-p)^2}{\norm{\mcD_x}_2^2} \ge \Omega\Paren{\frac{\eps^4}{c^2\norm{\mcD_x}_2^2}} \ge \Omega(k^2).\]

By \Cref{clm:error-blowup-better}, setting the constant $k$ to be large enough, we then have
\[\dist_{\mcD}(f, \mcC) \le \Prx_{x \sim \mcD}[f(x) \ne g(x)] + \eps/c \le \eta +\eps/c\]
with probability at least $1 - \exp (-\Omega(k^2)) \ge 99/100$
over the randomness of $f$.
When this happens, $\mcA$ has two possible sound behaviors: either output $\bot$, or output $h$ satisfying 
\[\dist_{\mcD}(h,f) \le c(\eta + \eps/c) + \eps \le c\eta + 2\eps.\]
By the TL-Q guarantee, it produces one of these sound behaviors with probability at least $9/10$.
By setting the constant $C_3$ large enough, by Hoeffding's inequality if $\mcA$ outputs a hypothesis $h \ne \bot$, it satisfies $\wh{\mathsf{err}}(h) \le \dist_{\mcD}(h,f) + \eps$ with probability at least 
\[1 - \exp(-2 \eps^2 |T|) \ge 99/100.\]
When this happens, we have $\wh{\mathsf{err}}(h) \le c\eta + 3 \eps$,
so we successfully output $\mathsf{structure}$.
\item \textbf{Noise:} In this case, the labels are drawn from $\Bern(p)$, and the elements are drawn from $\mcD$.
Since $y(x) = p$ for all $x$, we have 
\[\mcD(x) = \frac{\mcD_x(x) \cdot p(1-p)}{Z} \quad \text{and} \quad Z = \sum_{z \in \mcX} \mcD_x(x) \cdot p(1-p).\]
Thus, $\mcD = \mcD_x$, so
by completeness of $\mcA$, it must output $\bot$ with probability at most $1/10$. 
With the remaining probability, $\mcA$ outputs a hypothesis $h$; we will argue that with high probability
its error on $S_{\mathsf{test}}$ is at least $\min(p,1-p) - \eps > c\eta + 3\eps$.
We return an error if any element in the test set appears anywhere else,
so assuming this does not happen,
each label in the test set is a new independent draw from $\Bern(p)$.
Thus we have:
\begin{align*}
\Pr_{x \sim T}[h(x) \ne f(x)] &= \frac{1}{|T|} \sum_{x \in T:h(x) =0} \Bern(p) + \frac{1}{|T|} \sum_{x \in T:h(x)=1} \Bern(1-p) \\
&\ge \frac{1}{|T|} \mathsf{Binomial}(\min(p,1-p), |T|).
\end{align*}
By a Hoeffding bound it follows that $\wh{\mathsf{err}}(h) \ge \min(p,1-p) - \eps$
with probability at least 
\begin{align*}
1 - 2\exp(-2\eps^2 |T|) &\ge 99/100.
\end{align*}
Thus, we successfully output $\mathsf{noise}$ with high probability.
\end{itemize}
In both cases the refutation algorithm succeeds with probability $\ge 4/5$ conditioned on not returning an error due to either insufficient samples, duplicate samples, or overlap between the query set and the test set.

We will set the sample complexity so that the probability of insufficient samples is small. 
Let $B$ be the number of samples reserved for drawing from $\Bern(p)$ or estimating $\hat{p}$.
Each of the remaining $m' - B$ samples is included in $S$ with probability $p(1-p)$.
Thus we will set $m'-B \ge \frac{100}{(c\eta + 4\eps)(1-c\eta - 4\eps)} \cdot(m + C_3/\eps^2)$.
Then we have by a Chernoff bound,
\[\Pr\left [|S| < m + \frac{C_3}{\eps^2} \right] \le \exp\Paren{-\frac{(99/100)^2}{2} \cdot p(1-p)(m' - B)} \le 1/100. \]
To set $B$, we need 2 draws for each of the samples and 1 draw for each membership query; by setting $C_2$ large enough this requirement is satisfied.
Thus the total sample complexity is 
\[m' \coloneqq O \Paren{\frac{m + 1/\eps^2}{\eps} + q}\]
and the total time complexity is $O(m' + t)$.

Finally, we will bound the probability of duplicate samples and overlap with the query set.
By the assumption that $m + 1/\eps^2 \ll 1/\norm{\mcD_x}_2$ and the fact that each pair of samples collides with probability $\norm{\mcD_x}_2^2$, it follows from a union bound over the pairs of samples in $S$ that w.h.p. there are no duplicates in $S_{\mathsf{train}} \cup S_{\mathsf{test}}$.
Furthermore, by the assumption that $q/\eps^2 \ll 1/\norm{\mcD_x}_2 \le 1/\norm{\mcD_x}_\infty$, it follows that every set of size $q$ has distributional mass $\ll \eps^2$.
Thus, with high probability none of the $C_3/\eps^2$ elements in the test set appear in the query set. 

Union bounding over all the failure probabilities in each case,
the total success probability remains at least 2/3.
\end{proof}
\subsection{TL-Q Implies Sample-Based Learnability}

A corollary of \Cref{thm:tlq-refutation} is the fact that TL-Q implies efficient sample-based, 
distribution-specific agnostic learning.
\begin{corollary}[Testable learning with queries implies learning with samples]
\label{cor:tlq-agnostic}
Let $\mcC$ be $(c, \eps, \tfrac{1}{10})$ PAC-testably-learnable with $m$ samples, $q$ queries, and $t$ time, on a distribution family $\mcF$ satisfying 
\[m + q/\eps^2 \ll \frac{1}{\sup_{\mcD_x \in \mcF}(\norm{\mcD_x}_2)}.\]

Then for any $\eps$ satisfying $\eps^2 \ge ck \cdot \sup_{\mcD_x \in \mcF} (\norm{\mcD_x}_2)$ for sufficiently large constant $k$, there is an agnostic learner for $\mcC$ over $\mcF$ with 
excess error $1 - 1/c + O(\eps)$.
In particular, when $c=1$, i.e. $\mcC$ is fully agnostically learnable in TL-Q, $\mcC$ is fully agnostically learnable with samples.

The sample complexity of the learner is
\[O((m')^3/\eps^2)) \quad \text{where} \quad m' = O\Paren{\frac{m + 1/\eps^2}{\eps} + q}\]
and the time complexity is
\[O\Paren{\frac{(m')^2(m' + t)}{\eps^2}}.\]
\end{corollary}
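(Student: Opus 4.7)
The plan is to compose \Cref{thm:tlq-refutation} with the refutation-to-learning reduction of \Cref{lem:KL18-reduction}. Given the hypothesized $(c, \eps, 1/10)$ PAC-testable learner satisfying the stated marginal-mass conditions, I would first invoke \Cref{thm:tlq-refutation} to obtain, for any $\eta < (1/2 - 4\eps)/c$, a biased $(c\eta + 4\eps, \eta)$-refutation algorithm using $m' = O((m + 1/\eps^2)/\eps + q)$ samples and $O(m' + t)$ time. Since biased refutation specializes to ordinary unbiased $\eta$-refutation by taking $p = 1/2$ (a legal choice whenever $c\eta + 4\eps \le 1/2$), this already gives an $\eta$-refutation algorithm with the same resource usage.

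Next, I would feed this refutation algorithm into \Cref{lem:KL18-reduction}, which converts an $\eta$-refutation algorithm with $m'$ samples and running time $T = O(m' + t)$ into an agnostic learner with excess error $1 - 2\eta + \eps$, sample complexity $O((m')^3/\eps^2)$, and time complexity $O((m')^2 (m' + t)/\eps^2)$. Substituting our bound on $m'$ produces precisely the resource bounds claimed in the corollary statement.

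The only real design choice is how to pick $\eta$. To minimize the excess error, I would take $\eta$ essentially at the upper boundary permitted by \Cref{thm:tlq-refutation}, namely $\eta = (1/2 - 4\eps)/c - O(\eps/c)$. Then $1 - 2\eta + \eps = 1 - 1/c + O(\eps)$, which is the excess-error bound in the corollary; when $c = 1$ this collapses to $O(\eps)$, yielding a standard agnostic PAC learner.

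There is no substantive obstacle: the argument is a direct composition of two prior reductions. The only bookkeeping is to check that the chosen $\eta$ simultaneously (i) respects $\eta < (1/2 - 4\eps)/c$ so that \Cref{thm:tlq-refutation} applies, (ii) keeps $c\eta + 4\eps \le 1/2$ so that the biased-to-unbiased specialization is valid, and (iii) is large enough ($\eta \gg \eps$) for the KL18 excess-error bound to be nontrivial. All three hold automatically for the above choice, so the corollary follows.
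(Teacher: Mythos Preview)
Your proposal is correct and essentially identical to the paper's own proof: invoke \Cref{thm:tlq-refutation} to get a biased $(c\eta+4\eps,\eta)$-refutation algorithm, specialize to unbiased $\eta$-refutation via $p=1/2$, then apply \Cref{lem:KL18-reduction} with $\eta$ taken at the boundary $(1/2-4\eps)/c$ to obtain excess error $1-1/c+O(\eps)$ and the stated resource bounds.
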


\begin{proof}
By \Cref{thm:tlq-refutation}, there is a $(c\eta + 4\eps, \eta)$-refutation algorithm for any $\eta < \frac{1/2-4\eps}{c}$.
Observe that biased $(\alpha, \eta)$-refutation is at least as strong as $\eta$-refutation: the $(\alpha, \eta)$-refutation algorithm can be used to solve $\eta$-refutation, as $p=1/2$ is always in the range $[\alpha, 1-\alpha]$.
By \Cref{lem:KL18-reduction}, this gives an agnostic learner with excess error $1 - 2 \cdot \frac{1/2-4\eps}{c} + \eps = 1 - 1/c + O(\eps)$.
The time and sample bounds are obtained by combining the bounds in \Cref{lem:KL18-reduction} with those in \Cref{thm:tlq-refutation}.
\end{proof}

\subsection{Realizably Learning Juntas via Exact Refutation}
\label{sec:juntas}
In the above subsections, we gave a general reduction from TL-Q to agnostic learning,
citing as a black box the learning-by-refutation lemma of \cite{KL18}, \Cref{lem:KL18-reduction}.
This lemma yields learners whose excess error
depends on the refutation gap parameter $\eta$, which in our case must necessarily be smaller than $1/2c$, as one cannot hope to 
distinguish a function that is $(1/2c)$-close
to the concept class from a random function using a $c$-semi-agnostic learner.
Thus the performance of this learner quickly degrades with $c$, becoming trivial when $c=2$.

In this section, for the class of sparse juntas over the uniform distribution on $\zo^n$, we give a realizable learner from an algorithm that solves the easier task of \emph{exact} refutation ($\eta = 0$).
Exact refutation reduces to $c$-semi-agnostic TL-Q for any value of $c$, via \Cref{thm:tlq-refutation}.
Thus we show that for juntas, even for large values of $c$, semi-agnostic TL-Q
is as hard as learning with samples.
\begin{lemma}
\label{lem:juntas}
Let $\eps \gg 2^{-n/4}$ and $m + q/\eps^2 \ll 2^{n/2}$.
For any constant $c$, if the class of $k$-juntas is $(c, \eps, \tfrac{1}{10})$-PAC-testably-learnable with $q$ queries, $m$ samples, and $t$ time over the uniform distribution on $\zo^n$,
then the class of $k$-juntas is agnostically learnable over the uniform distribution with excess error $O(\eps)$ and confidence $1-\delta$, with sample complexity 
\[m' = \Paren{\frac{m + 1/\eps^2}{\eps} + q} \cdot 2^{O(k)} \cdot n \log^2(n/\delta)\]
and time complexity 
\[t' = \Paren{\frac{m + 1/\eps^2}{\eps} + q + t} \cdot 2^{O(k)} \cdot n \log(n/\delta).\]
\end{lemma}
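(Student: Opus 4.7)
The plan is to combine \Cref{thm:tlq-refutation}, applied with $\eta = 0$, with a reduction from junta learning to exact refutation.

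\textbf{Invoking exact refutation.} First, I would apply \Cref{thm:tlq-refutation} with $\eta = 0$ to the given $(c,\eps,\tfrac{1}{10})$-TL-Q algorithm, obtaining a biased-$(4\eps, 0)$-refutation algorithm $\mcR$ for $k$-juntas over the uniform distribution on $\zo^n$. The lemma's hypotheses $\eps \gg 2^{-n/4}$ and $m + q/\eps^2 \ll 2^{n/2}$ match the preconditions of \Cref{thm:tlq-refutation} for the uniform distribution on $\zo^n$, since $\|\mcD_x\|_2 = 2^{-n/2}$. The resulting $\mcR$ has sample complexity $m_{\mathrm{ref}} = O\Paren{\frac{m + 1/\eps^2}{\eps} + q}$ and runs in $O(m_{\mathrm{ref}} + t)$ time. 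Note that because exact refutation demands only $\eta = 0$ (rather than $\eta > 0$), this step does not incur any $1/c$-style loss, which is precisely why the bound ends up independent of $c$.

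\textbf{Sample-based learner from exact refutation.} With $\mcR$ in hand, I would build the sample-based learner in two phases. Phase (i) identifies the set $J$ of $k$ relevant coordinates of the best $k$-junta $f^\star$ for the data. For each $i \in [n]$, I would construct a modification of the raw samples, parameterized by $i$, such that running $\mcR$ on the modified samples distinguishes the two cases: when $i \notin J$, the modified samples should remain realizable by $f^\star$ (so $\mcR$ outputs $\structure$), and when $i \in J$, the modified labels should resemble iid $\Bern(p)$ for some $p \in [4\eps, 1-4\eps]$ (so $\mcR$ outputs $\noise$). Since a relevant coordinate in a $k$-junta has influence at least $2^{1-k}$, the signal to detect is of size $2^{-O(k)}$, and so $2^{O(k)}$ independent modifications with fresh samples suffice to amplify it; a union bound over the $n$ coordinates with $O(\log(n/\delta))$ boosting accounts for the $2^{O(k)} \cdot n \log^2(n/\delta)$ overhead. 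Phase (ii) then takes a fresh batch of $O(2^k \log(2^k/\delta))$ samples and recovers $f^\star|_J$ by majority vote over each pattern $\alpha \in \zo^J$; standard concentration shows that the resulting hypothesis has error $\opt + O(\eps)$.

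\textbf{Main obstacle.} The chief technical challenge is engineering the per-coordinate modification in Phase (i). Exact refutation's soundness guarantee applies only when the labels are genuinely iid Bernoulli, but the most natural modifications (for instance, ``for each sample $(x,y)$, with probability $\tfrac{1}{2}$ replace $x$ by $x^{\oplus i}$ while keeping $y$'') produce, in the relevant-$i$ case, labels whose conditional distribution given $x$ depends on $x$ (being $\Bern(\tfrac{1}{2})$ on the fraction $\Inf_i(f^\star)$ where $f^\star(x) \ne f^\star(x^{\oplus i})$ and deterministic elsewhere) --- outside $\mcR$'s soundness regime. I expect the correct construction must inject additional independent label randomness (for example, combining $i$-symmetrization with a second stage of label rerandomization whose rate is matched to the empirical bias $\hat p$) so that the relevant case yields truly iid $\Bern(p)$ labels for some $p \in [4\eps, 1-4\eps]$, while the irrelevant case remains realizable by $f^\star$. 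Once such a modification is available, extending from the realizable setting to $O(\eps)$-agnostic is comparatively straightforward: the identification test tolerates $O(\eps)$ label noise, and the Phase-(ii) majority estimator directly yields error $\opt + O(\eps)$.
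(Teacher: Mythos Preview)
Your first step (invoking \Cref{thm:tlq-refutation} with $\eta = 0$ to obtain a biased-$(4\eps,0)$-refutation algorithm, noting that $\|\calD_x\|_2 = 2^{-n/2}$) matches the paper exactly. The gap is in Phase~(i): the obstacle you identify is genuine, and your proposed fix does not resolve it. Injecting extra label randomness on top of an $i$-symmetrization cannot depend on whether $i \in J$, so any scheme that makes the relevant-$i$ case look like iid $\Bern(p)$ will also corrupt the irrelevant-$i$ case, destroying the ``$i \notin J \Rightarrow$ still realizable'' half of your test.

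The paper sidesteps this entirely with a \emph{prefix-randomization / telescoping} argument rather than a per-coordinate binary test. For $\ell = 0,1,\ldots,n$, rerandomize the first $\ell$ coordinates of each $x$ while keeping the label $y$ unchanged, and record $P(\ell) \coloneqq \Pr[\mcR\text{ outputs }\structure]$. At $\ell = 0$ the samples are realizable by the $k$-junta, so $P(0) \ge 2/3$; at $\ell = n$ the label is independent of the (fully rerandomized) $x$ and is exactly iid $\Bern(p)$ with $p = \E[f^\star]$, so $P(n) \le 1/3$. The key observation (\Cref{fact:irrelevant}) is that rerandomizing one additional \emph{irrelevant} coordinate leaves the joint distribution of $(x^{>\ell}, y)$ unchanged, so $P(\ell) = P(\ell-1)$ whenever coordinate $\ell$ is irrelevant. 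Hence the total drop $P(0) - P(n) \ge 1/3$ is entirely concentrated on the at most $k$ relevant indices, and some relevant $\ell^\star$ must have $P(\ell^\star - 1) - P(\ell^\star) \ge 1/(3k)$. Estimating each $P(\ell)$ to accuracy $1/(6k)$ via $O(k^2\log(n/\delta))$ independent runs of $\mcR$ locates such an $\ell^\star$ (\Cref{clm:relevant}). Note that no individual $P(\ell)$ needs to be close to $0$ or $1$; only the endpoints and the ``irrelevant coordinates don't move $P$'' fact are used.

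The paper then iterates this feature-selection primitive on subcubes to grow a depth-$\le k$ decision tree (stopping when the restriction is $4\eps$-close to constant), rather than identifying all of $J$ up front and doing a single majority vote; this is what drives the recursion over at most $2^{k+1}$ nodes and yields the stated $2^{O(k)} \cdot n\log(n/\delta)$ overheads.
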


\subsubsection{Refutation to Feature Selection}
Suppose we have an $(\eps, 0)$-refutation algorithm for $k$-juntas.
We follow a simple reduction that appears in the COLT~2003 open problem of Blum~\cite{Blum03}, which uses weak learners
to identify relevant variables.
\begin{itemize}
    \item Draw labeled examples $\{(x_i, y_i)\}$ from the uniform distribution on $\zo^n$ labeled by the unknown $k$-junta $f^\star$.
    \item For each sample $x_i$, rerandomize the first $\ell$ bits to obtain $x'_i$.
    \item Run the refutation algorithm $\A$ on the resulting data $\{(x'_i, y_i)\}$.
\end{itemize}

Observe that when $\ell = n$, the samples are uniform random examples labeled by a random function of bias $p$, where $p$ is the bias of the original unknown function. 
Then, by soundness, $\A$ must output $\noise$ with high probability. Likewise, when $\ell = 0$, the samples are labeled by a $k$-junta, so $\A$ must output $\structure$ with high probability. 
Therefore, there must exist some $\ell^\star \in [n]$ such that the behavior of the algorithm is significantly different when $\ell = \ell^\star$ vs.\ $\ell = \ell^\star - 1$. 
We will show that for this to happen, $x_{\ell^\star}$ must be a relevant variable in $f^\star$. (Intuitively, masking one more irrelevant variable does not change the distribution of $(x'_i, y_i)$.)

\begin{definition}[Prefix-randomization]
For $x \in \zo^n$, we will denote by $x^{> \ell}$ the random string obtained by replacing each $x_i$ with a uniform random bit for each $i \le \ell$ and preserving $x_i$ for each $i > \ell$.
For a function $f:\zo^n \to \zo$, 
we will denote by $f^{> \ell}$ the stochastic function obtained by rerandomizing the first $\ell$ bits of the input: $f^{>\ell}(x) = f(x^{>\ell})$.
\end{definition}

The following useful statement holds simply from 
the fact that if the $\ell_{th}$ variable is not relevant,
then $f^{>\ell}$ and $f^{>\ell-1}$ are the same stochastic function;
i.e. $\Pr[f^{>\ell}(x)=1] = \Pr[f^{>\ell-1}(x)=1]$ for every $x$.
\begin{fact}
\label{fact:irrelevant}
Let $f$ be a $k$-junta and $\A$ be a refutation algorithm.
If the $\ell_{th}$ variable is not a relevant variable in $f$,

\[\Pr[\A\text{ outputs }\structure\text{ on }f^{>\ell}] = \Pr[\A\text{ outputs }\structure\text{ on }f^{>\ell-1}],\]
and likewise for $\noise$ and errors,
where the probability is taken over the random examples and the internal randomness of $\mcA$.
\end{fact}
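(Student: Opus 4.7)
The plan is to couple the two experiments --- $\mcA$ run on data labeled by $f^{>\ell}$ versus $\mcA$ run on data labeled by $f^{>\ell-1}$ --- so that the same sample points $x_1, \ldots, x_m$ are drawn uniformly from $\zo^n$ and $\mcA$ uses the same internal randomness in both. The only place the two experiments can possibly differ is in the conditional distribution of each label $y_i$ given its input $x_i$. Once I show these conditional distributions coincide, the entire random transcript of $\mcA$ has identical distributions in the two experiments, so the probabilities of $\structure$, $\noise$, and error outputs match exactly.

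The main step is a pointwise argument on the label distributions. Unpacking the definition of prefix-randomization, conditional on $x_i$ the label $y_i$ under $f^{>\ell}$ is distributed as $f(u_1, \ldots, u_\ell, x_{i, \ell+1}, \ldots, x_{i,n})$ for fresh uniform bits $u_1, \ldots, u_\ell$, while under $f^{>\ell-1}$ it is distributed as $f(u_1, \ldots, u_{\ell-1}, x_{i, \ell}, x_{i, \ell+1}, \ldots, x_{i,n})$. The two stochastic evaluations differ only in the $\ell$-th coordinate passed to $f$ --- a fresh uniform bit versus $x_{i, \ell}$ itself. By the hypothesis that the $\ell$-th variable is irrelevant in $f$, the value of $f$ is unchanged when its $\ell$-th argument is altered, so these two distributions over $\zo$ are identical. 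Consequently, the joint law of $(x_i, y_i)$ is the same in both experiments, and by independence across $i$, so is the law of the full sample tuple.

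I do not anticipate any real obstacle: the fact ultimately reduces to the tautology that rerandomizing a coordinate on which $f$ does not depend has no effect on the output distribution. The bulk of the write-up is simply making the coupling explicit and recording the elementary conditional-distribution calculation above.
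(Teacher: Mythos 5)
Your proposal is correct and takes the same approach as the paper: the paper establishes the fact via the one-line observation (in the sentence immediately preceding it) that when the $\ell$-th variable is irrelevant, $f^{>\ell}$ and $f^{>\ell-1}$ are the same stochastic function, so the labeled-sample distributions coincide. You have simply spelled out the coupling and the pointwise conditional-distribution check that underlie that remark.
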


Now we implement the feature selection algorithm described in \cite{Blum03}.
\begin{claim}
\label{clm:relevant}
If the class of $k$-juntas is $(\eps, 0)$-refutable over the uniform distribution with sample complexity $m$ and time complexity $t$,
then there is an algorithm that, for any $k$-junta with mean $\in [\eps,1-\eps]$, outputs a relevant feature with probability at least $1 - \delta$.
It has sample complexity 
\[m' = O(k^2 m n \log(n/\delta)).\]
and time complexity
\[t' = O(k^2 t n \log(n/\delta)).\]
\end{claim}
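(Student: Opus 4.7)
}
The plan is to run the feature-selection procedure outlined immediately above the claim, amplify by repetition, and use the junta-sparsity together with \Cref{fact:irrelevant} to convert the refutation gap between $\ell=0$ and $\ell=n$ into a detectable ``jump'' at some relevant coordinate. Concretely, define
\[
    P_\ell \coloneqq \Pr\!\left[\mcA\text{ outputs }\structure\text{ on data from }f^{>\ell}\right],
\]
where the probability is over the random examples and the internal coins of $\mcA$. By the completeness case of the refutation guarantee, $P_0 \ge 2/3$, since at $\ell=0$ the data is consistent with $f^\star\in\mcC$ (error $0$). At $\ell=n$, the input $x'_i$ is uniform on $\zo^n$ and independent of $y_i=f^\star(x_i)$, so the labels are i.i.d.\ $\Bern(p)$ with $p = \Ex{}{f^\star} \in [\eps,1-\eps]$; biased $(\eps,0)$-refutation then gives $P_n \le 1/3$.

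The main obstacle, and the only real idea, is to lower bound the per-coordinate gap. Telescoping,
\[
    \sum_{\ell=1}^n (P_{\ell-1} - P_\ell) \;=\; P_0 - P_n \;\ge\; 1/3.
\]
By \Cref{fact:irrelevant}, every summand with $\ell$ an irrelevant variable of $f^\star$ vanishes, and $f^\star$ has at most $k$ relevant variables. Hence there exists a relevant coordinate $\ell^\star$ with
\[
    P_{\ell^\star-1} - P_{\ell^\star} \ge \frac{1}{3k}.
\]
This is the step that upgrades the trivial pigeonhole bound of $1/(3n)$ to $1/(3k)$, and it is the source of the $k^2$ factor below.

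Given the per-coordinate gap of $1/(3k)$, the algorithm is straightforward: for each $\ell\in\{0,1,\dots,n\}$ form an estimate $\hat P_\ell$ by running $\mcA$ independently $N = O(k^2\log(n/\delta))$ times (each run uses fresh random examples and fresh internal randomness), and output the coordinate $\ell$ maximizing $\hat P_{\ell-1} - \hat P_\ell$. By Hoeffding plus a union bound over the $n+1$ values of $\ell$, with probability at least $1-\delta$ every estimate satisfies $|\hat P_\ell - P_\ell| \le 1/(20k)$. On that event, the estimated gap at $\ell^\star$ is at least $1/(3k) - 1/(10k) > 1/(5k)$, while the estimated gap at any irrelevant $\ell$ is at most $1/(10k)$; hence the argmax is achieved at a relevant coordinate.

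Finally, the complexity follows by multiplying per-run costs by the total number of runs. Each of the $n+1$ coordinates uses $N = O(k^2\log(n/\delta))$ runs of $\mcA$, and every such run draws $m$ fresh examples (used to produce the rerandomized inputs $(x'_i,y_i)$) and runs in time $t$. Summing gives $m' = O(k^2 m\, n \log(n/\delta))$ samples and $t' = O(k^2 t\, n \log(n/\delta))$ time, as claimed. The only remaining nuance to check carefully in the write-up is that the joint distribution of $(x'_i,y_i)$ produced by the ``draw then rerandomize'' procedure really matches the stochastic function $f^{>\ell}$ that \Cref{fact:irrelevant} is stated about, but this is immediate after relabeling the rerandomized coordinates as the ``true'' coordinates.
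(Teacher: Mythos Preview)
Your proposal is correct and follows essentially the same argument as the paper: both establish $P_0\ge 2/3$, $P_n\le 1/3$, telescope and invoke \Cref{fact:irrelevant} to get a gap $\ge 1/(3k)$ at some relevant coordinate, then estimate each $P_\ell$ to accuracy $\Theta(1/k)$ via $O(k^2\log(n/\delta))$ independent runs of $\mcA$ and read off a relevant index. The only cosmetic differences are your explicit argmax selection rule and your remark about the distribution of $(x'_i,y_i)$ matching $f^{>\ell}$, both of which are fine.
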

\begin{proof}
Let $f$ be a $k$-junta with mean at least $\eps$ and
let $\A$ be an $(\eps, 0)$-refutation algorithm for $k$-juntas.
Since $f^{>0}$ is a $k$-junta, by soundness of $\A$, we have 
\[\Pr[\A\text{ outputs }\structure\text{ on }f^{>0}] \ge 2/3.\]
Since $f^{>n}$ is a random function of bias $\in [\eps, 1-\eps]$, by completeness of $\A$ we have 
\[\Pr[\A\text{ outputs }\structure\text{ on }f^{>n}] \le 1/3.\]
By \Cref{fact:irrelevant}, there must then be some $i^\star$ for which 
\[\Pr[\A\text{ outputs }\structure\text{ on }f^{>i^\star}] - \Pr[\A\text{ outputs }\structure\text{ on }f^{>i^\star -1}] \ge 1/3k,\]
and this $i^\star$ must be a relevant variable.
We will estimate $\Pr[\A\text{ outputs }\structure\text{ on }f^{>i}]$ for every $i \in [n]$ to additive error $1/6k$ and confidence $1 - \delta/n$, and output such an index $i^\star$.

To estimate this probability to the desired accuracy and confidence, by Hoeffding's inequality, $O(k^2 \cdot \log (n/\delta))$ independent
runs of $\A$ suffice for each estimate.
This gives the desired running time and sample complexity bounds.
\end{proof}

\subsubsection{Proof of \Cref{lem:juntas}}

Before describing the junta learning algorithm, we introduce the following notation:
\begin{definition}[Restriction]
For a function $f$ and a node $v$ in a decision tree, we denote by $f_v:\bits^{n - \mathrm{depth}(v)}$ the restriction of $f$ to the subset $\{x \in \zo^n: x\text{ reaches }v\}$.
\end{definition}

Now we complete the proof that semi-agnostic TL-Q for $k$-juntas implies agnostic learning.

\begin{proof}[Proof of \Cref{lem:juntas}]
By \Cref{thm:tlq-refutation} and the fact that $\norm{\mcD_x}_2^2 = 2^{-n}$ when $\mcD_x$ is uniform over $\zo^n$, if $k$-juntas are testably learnable with queries then there exists an $(4\eps, 0)$-refutation algorithm with sample complexity 
\[m'' = O\Paren{\frac{m + 1/\eps^2}{\eps} + q}\]
and time complexity 
\[t'' = O\Paren{\frac{m + 1/\eps^2}{\eps} + q + t}.\]

Let $f$ be the unknown $k$-junta; we will build a $4\eps$-approximate decision tree for $f$,
beginning with the empty tree.
By \Cref{clm:relevant}, there exists an algorithm that outputs a relevant variable when given samples from a $k$-junta with mean $\in [4\eps, 1-4\eps]$.
While there exists a node $v$ such that the restriction $f_v$ is not $4\eps$-close to constant,
we will use this algorithm to find a variable to place at $v$.

By the promise that $f$ is a $k$-junta, the depth of this tree will be at most $k$.
Thus there are at most $2^{k+1}$ calls to the algorithm of \Cref{clm:relevant}.
So to union bound over all these calls, we will set $\delta' = \delta \cdot 2^{-(k+2)}$ in each call to the feature selection algorithm.
Since $\log(n/\delta') = O(k + \log(n/\delta))$, this gives the desired time complexity of 
\[t' = t'' \cdot 2^{O(k)} n \log (n/\delta).\]

We will also draw enough samples that with probability at least $1 - \delta/2$,
every restriction gets at least 
\[m'' \cdot 2^{O(k)} \cdot n \log(n/\delta)\]
samples consistent with it, which is the sample complexity of the feature selection algorithm.
Since the distribution is uniform over $\zo^n$,
the number of samples consistent with a depth-$k$ restriction is distributed as $\mathsf{Binomial}(2^{-k}, m')$.
The claimed sample complexity then follows by a Chernoff bound and $2^{k+1}$-wise union bound over the restrictions.  
\end{proof}

Since $k$-sparse parities are a subclass of $k$-juntas, we conclude that even semi-agnostic TL-Q cannot be done in $n^{o(k)}$ time,
under the assumption that LSPN is hard.
\begin{corollary}
If LSPN requires $n^{\Omega(k)}$ time, then for any constant $c$, $c$-semi-agnostic TL-Q for $k$-juntas over the uniform distribution requires $n^{\Omega(k)}$ time.
\end{corollary}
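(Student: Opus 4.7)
The plan is to contrapose and chain \Cref{lem:juntas} with the standard embedding of LSPN into agnostic learning of $k$-juntas. Suppose, for the sake of contradiction, that for some constant $c$ the class of $k$-juntas over the uniform distribution on $\zo^n$ admits a $(c, \eps, 1/10)$-PAC TL-Q learner running in time $t = n^{o(k)}$; fix $\eps$ to any sufficiently small absolute constant (say $\eps = 1/100$). We may assume $k = o(n)$, since otherwise the claimed bound $n^{\Omega(k)}$ is vacuous, and hence the side conditions $\eps \gg 2^{-n/4}$ and $m + q/\eps^2 \ll 2^{n/2}$ of \Cref{lem:juntas} are satisfied for all large $n$ (as $m, q \le t = n^{o(k)} \ll 2^{n/2}$).

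Applying \Cref{lem:juntas} with confidence parameter set to a constant then produces a sample-based agnostic learner for $k$-juntas over the uniform distribution with excess error $O(\eps)$ and constant confidence, whose running time is
\[
t' = \bigl(\poly(1/\eps) + t\bigr) \cdot 2^{O(k)} \cdot n \log n.
\]
Because $\eps$ is a constant and $2^{O(k)} = n^{O(k/\log n)} = n^{o(k)}$ for large $n$, this simplifies to $t' = n^{o(k)}$ as well.

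The final step is to embed LSPN into agnostic learning of $k$-juntas. Every $k$-sparse parity $\chi_S$ is a $k$-junta, so an LSPN instance at any fixed noise rate $\eta \in (0, 1/2)$ with $\eta < 1/2 - \Omega(\eps)$ is an agnostic learning instance for $k$-juntas over the uniform distribution in which $\opt \le \eta$. The agnostic learner above then produces a hypothesis $h$ of classification error at most $\eta + O(\eps) < 1/2 - \Omega(1)$, which constitutes an $n^{o(k)}$-time algorithm solving LSPN with non-trivial advantage, contradicting the assumed hardness of LSPN. The only step requiring care is the parameter bookkeeping in \Cref{lem:juntas} to confirm that $t \cdot 2^{O(k)} \cdot \poly(n)$ stays within $n^{o(k)}$; the rest is a direct chaining of prior results.
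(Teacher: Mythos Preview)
Your proof is correct and follows the same approach as the paper: apply \Cref{lem:juntas} to convert the hypothetical TL-Q learner into a sample-based learner for $k$-juntas, then observe that $k$-sparse parities are $k$-juntas so this learner solves LSPN. The paper itself offers no detailed proof beyond the sentence ``Since $k$-sparse parities are a subclass of $k$-juntas, we conclude\ldots,'' and your write-up simply fills in the parameter bookkeeping that this sentence leaves implicit. One small remark: your justification ``$k=o(n)$ since otherwise the bound is vacuous'' is a bit loose---the cleaner observation is that LSPN is trivially solvable in $\binom{n}{k}\cdot\poly(n)\le 2^{O(n)}=n^{O(n/\log n)}$ time, so the hypothesis ``LSPN requires $n^{\Omega(k)}$'' is false (hence the corollary vacuous) once $k=\omega(n/\log n)$; restricting to $k=O(n/\log n)$ is exactly what makes the side condition $n^{o(k)}\ll 2^{n/2}$ of \Cref{lem:juntas} go through.
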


\section{MQ-SQ Lower Bounds}\label{sec:SQ-lower-bounds}
We introduce a class of ``MQ-SQ'' (\emph{membership-query-statistical-query}) algorithms that capture many existing learning algorithms that use membership queries. Then, we prove an SQ-analogue of the reduction in \Cref{sec:reduction}: an MQ-SQ testable learner for a class $\calC$ implies an SQ algorithm for refutation (\Cref{def:refutation}). This result, together with a reduction from SQ weak learning to SQ refutation, allows us to prove lower bounds against MQ-SQ algorithms for testably learning several fundamental concept classes, including parity functions, $k$-juntas, and decision trees.

\subsection{Five Types of MQ-SQs}
Let $\calX$ denote the instance space and $f: \calX \to \zo$ denote the target function in the TL-Q instance. Let $\calD \in \Delta(\calX)$ be the unknown marginal distribution over $\calX$. An MQ-SQ oracle for $(f, \calD)$ answers the following five types of queries up to a small error.

\begin{definition}[MQ-SQ Oracle]
\label{def:MQ-SQ-oracle}
    An MQ-SQ oracle with tolerance $\tau \ge 0$ answers the following five types of queries within an additive error of $\tau$, given any test function $\phi: \calX \to [0, 1]$, any distribution $\Dstar \in \Delta(\calX)$, and any permutation $\pi: \calX \to \calX$ without fixed points:
    \begin{itemize}
        \item Type I: $\Ex{x \sim \Dstar}{\phi(x)f(x)}$.
        \item Type II: $\Ex{x \sim \Dstar}{\phi(x)f(x)f(\pi(x))}$.
        \item Type III: $\Ex{x \sim \calD}{\phi(x)}$.
        \item Type IV: $\Ex{x \sim \calD}{\phi(x)f(x)}$.
        \item Type V: $\Ex{x \sim \calD}{\phi(x)f(x)f(\pi(x))}$.
    \end{itemize}
\end{definition}

For concreteness, consider the case that $\calX = \zo^n$ is the hypercube. By setting $\Dstar$ to the uniform distribution over $\zo^n$ (denoted by $\calU$) in Type~I queries, we recover the usual statistical query model for learning over $\calU$. Queries of Types II~and~V allow us to estimate the correlation between $f$ and a permuted version of $f$ (weighted by $\phi$) over both a generic customized distribution $\Dstar$ and the unknown marginal $\calD$, respectively. For instance, setting $\pi: x \mapsto x^{\oplus i}$ and $\Dstar = \calU$ allows us to estimate the influence of variable $x_i$ with respect to $f$.

When the distribution $\Dstar$ in a Type~I MQ-SQ is degenerate (i.e., with all its probability mass on a single point $x_0 \in \calX$), the MQ-SQ reduces to a usual membership query at $x_0$. Our reduction for MQ-SQ algorithms only works when no such queries are made. More concretely, the reduction requires that the squared $2$-norm of $\Dstar$, $\|\Dstar\|_2^2 \coloneqq \sum_{x \in \calX}[\Dstar(x)]^2$, is \emph{sufficiently small} in all queries that the algorithm makes. As we will see later, when many existing query-based learning algorithms are implemented using MQ-SQs, $\Dstar$ is usually uniform over a size-$2^{\Omega(n)}$ subset of $\zo^n$. This implies $\|\Dstar\|_2^2 \le 2^{-\Omega(n)}$, so our reduction applies to most of the interesting parameter regimes.

\subsection{Implementing Query-Based Learning Algorithms Using MQ-SQs}
\label{sec:MQ-SQ-examples}
We note that many existing MQ-based learning algorithms (or components thereof) for the uniform distribution $\calU$ over the hypercube $\calX = \zo^n$ can be implemented using MQ-SQs.

\paragraph{Influence estimation.} The influence of variable $x_i$ with respect to $f: \zo^n \to \zo$ is defined as $\pr{x \sim \calU}{f(x) \ne f(x^{\oplus i})}$, and can be equivalently written as
\begin{align*}
    \Ex{x \sim \calU}{f(x)} + \Ex{x \sim \calU}{f(x^{\oplus i})} - 2\Ex{x \sim \calU}{f(x)f(x^{\oplus i})}
=   2\Ex{x \sim \calU}{f(x)} - 2\Ex{x \sim \calU}{f(x)f(x^{\oplus i})}
\end{align*}
using the identity $\1{b_1 \ne b_2} = b_1 + b_2 - 2b_1b_2$ for $b_1, b_2 \in \zo$. The first term can be evaluated using a Type~I query by setting $\phi(x) \equiv 1$ and $\Dstar = \calU$. The second corresponds to a Type~II query with $\phi(x) \equiv 1$, $\Dstar = \calU$, and $\pi: x \mapsto x^{\oplus i}$. An MQ-SQ oracle with tolerance $\tau$ then allows us to estimate the influence up to an additive error of $4\tau$.

Similarly, we can estimate the influences of any restriction of $f$ by letting $\Dstar$ be uniform over the corresponding subcube of $\zo^n$. In particular, if the restriction fixes $k \le n - \Omega(n)$ variables, the subcube is of size $2^{n-k} \ge 2^{\Omega(n)}$, and the squared $2$-norm of $\Dstar$ is $2^{-\Omega(n)}$.

\paragraph{Learning $k$-juntas.} As a direct application, since every relevant variable of a $k$-junta $f$ has an influence of at least $2^{-(k-1)}$, we can identify all the relevant variables of $f$ using an MQ-SQ oracle with tolerance $O(2^{-k})$. To learn $f$, it remains to evaluate $\Ex{x \sim \Dstar}{f(x)}$ where $\Dstar$ is the uniform distribution over a subcube induced by each assignment of the relevant variables. Assuming $k \le n - \Omega(n)$, this step only involves MQ-SQs of Type~I where $\|\Dstar\|_2^2 = 2^{-\Omega(n)}$.

\paragraph{Proper learning of decision trees.} As another application, we examine a recent query-based algorithm of Blanc, Lange, Qiao, and Tan~\cite[Fig.~2]{BLQT22} that learns decision trees \emph{properly} (i.e., the learned classifier is represented as a decision tree). The algorithm runs in time $n^{O(\log\log n)}$ when learning size-$\poly(n)$ decision trees to a constant error $\eps = \Omega(1)$. The algorithm uses membership queries to estimate the following three quantities for a generic subcube $S \subseteq \zo^n$ of co-dimension $O(\log n)$: (1) The mean of $f$ restricted to $S$: $\Ex{x \sim \calU}{f(x) \mid x \in S}$; (2) The error of  a candidate decision tree $\widehat T$ over $S$: $\pr{x \sim \calU}{f(x) \ne \widehat T(x) \mid x \in S}$; (3) The influence of the $i$-th variable with respect to $f$ restricted to $S$: $\pr{x \sim \calU}{f(x) \ne f(x^{\oplus i}) \mid x \in S}$. The first two quantities can be estimated using Type~I MQ-SQs. As we showed earlier, influences with respect to restrictions of $f$ can be estimated using MQ-SQs of Types I~and~II. Since $|S| = 2^{n - O(\log n)} = 2^{\Omega(n)}$, $\|\Dstar\|_2^2 = 2^{-\Omega(n)}$ also holds for all MQ-SQs made by the algorithm.

\paragraph{The Kushilevitz-Mansour algorithm.} The algorithm of Kushilevitz and Mansour~\cite{KM91} learns an $n$-variable boolean function $f$ with at most $s$ non-zero Fourier coefficients to error $\eps$ using membership queries in $\poly(ns/\eps)$ time. For brevity, we assume that $f$ maps $\zo^n$ to $\{\pm 1\}$ instead of $\zo$; to handled the $\{0, 1\}$-valued function, it suffices to apply the transformation $b \mapsto 1 - 2b$, which leads to at most an $O(1)$ blow-up in the number of queries and the additive error. For $S \subseteq [n]$, we write $\chi_S(x) \coloneqq \prod_{i \in S}(-1)^{x_i}$ as the Fourier basis indexed by $S$, and let $\widehat f(S) \coloneqq \Ex{x \sim \calU}{f(x) \cdot \chi_S(x)}$ be the corresponding Fourier coefficient.
    
Following the exhibition of the KM algorithm in~\cite[Section 3.5]{ODonnell14}, the algorithm only uses membership queries in the following two ways, both of which can be implemented using $\poly(ns/\eps)$ MQ-SQs with tolerance $1/\poly(ns/\eps)$:

\begin{itemize}
    \item \textbf{Estimation of Fourier coefficients.} Given $S \subseteq [n]$, the algorithm needs to estimate $\widehat f(S)$ up to an additive error of $1 / \poly(s / \eps)$. This can be done using a Type~I MQ-SQ with $\phi = \chi_S$ and $\Dstar = \calU$.

    \item \textbf{Estimation of total Fourier weights.} Given $S \subseteq J \subseteq [n]$, the algorithm needs to estimate $\sum_{U \subseteq \overline{J}}\left[\widehat f(S \cup U)\right]^2$ up to an additive error of $1 / \poly(s / \eps)$. By \cite[Proposition 3.40]{ODonnell14},
    \[
        \sum_{U \subseteq \overline{J}}\left[\widehat f(S \cup U)\right]^2
    =   \Ex{z \sim \zo^{\overline{J}}}{\Ex{y, y' \sim \zo^J}{f(y, z)\chi_S(y, z) \cdot f(y', z)\chi_S(y', z)}}.
    \]
    Let $\Delta_J$ denote the distribution of $\delta \in \zo^n$ obtained from: (1) For each $i \in J$, pick $\delta_i$ from $\zo$ independently and uniformly at random; (2) For each $i \in \overline{J}$, set $\delta_i = 0$. Then, the distribution of $((y, z), (y', z))$ when sampling $z \sim \zo^{\overline{J}}$ and $y, y' \sim \zo^{J}$ independently is identical to that of $(x, x \oplus \delta)$ for independent $x \sim \calU$ and $\delta \sim \Delta_J$. It follows that
    \[
        \sum_{U \subseteq \overline{J}}\left[\widehat f(S \cup U)\right]^2
    =   \Ex{\delta \sim \Delta_J}{\Ex{x \sim \calU}{\chi_S(x)\chi_S(x \oplus \delta) \cdot f(x)f(x \oplus \delta)}}.
    \]

    For each fixed $\delta$ in the support of $\Delta_J$, the inner expectation can be estimated using an MQ-SQ. When $\delta$ is the zero vector, the term inside the expectation reduces to $[\chi_S(x)]^2[f(x)]^2 = 1$, so the inner expectation takes value $1$. When $\delta$ is non-zero, the mapping $x \mapsto x \oplus \delta$ is a permutation over $\zo^n$ without fixed points, so it is exactly a Type~III MQ-SQ with $\phi(x) \coloneqq \chi_S(x)\chi_S(x \oplus \delta)$, $\Dstar = \calU$, and $\pi: x \mapsto x \oplus \delta$.

    Therefore, it suffices to sample multiple copies of $\delta$ from $\Delta_J$ and compute the empirical average of the inner expectation. For each inner expectation, we query an MQ-SQ oracle with tolerance $1/\poly(s/\eps)$. By a Chernoff bound, sampling $\poly(ns/\eps)$ copies of $\delta$ is sufficient for the estimation error to be $1/\poly(s/\eps)$ except with probability $1/\poly(ns/\eps)$. Since the KM algorithm runs in $\poly(ns/\eps)$ time, at most $\poly(ns/\eps)$ different values need to be estimated. The union bound then implies that, with high probability, the error is small for all the estimates. We make $\poly(ns/\eps)$ MQ-SQs for each quantity $\sum_{U \subseteq \overline{J}}\left[\widehat f(S \cup U)\right]^2$ to be estimated, so the total number of MQ-SQs is also bounded by $\poly(ns/\eps)$.
\end{itemize}

\subsection{MQ-SQ Testable Learning Implies SQ Refutation}
We will show that, if there is an efficient MQ-SQ algorithm that testably learns class $\calC$, the same class can be refuted by an efficient SQ algorithm. To this end, we first recall the definition of a (usual) SQ-based algorithm in the context of refutation (\Cref{def:refutation}). 
As in \Cref{sec:reduction}, we let $\Dref$ denote the distribution of labeled pairs in the refutation instance and $\calD$ denote the unknown marginal distribution in a TL-Q instance.

\begin{definition}[SQ oracle for refutation]
\label{def:SQ-oracle}
    An SQ oracle for refutation with tolerance $\tau \ge 0$ answers queries of form $\Ex{(x, y) \sim \Dref}{\phi(x, y)}$ within an additive error of $\tau$ given a test function $\phi: \calX \times \zo \to [0, 1]$.
\end{definition}

\paragraph{Recap: Reduce refutation to TL-Q.} We start by recalling the reduction from \Cref{sec:reduction}. Let $p \coloneqq \pr{(x, y) \sim \Dref}{y = 1}$ be the fraction of positive labels in the refutation instance. We consider the TL-Q instance on the same concept class $\calC$, where the target function $f: \calX \to \zo$ is chosen as a \emph{random $p$-biased function}, i.e., each function value $f(x)$ is sampled from $\Bern(p)$ independently. The marginal distribution $\calD \in \Delta(\calX)$ is the distribution of the output $x \in \calX$ produced by the following procedure:
\begin{itemize}
    \item Sample $(x, y) \sim \Dref$.
    \item If $y = f(x) = 1$, output $x$ with probability $1 - p$.
    \item If $y = f(x) = 0$, output $x$ with probability $p$.
    \item If no output is produced, return to the first step.
\end{itemize}
Formally, the probability mass function of $\calD$ is given by
\begin{equation}\label{eq:TLQ-dist}
    \calD(x)
=   \frac{1}{Z}\cdot\calD_x(x)\left[(1 - p) \cdot y(x) \cdot f(x) + p \cdot (1 - y(x)) \cdot (1 - f(x))\right],
\end{equation}
where $\calD_x \in \Delta(\calX)$ is the $\calX$-marginal of $\Dref$,
\[
    y(x) \coloneqq \Ex{(x', y') \sim \Dref}{y' \mid x' = x}
\]
is the conditional expectation of $y \mid x$ over $\Dref$, and
\begin{equation}\label{eq:Z}
    Z \coloneqq \sum_{x \in \calX}\calD_x(x)\left[(1 - p) \cdot y(x) \cdot f(x) + p \cdot (1 - y(x)) \cdot (1 - f(x))\right]
\end{equation}
is a normalization factor.

\paragraph{Queries over a customized $\Dstar$.} We claim that we can answer the first two types of MQ-SQs,
\[
    \Ex{x \sim \Dstar}{\phi(x)f(x)}
\quad\text{and}\quad
    \Ex{x \sim \Dstar}{\phi(x)f(x)f(\pi(x))},
\]
by simply replacing each $f(x)$ with its expectation $p$. In other words, we will show that both
\[
    \Ex{x \sim \Dstar}{\phi(x)f(x)}
\approx p \cdot \Ex{x \sim \Dstar}{\phi(x)}
\quad \text{and} \quad
    \Ex{x \sim \Dstar}{\phi(x)f(x)f(\pi(x))}
\approx p^2 \cdot \Ex{x \sim \Dstar}{\phi(x)}
\]
hold up to a small error with high probability, so that both types of queries can be answered using $\phi$ and $\Dstar$, without actually ``realizing'' the random function $f$.

Towards proving the above, we note that for each fixed $\phi$, over the randomness of $f$,
\[
    \Ex{x \sim \Dstar}{\phi(x)f(x)}
=   \sum_{x \in \calX}\Dstar(x)\phi(x)f(x)
\]
is a sum of independent random variables and thus concentrates around its mean. A Type~II query of form
\[
    \Ex{x \sim \Dstar}{\phi(x)f(x)f(\pi(x))}
=   \sum_{x \in \calX}\Dstar(x)\phi(x)f(x)f(\pi(x))
\]
is slightly more complicated, since the summand is not independent for different values of $x \in \calX$. For example, for two different elements $x, x' \in \calX$, the random variables $f(x)f(\pi(x))$ and $f(x')f(\pi(x'))$ are dependent if $x' = \pi(x)$. A workaround is to consider the graph induced by the permutation $\pi$ (formed by all directed edges $x \to \pi(x)$). Since $\phi$ has no fixed points, the resulting graph can be decomposed into disjoint cycles of length $\ge 2$. We can then partition the $|\calX|$ edges into at most three sets, such that no vertex appear twice in edges within each set. This decomposes the summation $\sum_{x \in \calX}\Dstar(x)\phi(x)f(x)f(\pi(x))$ into three terms, each of which is a sum of independent random variables. Applying Hoeffding's inequality to each term separately gives the same concentration result.

Formally, we have the following lemma.

\begin{lemma}[Types I and II]\label{lemma:type-12}
    The following holds for every $\eps \ge 0$, $\phi: \calX \to [0, 1]$, $\Dstar \in \Delta(\calX)$ and permutation $\pi: \calX \to \calX$ without fixed points:
    \begin{itemize}
        \item With probability at least $1 - 2e^{-\Omega(\eps^2 / \|\Dstar\|_2^2)}$ over the randomness of $f$,
        \[
            \left|\Ex{x \sim \Dstar}{\phi(x)f(x)} - p \cdot \Ex{x \sim \Dstar}{\phi(x)}\right| \le \eps.
        \]
        \item With probability at least $1 - 6e^{-\Omega(\eps^2 / \|\Dstar\|_2^2)}$ over the randomness of $f$,
        \[
            \left|\Ex{x \sim \Dstar}{\phi(x)f(x)f(\pi(x))} - p^2 \cdot \Ex{x \sim \Dstar}{\phi(x)}\right| \le \eps.
        \]
    \end{itemize}
\end{lemma}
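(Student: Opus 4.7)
The plan is to derive both parts from Hoeffding's inequality, with Type~II requiring a preprocessing step that partitions the summation to eliminate correlations.

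For Type~I, I would start by writing
\[
    \Ex{x \sim \Dstar}{\phi(x)f(x)} = \sum_{x \in \calX} \Dstar(x)\phi(x)f(x),
\]
and observe that, over the randomness of $f$, this is a sum of $|\calX|$ independent random variables, each bounded in $[0, \Dstar(x)]$ since $\phi(x) \in [0,1]$ and $f(x) \in \zo$. Because $\Ex{}{f(x)} = p$ for every $x$, the expectation equals $p \cdot \Ex{x \sim \Dstar}{\phi(x)}$. Hoeffding's inequality then bounds the deviation probability by $2\exp(-2\eps^2 / \sum_x [\Dstar(x)]^2) = 2\exp(-\Omega(\eps^2 / \|\Dstar\|_2^2))$, matching the claim.

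For Type~II, the sum $\sum_{x \in \calX}\Dstar(x)\phi(x)f(x)f(\pi(x))$ contains correlated summands whenever $\{x, \pi(x)\} \cap \{x', \pi(x')\} \ne \emptyset$. To bypass this, I would consider the directed graph on $\calX$ with edges $x \to \pi(x)$. Since $\pi$ is a fixed-point-free permutation, every vertex has in-degree and out-degree one, so the graph decomposes into disjoint directed cycles of length at least $2$. Treating the $|\calX|$ terms of the sum as the edges of this (multi)graph, I would properly edge-color each cycle with at most three colors --- two colors suffice for even cycles and for the two parallel edges of a $2$-cycle, and three are required only for odd cycles of length $\ge 3$. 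Within a single color class, no two edges share a vertex, so the associated summands depend on disjoint $f$-values and are mutually independent.

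This decomposes the Type~II sum into at most three subsums $T_1, T_2, T_3$. Each summand of $T_c$ lies in $[0, \Dstar(x)]$, and its expectation is $p^2 \cdot \Dstar(x)\phi(x)$, using $\pi(x) \ne x$ to conclude that $f(x)$ and $f(\pi(x))$ are independent $\Bern(p)$ variables; summing gives total expectation $p^2 \cdot \Ex{x \sim \Dstar}{\phi(x)}$. Applying Hoeffding to each $T_c$ with tolerance $\eps/3$ bounds the corresponding deviation probability by $2\exp(-\Omega(\eps^2 / \|\Dstar\|_2^2))$, and a union bound over the three classes yields the claimed $6\exp(-\Omega(\eps^2 / \|\Dstar\|_2^2))$. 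The main obstacle is the edge-coloring step, particularly for the degenerate $2$-cycles where $\pi(\pi(x)) = x$: here the two terms $f(x)f(\pi(x))$ and $f(\pi(x))f(x)$ are identical random variables, so they must be placed in different color classes to keep each $T_c$ a sum of independent summands; once that case is handled, the rest is a routine Hoeffding computation.
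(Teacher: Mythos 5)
Your proposal is correct and follows essentially the same route as the paper's proof: expand each expectation as a sum over $\calX$, apply Hoeffding directly for Type~I, and for Type~II use the cycle decomposition of $\pi$ together with a proper $3$-edge-coloring to split the sum into three independent pieces, each controlled by Hoeffding with tolerance $\eps/3$ and combined via a union bound. Your extra remark about the $2$-cycle case (the two parallel edges being identical random variables and therefore needing distinct colors) is a detail the paper leaves implicit, but it does not change the argument.
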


\begin{proof}
    Recall that the value of $f$ on each input $x \in \calX$ is sampled from $\Bern(p)$ independently. Thus, over the randomness of $f$,
    \[
        X_1 \coloneqq \Ex{x \sim \Dstar}{\phi(x)f(x)}
    =   \sum_{x \in \calX}\Dstar(x)\phi(x)f(x)
    \]
    is the average of $|\calX|$ independent random variables, and each random variable $\Dstar(x)\phi(x)f(x)$ lies in $[0, \Dstar(x)]$. The expectation of $X_1$ is
    \[
        \Ex{}{X_1}
    =   \sum_{x \in \calX}\Dstar(x)\phi(x)\cdot p
    =   p \cdot \Ex{x \sim \Dstar}{\phi(x)}.
    \]
    The first claim then follows from Hoeffding's inequality.

    Towards proving the second claim, we expand the expectation in a similar way:
    \[
        X_2 \coloneqq \Ex{x \sim \Dstar}{\phi(x)f(x)f(\pi(x))}
    =   \sum_{x \in \calX}\Dstar(x)\phi(x)f(x)f(\pi(x)).
    \]
    The expectation of $X_2$ over the randomness of $f$ is given by
    \[
        \mu_2
    \coloneqq \Ex{}{X_2}
    =   \sum_{x \in \calX}\Dstar(x)\phi(x) \cdot \Ex{}{f(x)f(\pi(x))}
    =   \sum_{x \in \calX}\Dstar(x)\phi(x) \cdot p^2
    =   p^2 \cdot \Ex{x \sim \Dstar}{\phi(x)},
    \]
    where the second step applies the assumption that $\pi$ has no fixed points. Note that $\pi$ partitions $\calX$ into disjoint cycles of lengths $\ge 2$, where each cycle is of form $x^{(1)} \to x^{(2)} \to \cdots \to x^{(\ell)} \to x^{(1)}$ such that $\pi(x^{(1)}) = x^{(2)}$, $\pi(x^{(2)}) = x^{(3)}$, $\ldots$, $\pi(x^{(\ell)}) = x^{(1)}$. For each such cycle, we can color its edges with three colors, so that no two edges of the same color share a vertex. Combining the coloring for different cycles yields a partition $\calX = \calX_1 \cup \calX_2 \cup \calX_3$ such that, for every $i \in \{1, 2, 3\}$ and any two different elements $x, x' \in \calX_i$, the four elements $x, \pi(x), x', \pi(x')$ are different.

    Then, we can write $X_2 = X_{2,1} + X_{2,2} + X_{2,3}$ where $X_{2, i} \coloneqq \sum_{x \in \calX_i}\Dstar(x)\phi(x)f(x)f(\pi(x))$ for each $i \in \{1, 2, 3\}$. Letting $\mu_{2,i} \coloneqq \Ex{}{X_{2, i}}$, we have
    \[
        X_2 - \mu_2
    =   (X_{2,1} - \mu_{2,1}) + (X_{2,2} - \mu_{2,2}) + (X_{2,3} - \mu_{2,3}).
    \]
    For $|X_2 - \mu_2| > \eps$ to hold, $|X_{2, i} - \mu_{2, i}| > \frac{\eps}{3}$ must hold for some $i \in \{1, 2, 3\}$. Since each $X_{2,i}$ is the sum of $|\calX_i|$ independent random variables, Hoeffding's inequality gives
    \[
        \pr{}{|X_{2, i} - \mu_{2, i}| > \frac{\eps}{3}}
    \le 2\exp\left(-\frac{2 \cdot (\eps/3)^2}{\sum_{x \in \calX_i}[\Dstar(x)]^2}\right)
    \le 2\exp\left(-\frac{2\eps^2/9}{\|\Dstar\|_2^2}\right)
    \le 2e^{-\Omega(\eps^2 / \|\Dstar\|_2^2)}.
    \]
    The second part of the lemma then follows from the union bound.
\end{proof}

%
%

\paragraph{Queries over the unknown $\calD$.} Now, we handle the other three types of MQ-SQs that are over the unknown marginal distribution $\calD$ in the TL-Q instance:
\[
    \Ex{x \sim \calD}{\phi(x)},
\quad
    \Ex{x \sim \calD}{\phi(x)f(x)},
\quad\text{and}\quad
    \Ex{x \sim \calD}{\phi(x)f(x)f(\pi(x))}.
\]

\begin{lemma}[Types III, IV and V]\label{lemma:type-345}
    The following holds for every $\eps \ge 0$, $\phi: \calX \to [0, 1]$ and permutation $\pi: \calX \to \calX$ without fixed points:
    \begin{itemize}
        \item With probability at least $1 - 4e^{-\Omega(\eps^2\cdot p^2(1-p)^2 / \|\calD_x\|_2^2)}$ over the randomness of $f$,
        \[
            \left|\Ex{x \sim \calD}{\phi(x)} - \Ex{(x, y) \sim \Dref}{\phi(x)}\right| \le \eps.
        \]
        \item With probability at least $1 - 4e^{-\Omega(\eps^2\cdot p^2(1-p)^2 / \|\calD_x\|_2^2)}$ over the randomness of $f$,
        \[
            \left|\Ex{x \sim \calD}{\phi(x)f(x)} - \Ex{(x, y) \sim \Dref}{\phi(x) \cdot y}\right| \le \eps.
        \]
        \item With probability at least $1 - 8e^{-\Omega(\eps^2\cdot p^2(1-p)^2 / \|\calD_x\|_2^2)}$ over the randomness of $f$,
        \[
            \left|\Ex{x \sim \calD}{\phi(x)f(x)f(\pi(x))} - p \cdot \Ex{(x, y) \sim \Dref}{\phi(x) \cdot y}\right| \le \eps.
        \]
    \end{itemize}
\end{lemma}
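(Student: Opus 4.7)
The plan is to write each quantity $\Ex{x \sim \calD}{g(x)}$ explicitly as a ratio $N_g / Z$ using the PMF of $\calD$ from~\eqref{eq:TLQ-dist}, where
\[
    N_g \coloneqq \sum_{x \in \calX} \calD_x(x) \cdot w(x) \cdot g(x), \qquad w(x) \coloneqq (1-p) y(x) f(x) + p(1-y(x))(1-f(x)),
\]
and $Z$ is as in~\eqref{eq:Z}. I would (i) evaluate $\Ex{f}{N_g}$ for each of the three target choices of $g$, (ii) prove concentration of $N_g$ around that expectation via Hoeffding's inequality, and (iii) combine with \Cref{lemma:Z-concentration} through a ratio argument.

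Step (i) is a short calculation using $\Ex{f}{f(x)} = p$ together with the identities $f(x)^2 = f(x)$ and $f(x)(1-f(x)) = 0$. For Type~III ($g(x) = \phi(x)$) and Type~IV ($g(x) = \phi(x) f(x)$), this yields $\Ex{f}{N_g} = p(1-p) \cdot \Ex{(x,y) \sim \Dref}{\phi(x)}$ and $p(1-p) \cdot \Ex{(x,y) \sim \Dref}{\phi(x) y}$, respectively. For Type~V ($g(x) = \phi(x) f(x) f(\pi(x))$), only the $f(x)$-part of $w(x)$ survives multiplication by $f(x) f(\pi(x))$, and since $\pi$ has no fixed points, $\Ex{f}{f(x) f(\pi(x))} = p^2$, giving $\Ex{f}{N_g} = p^2(1-p) \cdot \Ex{(x,y) \sim \Dref}{\phi(x) y}$. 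Dividing each by $\Ex{f}{Z} = p(1-p)$ recovers exactly the three target values stated in the lemma.

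For step (ii), in Types~III and~IV the summand of $N_g$ at each $x$ depends only on $f(x)$ and lies in $[0, \calD_x(x)]$, so $N_g$ is a sum of $|\calX|$ independent random variables, and Hoeffding gives $|N_g - \Ex{f}{N_g}| \le \delta$ with failure probability $2 \exp(-\Omega(\delta^2 / \|\calD_x\|_2^2))$. The main technical wrinkle arises in Type~V, where the summands involve both $f(x)$ and $f(\pi(x))$ and are therefore correlated across different $x$. I would resolve this exactly as in the proof of \Cref{lemma:type-12}: since $\pi$ is a fixed-point-free permutation, its functional graph is a disjoint union of cycles of length at least $2$, which admit a proper $3$-edge-coloring inducing a partition $\calX = \calX_1 \sqcup \calX_2 \sqcup \calX_3$ within which the pairs $\{x, \pi(x)\}$ are vertex-disjoint. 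Applying Hoeffding to the three resulting sums of independent bounded random variables and union-bounding produces the same concentration with failure probability at most $6 \exp(-\Omega(\delta^2 / \|\calD_x\|_2^2))$.

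For step (iii), I would set $\delta = \Theta(\eps \cdot p(1-p))$ in the numerator bound and invoke \Cref{lemma:Z-concentration} with parameter $\Theta(\eps)$ so that $|Z - p(1-p)| \le O(\eps) \cdot p(1-p)$ holds with matching failure probability $2 \exp(-\Omega(\eps^2 p^2(1-p)^2 / \|\calD_x\|_2^2))$. A short calculation (using $|\Ex{f}{N_g}| \le p(1-p)$) then shows that on these good events,
\[
    \left|\frac{N_g}{Z} - \frac{\Ex{f}{N_g}}{p(1-p)}\right| \le O(\eps),
\]
which matches the claimed target value up to the desired additive error. Union-bounding over the failure events yields the advertised failure probabilities: $4 e^{-\Omega(\eps^2 p^2(1-p)^2 / \|\calD_x\|_2^2)}$ for Types~III and~IV (from $2+2$), and $8 e^{-\Omega(\eps^2 p^2(1-p)^2 / \|\calD_x\|_2^2)}$ for Type~V (from $6+2$). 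Aside from the partition trick for Type~V --- which is already worked out in \Cref{lemma:type-12} --- the argument is essentially the same ratio analysis used in the proof of \Cref{clm:error-blowup-better}, so the main obstacle is really just the bookkeeping to verify that the $p(1-p)$ factors propagate correctly through the ratio bound.
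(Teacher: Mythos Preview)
Your proposal is correct and follows essentially the same approach as the paper: expand each expectation as a ratio with numerator $N_g$ and denominator $Z$, compute the mean of $N_g$ (using $f(x)^2=f(x)$, $f(x)(1-f(x))=0$, and $\Ex{}{f(x)f(\pi(x))}=p^2$), apply Hoeffding to $N_g$ (with the three-coloring decomposition for Type~V exactly as in \Cref{lemma:type-12}), combine with \Cref{lemma:Z-concentration} via a triangle-inequality ratio bound, and union-bound to get the $4$ and $8$ in the exponents. The paper's writeup differs only cosmetically, factoring out $(1-p)$ early and writing the ratio as $X_i/(\alpha\cdot p^{\cdot})$ with $\alpha=Z/(p(1-p))$ rather than your $N_g/Z$.
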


\begin{proof}
    Recall that the probability mass function of $\calD$ is given by
    \[
        \calD(x) = \frac{1}{Z}\cdot\calD_x(x)\left[(1 - p) \cdot y(x) \cdot f(x) + p \cdot(1 - y(x))\cdot (1 - f(x))\right],
    \]
    where $y(x) = \Ex{(x', y') \sim \Dref}{y' \mid x' = x}$ and $Z$ is a normalization factor that concentrates around $p(1-p)$ (\Cref{lemma:Z-concentration}).

    \paragraph{The first claim: $\Ex{x \sim \calD}{\phi(x)} \approx \Ex{(x, y) \sim \Dref}{\phi(x)}$.} Expanding this expectation gives
    \begin{align*}
        \Ex{x \sim \calD}{\phi(x)}
    &=  \sum_{x \in \calX}\calD(x)\phi(x)\\
    &=  \frac{1}{Z}\sum_{x \in \calX}\calD_x(x)\phi(x)\cdot\left[(1 - p) \cdot y(x) \cdot f(x) + p \cdot(1 - y(x))\cdot (1 - f(x))\right]\\
    &=  \frac{1}{Z}\left[p\cdot\sum_{x \in \calX}\calD_x(x)\phi(x)(1 - y(x)) + \sum_{x \in \calX}\calD_x(x)\phi(x)(y(x) - p)f(x)\right]\\
    &=  \frac{X_1}{\alpha \cdot p(1-p)},
    \end{align*}
    where we use shorthands $\alpha \coloneqq \frac{Z}{p(1-p)}$ and
    \[
        X_1 \coloneqq p \cdot \sum_{x \in \calX}\calD_x(x)\phi(x)(1-y(x)) + \sum_{x \in \calX}\calD_x(x)\phi(x)(y(x) - p)f(x)
    \]
    in the last step.
    
    Over the randomness of $f$, $X_1$ is a deterministic value plus the sum of $|\calX|$ independent random variables, where the summand that corresponds to each $x \in \calX$ always lies in $[-\calD_x(x), \calD_x(x)]$. The expectation of $X_1$ is given by
    \[
        \mu_1 \coloneqq \Ex{}{X_1}
    =   p \cdot \sum_{x \in \calX}\calD_x(x)\phi(x)(1-y(x)) + \sum_{x \in \calX}\calD_x(x)\phi(x)(y(x) - p) \cdot p
    =   p(1-p) \cdot \Ex{x \sim \calD_x}{\phi(x)}.
    \]
    By Hoeffding's probability, for every $\eps_1 \ge 0$, it holds with probability at least $1 - 2e^{-\Omega(-\eps_1^2 / \|\calD_x\|_2^2)}$ that
    \begin{equation}\label{eq:type-3-cond-1}
        |X_1 - \mu_1| \le \eps_1.
    \end{equation}
    By \Cref{lemma:Z-concentration}, for every $\eps_2 \in [0, 1/2]$, it holds with probability at least $1 - 2e^{-\Omega(\eps_2^2\cdot p^2(1-p)^2 / \|\calD_x\|_2^2)}$ that
    \begin{equation}\label{eq:type-3-cond-2}
        \alpha = \frac{Z}{p(1-p)} \in [1 - \eps_2, 1 + \eps_2].
    \end{equation}

    Recall that our goal is to derive a high-probability upper bound on
    \begin{align*}
        \left|\Ex{x \sim \calD}{\phi(x)} - \Ex{(x, y) \sim \Dref}{\phi(x)}\right|
    =   &~\left|\frac{X_1}{\alpha \cdot p(1-p)} - \Ex{x \sim \calD_x}{\phi(x)}\right|\\
    =   &~\left|\frac{X_1}{\alpha \cdot p(1-p)} - \frac{\mu_1}{p(1-p)}\right|\\
    \le &~\left|\frac{X_1}{\alpha \cdot p(1-p)} - \frac{\mu_1}{\alpha \cdot p(1-p)}\right| + \left|\frac{\mu_1}{\alpha \cdot p(1-p)} - \frac{\mu_1}{p(1-p)}\right|.
    \end{align*}
    When both \eqref{eq:type-3-cond-1}~and~\eqref{eq:type-3-cond-2} hold, the first term above is at most
    \[
        \frac{1}{\alpha} \cdot \frac{1}{p(1-p)}\cdot|X_1 - \mu_1|
    \le 2 \cdot \frac{1}{p(1-p)} \cdot \eps_1
    =   \frac{2\eps_1}{p(1-p)},
    \]
    while the second term reduces to
    \[
        \Ex{x \sim \calD_x}{\phi(x)} \cdot \left|\frac{1}{\alpha} - 1\right|
    \le \left|\frac{1}{\alpha} - 1\right|
    =   \frac{|\alpha - 1|}{\alpha}
    \le 2|\alpha - 1|
    \le 2\eps_2.
    \]
    Here, we use the assumption that $|\alpha - 1| \le \eps_2 \le 1/2$. Setting $\eps_1 = \eps\cdot p(1-p) / 4$ and $\eps_2 = \eps / 4$, the first claim follows from the union bound.

    \paragraph{The second claim: $\Ex{x \sim \calD}{\phi(x)f(x)} \approx \Ex{(x, y) \sim \Dref}{\phi(x) \cdot y}$.} Similarly, we expand this expectation as follows:
    \begin{align*}
        \Ex{x \sim \calD}{\phi(x)f(x)}
    &=  \sum_{x \in \calX}\calD(x)\phi(x)f(x)\\
    &=  \frac{1}{Z}\sum_{x \in \calX}\calD_x(x)\phi(x)f(x)\cdot\left[(1 - p) \cdot y(x) \cdot f(x) + p \cdot(1 - y(x))\cdot (1 - f(x))\right]\\
    &=  \frac{1 - p}{Z}\sum_{x \in \calX}\calD_x(x)\phi(x)y(x) \cdot f(x)\\
    &=  \frac{X_2}{\alpha \cdot p},
    \end{align*}
    where the third step applies $[f(x)]^2 = f(x)$ and $f(x)(1 - f(x)) = 0$, and the last step introduces shorthands $\alpha \coloneqq \frac{Z}{p(1-p)}$ and
    \[
        X_2 \coloneqq \sum_{x \in \calX}\calD_x(x)\phi(x)y(x) \cdot f(x).
    \]

    Over the randomness of $f$, $X_2$ is a sum of $|\calX|$ independent random variables, and has an expectation of
    \[
        \mu_2 \coloneqq \Ex{}{X_2}
    =   \sum_{x \in \calX}\calD_x(x)\phi(x)y(x) \cdot \Ex{}{f(x)}
    =   p \cdot \sum_{x \in \calX}\calD_x(x)\phi(x)y(x)
    =   p \cdot \Ex{(x, y) \sim \Dref}{\phi(x) \cdot y}.
    \]
    By Hoeffding's inequality, for every $\eps_1 \ge 0$, it holds with probability at least $1 - 2e^{-\Omega(\eps_1^2 / \|\calD_x\|_2^2)}$ that $|X_2 - \mu_2| \le \eps_1$. By \Cref{lemma:Z-concentration}, for every $\eps_2 \in [0, 1/2]$, it holds with probability at least $1 - 2e^{-\Omega(\eps_2^2\cdot p(1-p) / \|\calD_x\|_2^2)}$ that $\alpha \in [1 - \eps_2, 1 + \eps_2]$.

    Recall that our goal is to upper bound the difference between $\Ex{x \sim \calD}{\phi(x)f(x)} = X_2 / (\alpha \cdot p)$ and $\Ex{(x, y) \sim \Dref}{\phi(x) \cdot y} = \mu_2 / p$. By the triangle inequality, we have
    \[
        \left|\Ex{x \sim \calD}{\phi(x)f(x)} - \Ex{(x, y) \sim \Dref}{\phi(x) \cdot y}\right|
    \le \left|\frac{X_2}{\alpha \cdot p} - \frac{\mu_2}{\alpha \cdot p}\right| + \left|\frac{\mu_2}{\alpha \cdot p} - \frac{\mu_2}{p}\right|.
    \]
    Assuming $|X_2 - \mu_2| \le \eps_1$ and $\alpha \ge 1 - \eps_2 \ge 1/2$, the first term above is at most
    \[
        \frac{1}{\alpha} \cdot \frac{1}{p} \cdot |X_2 - \mu_2|
    \le \frac{2\eps_1}{p},
    \]
    while the second term reduces to
    \[
        \frac{\mu_2}{p} \cdot \left|\frac{1}{\alpha} - 1\right|
    =   \Ex{(x, y) \sim \Dref}{\phi(x) \cdot y} \cdot \left|\frac{1}{\alpha} - 1\right|
    \le 2|\alpha - 1|
    \le 2\eps_2.
    \]
    The second claim then follows from the union bound if we set $\eps_1 = \eps p / 4$ and $\eps_2 = \eps / 4$.
    
    \paragraph{The third claim: $\Ex{x \sim \calD}{\phi(x)f(x)f(\pi(x))} \approx p \cdot \Ex{(x, y) \sim \Dref}{\phi(x) \cdot y}$.} Similar to the previous two claims, we write
    \begin{align*}
        \Ex{x \sim \calD}{\phi(x)f(x)f(\pi(x))}
    &=  \sum_{x \in \calX}\calD(x)\phi(x)f(x)f(\pi(x))\\
    &=  \frac{1}{Z}\sum_{x \in \calX}\calD_x(x)\phi(x)f(x)f(\pi(x))\cdot\left[(1 - p) \cdot y(x) \cdot f(x) + p \cdot(1 - y(x))\cdot (1 - f(x))\right]\\
    &=  \frac{1 - p}{Z}\sum_{x \in \calX}\calD_x(x)\phi(x)y(x) \cdot f(x)f(\pi(x)) \tag{$f(x) \in \{0, 1\}$}\\
    &=  \frac{X_3}{\alpha \cdot p},
    \end{align*}
    where $\alpha \coloneqq \frac{Z}{p(1-p)}$ and
    \[
        X_3 \coloneqq \sum_{x \in \calX}\calD_x(x)\phi(x)y(x) \cdot f(x)f(\pi(x)).
    \]
    The expectation of $X_3$ is given by
    \[
        \mu_3 \coloneqq \Ex{}{X_3}
    =   \sum_{x \in \calX}\calD_x(x)\phi(x)y(x) \cdot \Ex{}{f(x)f(\pi(x))}
    =   p^2\sum_{x \in \calX}\calD_x(x)\phi(x)y(x)
    =   p^2\cdot\Ex{(x, y) \sim \Dref}{\phi(x) \cdot y},
    \]
    where the last step holds since $\pi$ has no fixed points.

    Towards analyzing the concentration of $X_3$, an obstacle is that the $|\calX|$ random variables $\{f(x) f(\pi(x)): x \in \calX\}$ are not independent. As in the proof of \Cref{lemma:type-12}, our workaround is to partition them into three parts, each of which consists of independent random variables. By taking a three-coloring of the cycle decomposition of $\pi$, we obtain a partition $\calX = \calX_1 \cup \calX_2 \cup \calX_3$ with the property that, for every $i \in \{1, 2, 3\}$ and two different elements $x, x' \in \calX_i$,
    \[
        |\{x, \pi(x), x', \pi(x')\}| = 4.
    \]
    This in turn implies that, for each $i$, the random variables $\{f(x)f(\pi(x)): x \in \calX_i\}$ are independent.

    Then, for each $i \in \{1, 2, 3\}$, we define
    \[
        X_{3, i} \coloneqq \sum_{x \in \calX_i}\calD_x(x)\phi(x)y(x) \cdot f(x)f(\pi(x))
    \]
    and let $\mu_{3, i} \coloneqq \Ex{}{X_{3, i}}$ be its mean. Clearly, it holds that $X_3 = X_{3,1} + X_{3,2} + X_{3,3}$ and $\mu_3 \coloneqq \Ex{}{X_3} = \mu_{3,1} + \mu_{3,2} + \mu_{3,3}$. Applying Hoeffding's inequality to each $X_{3,i}$ and then applying the union bound gives, for every $\eps_1 \ge 0$,
    \begin{align*}
        \pr{}{|X_3 - \mu_3| \le \eps_1}
    &\ge1 - \sum_{i=1}^{3}\pr{}{|X_{3,i} - \mu_{3,i}| \ge \frac{\eps_1}{3}}\\
    &\ge1 - \sum_{i=1}^{3}2\exp\left(-\Omega\left(\frac{\eps_1^2}{\sum_{x \in \calX_i}[\calD_x(x)]^2}\right)\right)
    \ge 1 - 6e^{-\Omega(\eps_1^2 / \|\calD_x\|_2^2)}.
    \end{align*}
    Furthermore, by \Cref{lemma:Z-concentration}, it holds for every $\eps_2 \in [0, 1/2]$ that $\pr{}{\alpha \in [1 - \eps_2, 1 + \eps_2]} \ge 1 - 2e^{-\Omega(\eps_2^2\cdot p(1-p) / \|\calD_x\|_2^2)}$.

    Again, we bound the difference between $\Ex{x \sim \calD}{\phi(x)f(x)f(\pi(x))} = \frac{X_3}{\alpha \cdot p}$ and $p \cdot \Ex{(x, y) \sim \Dref}{\phi(x) \cdot y} = \mu_3 / p$ using the triangle inequality:
    \[
        \left|\Ex{x \sim \calD}{\phi(x)f(x)f(\pi(x))} - p \cdot \Ex{(x, y) \sim \Dref}{\phi(x) \cdot y}\right|
    \le \left|\frac{X_3}{\alpha \cdot p} - \frac{\mu_3}{\alpha \cdot p}\right| + \left|\frac{\mu_3}{\alpha \cdot p} - \frac{\mu_3}{p}\right|.
    \]
    Assuming $|X_3 - \mu_3| \le \eps_1$ and $\alpha \in [1 - \eps_2, 1 + \eps_2]$, the first term above is at most
    \[
        \frac{1}{\alpha} \cdot \frac{1}{p} \cdot |X_3 - \mu_3|
    \le \frac{2\eps_1}{p},
    \]
    while the second term is at most
    \[
        \frac{\mu_3}{p} \cdot \left|\frac{1}{\alpha} - 1\right|
    =   p \cdot \Ex{(x, y) \sim \Dref}{\phi(x) \cdot y} \cdot \left|\frac{1}{\alpha} - 1\right|
    \le 2|\alpha - 1|
    \le 2\eps_2.
    \]
    Setting $\eps_1 = p\eps / 4$ and $\eps_2 = \eps / 4$ proves the third claim.
\end{proof}

Now, we put everything together and prove our main result on MQ-SQ algorithms for testable learning.

\begin{proposition}\label{prop:MQ-SQ-TLQ-implies-SQ-refutation}
    Let $\calC$ be a concept class of boolean functions over instance space $\calX$. Suppose that there is a $(c, \eps, \delta)$-PAC MQ-SQ algorithm that testably learns $\calC$ over distribution family $\calF \subseteq \Delta(\calX)$ using at most $q$ queries to an MQ-SQ oracle with tolerance $\tau > 0$. Then, there is an algorithm that solves biased-$(\alpha, \eta)$-refutation on $\calC$ over the same distribution family $\calF$ by making at most $q' = q + O(1)$ queries to an SQ oracle with tolerance $\tau' = \tau / 4$ and has a failure probability of at most $\delta' = \delta + O(q) \cdot e^{-\Omega(\tau^2 B)}$, assuming the following:
    \begin{enumerate}
        \item$\alpha > c\eta + \eps + (c + 4)\tau + 6\tau'$;
        \item $B \le p^2(1-p)^2 / \|\calD_x\|_2^2$ for every $\calD_x \in \calF$, where $p = \Ex{(x, y) \sim \Dref}{y}$ is the average label in the refutation instance;
        \item $B \le 1 / \|\Dstar\|_2^2$ holds for all MQ-SQs of Types I~and~II that $\calA$ makes.
    \end{enumerate}
\end{proposition}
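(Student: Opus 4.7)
The plan is to simulate the MQ-SQ testable learner $\calA$ on the conceptual TL-Q instance from \Cref{sec:reduction}: draw the target function $f$ as a random $p$-biased function (\emph{without ever materializing it}) and use the filtered distribution $\calD$ of~\eqref{eq:TLQ-dist} as the marginal, where $p = \Ex{(x, y) \sim \Dref}{y}$. The five MQ-SQ types were engineered precisely to match the substitutions afforded by \Cref{lemma:type-12} and \Cref{lemma:type-345}, so answering each one will cost at most one SQ to $\Dref$.

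First I would spend one SQ with $\phi(x, y) = y$ to obtain $\hat p$ satisfying $|\hat p - p| \le \tau'$. Then I would run $\calA$ and answer each of its MQ-SQs by the natural substitution suggested by the lemmas: Type~I by $\hat p \cdot \Ex{x \sim \Dstar}{\phi(x)}$ and Type~II by $\hat p^2 \cdot \Ex{x \sim \Dstar}{\phi(x)}$ (no SQs), and Types~III, IV, V using one SQ each for $\Ex{\Dref}{\phi(x)}$, $\Ex{\Dref}{\phi(x) \cdot y}$, and $\hat p \cdot \Ex{\Dref}{\phi(x) \cdot y}$ respectively. Under assumptions~2~and~3, each such answer lies within $\tau$ of the true MQ-SQ value except with probability $e^{-\Omega(\tau^2 B)}$ over the randomness of $f$. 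If $\calA$ outputs $\bot$, I return $\structure$; otherwise I spend one more SQ to estimate $\err_{\Dref}(h)$ within $\tau'$, and threshold against a value strictly between $c\eta + \eps + (c+4)\tau + 5\tau'$ and $\alpha - \tau'$, the gap provided by assumption~1. The total count is $q + O(1)$ SQs.

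For correctness, in the $\noise$ case one checks directly from~\eqref{eq:TLQ-dist} that $\calD = \calD_x \in \calF$, so completeness forces $\calA$ to output some $h$; but since the labels in $\Dref$ are a label-independent $\Bern(p)$, every $h$ satisfies $\err_{\Dref}(h) \ge \min(p, 1-p) \ge \alpha$, and the estimate exceeds the threshold. In the $\structure$ case, let $g^\star \in \calC$ witness $\err_{\Dref}(g^\star) \le \eta$. Applying \Cref{clm:error-blowup-better} to $g^\star$ gives $\dist_{\calD_x}(g^\star, f) \le \eta + O(\tau')$, so $\opt_\calD \le \eta + O(\tau')$. Since the simulated oracle answers depend only on $\Dref$, the test functions, and $\hat p$, and \emph{not} on $f$, the hypothesis $h$ returned by $\calA$ is independent of $f$; conditioned on accurate oracle answers, TL-Q soundness gives $\dist_\calD(h, f) \le c\eta + O(c\tau') + \eps$, and a second application of \Cref{clm:error-blowup-better} to $h$ yields $\err_{\Dref}(h) \le \dist_\calD(h, f) + O(\tau')$, below the threshold.

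The main technical obstacle will be tracking all the failure events into the advertised bound of $\delta + O(q) \cdot e^{-\Omega(\tau^2 B)}$ and tuning the internal tolerances so that the accumulated slack (TL-Q $\eps$, two applications of \Cref{clm:error-blowup-better}, the error in $\hat p$, and the error in $\hat e$) lines up exactly with the $(c+4)\tau + 6\tau'$ cushion in assumption~1. Conceptually, this is just the SQ-analogue of \Cref{thm:tlq-refutation}, with \Cref{lemma:type-12} and \Cref{lemma:type-345} replacing the sampling-based filtering step used there.
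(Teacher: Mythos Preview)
Your plan is essentially the paper's proof: simulate $\calA$ on the random-$f$/filtered-$\calD$ instance, answer MQ-SQs exactly as you describe (this matches the paper line-for-line), and threshold on the error of the returned hypothesis.

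The one substantive difference is the final error check. You spend one SQ to estimate $\err_{\Dref}(h)$ directly; the paper instead makes three \emph{additional MQ-SQs} on behalf of $\calA$, namely $\Ex{x\sim\calD}{h(x)}$, $\Ex{x\sim\calD}{f(x)}$, $\Ex{x\sim\calD}{h(x)f(x)}$, and combines them into an estimate of $\dist_\calD(h,f)$. Since those three are Types~III/IV queries, they are answered via the same simulation rules and automatically fall under the ``all MQ-SQ answers are $\tau$-accurate'' good event. The payoff is that the structure-case bound $\widehat\mu \le \dist_\calD(h,f)+4\tau\le c\eta+\eps+(c+4)\tau$ follows immediately from TL-Q soundness plus the good event, with no further appeal to \Cref{clm:error-blowup-better}.

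Your route instead needs the \emph{reverse} inequality $\err_{\Dref}(h)\le\dist_\calD(h,f)+O(\tau)$ for a fixed $h$ independent of $f$. \Cref{clm:error-blowup-better} as stated only gives $\dist_\calD(g,f)\le\err_{\Dref}(g)+\delta$; you are invoking it in the wrong direction. The reverse is true and follows from the same Hoeffding argument (the quantity $W/Z$ concentrates around $\err_{\Dref}(g)$ on both sides), but you would need to state and prove that separately. If you patch this---either by proving the two-sided version of the claim or by switching to the paper's three-MQ-SQ trick for the final check---the rest of your argument and your threshold arithmetic go through as written.
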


\begin{proof}
    Let $\calA$ denote the hypothetical MQ-SQ algorithm that testably learns $\calC$. We construct a new algorithm, denoted by $\calA'$, that refutes $\calC$ using an SQ oracle by simulating the execution of $\calA$.

    Recall that we defined $p = \Ex{(x, y) \sim \Dref}{y}$. As the first step of the algorithm, $\calA'$ queries the SQ oracle (\Cref{def:SQ-oracle}) with $\phi(x, y) = y$ to obtain an estimate $\widehat p \in [p - \tau', p + \tau']$ for $p$.
   
    \paragraph{Handling queries.} Whenever the simulated copy of $\calA$ makes an MQ-SQ, $\calA'$ answers the query as follows:
    \begin{itemize}
        \item When $\calA$ makes a Type~I query on $\Ex{x \sim \Dstar}{\phi(x)f(x)}$, $\calA'$ returns $\widehat p \cdot \Ex{x \sim \Dstar}{\phi(x)}$.
        \item When $\calA$ makes a Type~II query on $\Ex{x \sim \Dstar}{\phi(x)f(x)f(\pi(x))}$, $\calA'$ returns ${\widehat p}^2 \cdot \Ex{x \sim \Dstar}{\phi(x)}$.
        \item When $\calA$ makes a Type~III query on $\Ex{x \sim \calD}{\phi(x)}$, $\calA'$ queries the SQ oracle on the value of $\mu = \Ex{(x, y) \sim \Dref}{\phi(x)}$ and then forwards the answer $\widehat\mu \in [\mu - \tau', \mu + \tau']$ to $\calA$.
        \item When $\calA$ makes a Type~IV query on $\Ex{x \sim \calD}{\phi(x)f(x)}$, $\calA'$ queries the SQ oracle on the value of $\mu = \Ex{(x, y) \sim \Dref}{\phi(x) \cdot y}$ and then forwards the answer $\widehat\mu \in [\mu - \tau', \mu + \tau']$ to $\calA$.
        \item When $\calA$ makes a Type~V query on $\Ex{x \sim \calD}{\phi(x)f(x)f(\pi(x))}$, $\calA'$ queries the SQ oracle on the value of $\mu = \Ex{(x, y) \sim \Dref}{\phi(x) \cdot y}$ and then forwards the answer $\widehat\mu \in [\mu - \tau', \mu + \tau']$ multiplied by $\widehat p$ to $\calA$.
    \end{itemize}
    In the first two cases, $\calA'$ can exactly compute the expectations since it has full knowledge of $\phi: \calX \to [0, 1]$ and $\Dstar \in \Delta(\calX)$.

    \paragraph{Decision rule.} When $\calA$ terminates, $\calA'$ decides on the refutation instance as follows:
    \begin{itemize}
        \item If $\calA$ rejects the TL-Q instance, $\calA'$ returns $\structure$, indicating that some $f^* \in \calC$ has an error $\le \eta$ on $\Dref$.
        \item If $\calA$ accepts and returns a function $\widehat f: \calX \to \zo$, $\calA'$ makes the following three additional MQ-SQs on behalf of $\calA$ and answer them as described above:
        \[
            \Ex{x \sim \calD}{\widehat f(x)},
        \quad
            \Ex{x \sim \calD}{f(x)},
        \quad \text{and} \quad
            \Ex{x \sim \calD}{\widehat f(x) f(x)}.
        \]
        Let $\widehat\mu_1, \widehat\mu_2, \widehat\mu_3$ denote the answers to the three queries, and let
        \[
            \widehat \mu = \widehat\mu_1 + \widehat\mu_2 - 2\widehat\mu_3
        \]
        be a weighted sum of them. Note that $\widehat \mu$ is intended to be an estimate of
        \[
            \Ex{x \sim \calD}{\widehat f(x) + f(x) - 2\widehat f(x) f(x)}
        =   \pr{x \sim \calD}{\widehat f(x) \ne f(x)}.
        \]
        Finally, $\calA'$ outputs $\noise$ (indicating that the labels are random) if $\widehat\mu \ge \min\{\widehat p, 1 - \widehat p\} - 5\tau'$, and outputs $\structure$ otherwise.
    \end{itemize}

    \paragraph{Overview of analysis.} We first upper bound the number of SQs that $\calA'$ makes. By construction, $\calA'$ queries the SQ oracle at most once for every MQ-SQ made by $\calA$. In addition, $\calA'$ makes one query at the beginning and at most three queries at the end. Thus, $\calA'$ makes at most $q' = q + O(1)$ queries in total.

    To analyze the correctness of $\calA'$, let $f: \calX \to \zo$ be a random $p$-biased function obtained by independently drawing the function value $f(x)$ from $\Bern(p)$ for each $x \in \calX$. Note that $f$ is only for the analysis; it is never used in algorithm $\calA'$. Also, let $\calD$ denote the distribution over $\calX$ induced by $\Dref$ and $f$ (see \Cref{eq:TLQ-dist}). We will first argue that, with high probability, the simulated copy of $\calA$ effectively runs on an instance of testable learning with target function $f$ and marginal distribution $\calD$. We will then show that the decision made by $\calA'$ is correct due to the intended behavior of $\calA$ on such an instance.

    \paragraph{A good event.} Let $\Egood$ be the ``good event'' that the three conditions below hold simultaneously:
    \begin{itemize}
        \item The simulated execution of $\calA$ coincides with its execution on the testable learning instance $(f, \calD)$ using an MQ-SQ oracle with tolerance $\tau$. In other words, every MQ-SQ made by $\calA$ is answered up to an additive error of $\tau$.
        \item If the first condition holds, the output of $\calA$ is valid with respect to the testable learning instance.
        \item If there exists $f^* \in \calC$ that satisfies $\pr{(x, y) \sim \Dref}{f^*(x) \ne y} \le \eta$ (i.e., we are in the $\structure$ case), it holds that $\pr{x \sim \calD}{f^*(x) \ne f(x)} \le \eta + \tau$.
    \end{itemize}
    By \Cref{lemma:type-12,lemma:type-345}, for each MQ-SQ made by $\calA$, the first condition gets violated with probability at most $8e^{-\Omega(\tau^2B)}$, where $B$ is the minimum between the value of $1/\|\Dstar\|_2^2$ (among all queries of Types I~and~II) and $p^2(1-p)^2 / \|\calD_x\|_2^2$. Applying the union bound to the $\le q + 4$ queries shows that the first condition gets violated with probability at most $O(q) \cdot e^{-\Omega(\tau^2B)}$. Since $\calA$ is assumed to be $(c, \eps, \delta)$-PAC, the second condition gets violated with probability at most $\delta$. Applying \Cref{clm:error-blowup-better} with $\delta = \tau$ shows that the third condition gets violated with probability at most $e^{-\Omega(\tau^2p^2(1-p)^2 / \|\calD_x\|_2^2)} \le e^{-\Omega(\tau^2B)}$. Applying the union bound again gives
    \[
        \pr{}{\Egood} \ge 1 - \delta - O(q) \cdot e^{-\Omega(\tau^2 B)}.
    \]
    In the rest of the proof, we show that event $\Egood$ implies that $\calA'$ decides correctly.

    \paragraph{Proof of completeness.} Suppose that some $f^* \in \calC$ satisfies $\pr{(x, y) \sim \Dref}{f^*(x) \ne y} \le \eta$, where $\eta$ is the parameter of the refutation instance (\Cref{def:refutation}). The third condition of the good event $\Egood$ implies that the same $f^*$ has with an error $\le 2\eta$ over $\calD$. Then, the testable learner $\calA$ may output either $\bot$ or a function $\widehat f: \calX \to \zo$ that satisfies
    \[
        \pr{x \sim \calD}{\widehat f(x) \ne f(x)} \le c \cdot \pr{x \sim \calD}{f^*(x) \ne f(x)} + \eps \le c(\eta + \tau) + \eps.
    \]

    In the former case, $\calA'$ would correctly output $\structure$. For the latter case, applying the identity $\1{b_1 \ne b_2} = b_1 + b_2 - 2b_1b_2$ for $b_1, b_2 \in \zo$ gives
    \[
        \pr{x \sim \calD}{\widehat f(x) \ne f(x)}
    =   \Ex{x \sim \calD}{\widehat f(x)} + \Ex{x \sim \calD}{f(x)} - 2\Ex{x \sim \calD}{\widehat f(x) f(x)}.
    \]
    Assuming event $\Egood$, $\calA'$ obtains an estimate of each of the three expectations on the right-hand side above up to an additive error of $\tau$. It then follows that the value of $\widehat\mu$ computed at the end satisfies
    \[
        \widehat\mu \le \pr{x \sim \calD}{\widehat f(x) \ne f(x)} + 4\tau \le c\eta + \eps + (c + 4)\tau.
    \]
    Since $\min\{p, 1 - p\} \ge \alpha > c\eta + \eps + (c + 4)\tau + 6\tau'$, we have
    \[
        \widehat\mu
    <   \min\{p, 1-p\} - 6\tau'
    \le \min\{\widehat p, 1 - \widehat p\} - 5\tau'
    \]
    in this case, and $\calA'$ would correctly output $\structure$.

    \paragraph{Proof of soundness.} Suppose that the distribution $\Dref$ in the refutation instance is the product distribution of some $\calD_x \in \calF$ and $\Bern(p)$. Then, the resulting marginal distribution $\calD$ in the testable learning instance is exactly $\calD_x \in \calF$. Thus, assuming that $\calA$ is correct, $\calA$ would accept and output a function $\widehat f: \calX \to \zo$. Then, at the end of $\calA'$, we compute
    \[
        \widehat\mu \approx \Ex{x \sim \calD}{\widehat f(x)} + \Ex{x \sim \calD}{f(x)} - 2\Ex{x \sim \calD}{\widehat f(x) f(x)}.
    \]
    By the way in which $\calA'$ handles the MQ-SQs, the three terms
    \[
        \Ex{x \sim \calD}{\widehat f(x)},
    \Ex{x \sim \calD}{f(x)}, \text{and}
    \Ex{x \sim \calD}{\widehat f(x)f(x)}
    \]
    are approximated with
    \[
        \Ex{(x, y) \sim \Dref}{\widehat f(x)},
    \Ex{(x, y) \sim \Dref}{y}, \text{and}
    \Ex{(x, y) \sim \Dref}{\widehat f(x) \cdot y},
    \]
    respectively. All the three values are obtained from querying the SQ oracle. Since the SQ oracle has a tolerance of $\tau'$, the value of $\widehat\mu$ is within an additive error of $4\tau'$ to
    \[
        \Ex{(x, y) \sim \Dref}{\widehat f(x) + y - 2\widehat f(x) \cdot y}
    =   \pr{(x, y) \sim \Dref}{\widehat f(x) \ne y},
    \]
    which is exactly the error of $\widehat f$ on distribution $\Dref$. Since $\Dref$ is the product of $\calD_x$ and $\Bern(p)$, regardless of the choice of $\widehat f$, $\pr{(x, y) \sim \Dref}{\widehat f(x) \ne y}$ is at least $\min\{p, 1 - p\}$. Together with the fact that $|p - \widehat p| \le \tau'$, this further implies
    \[
        \widehat\mu \ge \min\{p, 1-p\} - 4\tau' \ge \min\{\widehat p, 1 - \widehat p\} - 5\tau'.
    \]
    Thus, $\calA'$ would correctly output $\noise$.
\end{proof}

\subsection{SQ Refutation Implies SQ Weak Learning}
We show that an SQ algorithm for refutation implies an SQ algorithm for weakly learning the same concept class. Later, using the equivalence between SQ dimension and weak learning~\cite{BFJKMR94}, we can lift SQ-dimension lower bounds to lower bounds against SQ-based refutation. By \Cref{prop:MQ-SQ-TLQ-implies-SQ-refutation}, this further leads to lower bounds against MQ-SQ algorithms for testable learning with queries.

We first recall the definition of SQ-based weak learning.
\begin{definition}[SQ Weak learning]
\label{def:weak-learning}
    Let $\calC \subseteq \{f: \calX \to \zo\}$ be a concept class over a finite instance space $\calX$. Let $\calD$ be a given distribution over $\calX$ and $f^* \in \calC$ be an unknown target function. An SQ oracle for $(f^*, \calD)$ with tolerance $\tau \ge 0$ answers queries of form $\Ex{x \sim \calD}{\phi(x) \cdot f^*(x)}$ up to an additive error of $\tau$, where $\phi: \calX \to [0, 1]$ is any given test function. An SQ algorithm $\eps$-weakly learns $\calC$ if it, by making queries to an SQ oracle, with probability $\ge 2/3$ outputs a classifier $\widehat f: \calX \to \zo$ that satisfies
    \[
        \pr{x \sim \calD}{\widehat f(x) \ne f^*(x)} \le \frac{1}{2} - \eps.
    \]
\end{definition}

Intuitively, to solve refutation (\Cref{def:refutation}) using an SQ algorithm, in the ``structure'' case that some $f^* \in \calC$ has a low error, the algorithm must query the SQ oracle using a test function that has a non-trivial correlation with $f^*$. Such a test function would then allow us to learn the unknown function up to an error that is better than random guessing.

\begin{proposition}
\label{prop:SQ-refutation-implies-SQ-weak-learning}
    Suppose that an SQ algorithm solves biased-$(\alpha, \eta)$-refutation for concept class $\calC$ on distribution $\calD$ by making at most $q$ queries with tolerance $\tau \ge 0$. Then, assuming that $\alpha \le 1/2 - \Omega(\tau)$, there is an SQ algorithm that $\Omega(\tau)$-weakly learns $\calC$ on $\calD$ by making at most $q' = O(q + 1/\tau)$ queries to an SQ oracle with tolerance $\tau' = \Omega(\tau)$.
\end{proposition}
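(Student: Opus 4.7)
The plan is to simulate the refutation algorithm $\calA_R$ on an implicit ``structure'' instance where the refutation distribution $\Dref$ is $\calD$ labeled by the target $f^*$, and to compare its queries against those that would arise on an imagined ``noise'' instance $\calD \times \Bern(1/2)$. The hypothesis $\alpha \le 1/2 - \Omega(\tau)$ ensures $1/2 \in [\alpha, 1-\alpha]$, so this noise instance is a legal input to $\calA_R$. For each test function $\phi_i: \calX \times \zo \to [0,1]$ that $\calA_R$ issues, I would estimate two quantities using the weak-learning SQ oracle for $(f^*, \calD)$:
\[
a_i := \Ex{x \sim \calD}{\phi_i(x, f^*(x))} = \Ex{x \sim \calD}{\phi_i(x, 0)} + \Ex{x \sim \calD}{[\phi_i(x, 1) - \phi_i(x, 0)] \cdot f^*(x)},
\]
and $n_i := \Ex{x \sim \calD}{[\phi_i(x,0) + \phi_i(x,1)]/2}$. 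Each estimate costs $O(1)$ SQ queries (adopting the usual convention that marginals of $\calD$ are also queriable, or reducing to such via the standard $\phi \mapsto \phi f^* + \phi(1-f^*)$ trick).

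The key idea is to feed $\calA_R$ the midpoint answer $r_i := (a_i + n_i)/2$ at each step, while monitoring the gap $|a_i - n_i|$. If at some round $|a_i - n_i|$ exceeds $\Omega(\tau)$, I halt and extract a correlation witness; otherwise, the midpoint $r_i$ is simultaneously a valid $\tau$-tolerance answer for both the structure and the noise instance. If the run ever terminates without halting, then the same execution of $\calA_R$ is a legal execution on both kinds of input, so completeness would force $\calA_R$ to output $\structure$ and soundness would force it to output $\noise$ --- a contradiction. Hence, with probability at least $1/3$, the procedure halts at some index $i^\star$ with
\[
\left|\Ex{x \sim \calD}{[\phi_{i^\star}(x, 1) - \phi_{i^\star}(x, 0)](f^*(x) - 1/2)}\right| = |a_{i^\star} - n_{i^\star}| = \Omega(\tau).
\]

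Setting $\psi(x) := \phi_{i^\star}(x, 1) - \phi_{i^\star}(x, 0) \in [-1, 1]$ (negated if needed so the correlation is positive), the randomized classifier that outputs $1$ with probability $(1 + \psi(x))/2$ has classification error exactly $1/2 - \Ex{x \sim \calD}{\psi(x)(f^*(x) - 1/2)} \le 1/2 - \Omega(\tau)$ on $f^*$. To derandomize, I would use the standard observation that this randomized classifier is a convex combination of the deterministic thresholds $\widehat f_\theta(x) := \1{\psi(x) \ge \theta}$, so some $\theta^\star$ achieves error at most $1/2 - \Omega(\tau)$. I would locate $\theta^\star$ by scanning an $O(\tau)$-grid of thresholds and estimating the error of each $\widehat f_\theta$ via $O(1)$ SQ queries at tolerance $\Omega(\tau)$, adding $O(1/\tau)$ queries. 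In total this gives $q' = O(q + 1/\tau)$ queries at tolerance $\tau' = \Omega(\tau)$, as desired.

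The main obstacle I expect is formalizing the midpoint coupling. Because $\calA_R$'s choice of query $\phi_i$ depends on past answers, I must carefully bookkeep that so long as $|a_j - n_j| \le O(\tau)$ for all $j < i$, feeding $\calA_R$ the midpoints $r_j$ produces the same query $\phi_i$ as would be produced under any other valid-tolerance assignment, and that the finite-tolerance estimates $\widehat a_i, \widehat n_i$ I actually compute slacken the bounds only by controllable constants. Once this coupling is set up correctly, the conversion of the $\psi$-witness into a weak classifier through randomized-to-threshold derandomization is classical and raises no further difficulty.
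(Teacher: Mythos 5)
Your high-level plan---simulate the refutation algorithm, feed answers that are simultaneously valid for the ``structure'' instance (labels $f^*$) and an imaginary ``noise'' instance, argue via coupling that the two must diverge on some query with $\Omega(\tau)$ correlation, and round the diverging test function into a weak classifier---is correct and matches the paper's overall strategy, including the same style of coupling argument (the paper also formalizes the ``forgiving''/imaginary-copy bookkeeping you flag as the main obstacle). But you make two genuinely different choices. First, you compare against the noise distribution $\calD \times \Bern(1/2)$ rather than $\calD \times \Bern(p)$ for $p := \Ex{x}{f^*(x)}$, as the paper does. This is a real simplification: your correlation witness $\Ex{x}{\psi(x)(f^*(x)-\tfrac12)}$ converts directly into the error bound $\tfrac12 - \Ex{x}{\psi(x)(f^*(x)-\tfrac12)}$ for the randomized classifier $h(x)\sim\Bern((1+\psi(x))/2)$, with no restriction on the bias of $f^*$, whereas the paper needs a separate preliminary step to output a constant hypothesis when $p$ is far from $\tfrac12$, and its rounding lemma (\Cref{lemma:rounding}) degrades with $|p-\tfrac12|$. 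Second, you derandomize by scanning thresholds $\widehat f_\theta = \1{\psi \ge \theta}$; the paper instead samples a random boolean realization $\widetilde\phi$ of the classifier and repeats $O(1/\tau)$ times via Markov. Your deterministic choice is arguably cleaner (no need to represent or re-query an exponentially large random table).

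However, your derandomization step has a real gap. You correctly note that $h$ is a mixture of $\widehat f_\theta$ over $\theta$ uniform in $[-1,1]$, so the continuum of thresholds has average error $\tfrac12 - \Omega(\tau)$ and hence some $\theta^\star$ is good. But it does not follow that a grid point is good: the set of good $\theta$ has Lebesgue measure $\Omega(\tau)$, yet this set can consist entirely of short intervals nestled strictly between consecutive grid points (for instance if $\psi$ takes values clustered just above/below a grid node), so every grid threshold can have error close to $\tfrac12$. The fix is easy but must be stated: first round $\psi$ (equivalently the acceptance probabilities $(1+\psi)/2$) to the nearest multiple of $\tau_0 = \Theta(\tau)$. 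This costs at most $O(\tau_0)$ in the correlation, and makes $\theta \mapsto e(\widehat f_\theta)$ piecewise constant on intervals of length $\ge \tau_0$, so the grid of multiples of $\tau_0$ provably contains a threshold with error $\le \tfrac12 - \Omega(\tau)$. Alternatively you could fall back to the paper's resampling argument. With that fix (and the careful constant bookkeeping on $\widehat a_i,\widehat n_i$ that you already flag), the argument is complete and the query counts $O(q+1/\tau)$ at tolerance $\Omega(\tau)$ come out as claimed.
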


To prove the proposition, we will use the following simple fact: If a $[0, 1]$-valued function has a non-trivial correlation with a sufficiently balanced boolean function, it can be rounded into a random binary classifier with a non-trivial accuracy in expectation. (If the boolean function is far from balanced, it can be easily learned by a constant function.)

\begin{lemma}
\label{lemma:rounding}
    The following holds for every $\delta \ge 0$, distribution $\calD$ over $\calX$, and binary function $f^*: \calX \to \zo$ with mean $p \coloneqq \Ex{x \sim \calD}{f^*(x)} \in [1/2 - \gamma, 1/2 + \gamma]$: Suppose that function $\phi: \calX \to [0, 1]$ satisfies $\Ex{x \sim \calD}{\phi(x) f^*(x)} - p \cdot \Ex{x \sim \calD}{\phi(x)} \ge \delta$. Then, for the random function $\widetilde\phi: \calX \to \zo$ obtained from sampling each $\widetilde\phi(x)$ from $\Bern(\phi(x))$ independently, we have
    \[
        \Ex{\widetilde\phi}{\pr{x \sim \calD}{\widetilde\phi(x) \ne f^*(x)}} \le \frac{1}{2} - (2\delta - 3\gamma).
    \]
    Similarly, if $\Ex{x \sim \calD}{\phi(x) f^*(x)} - p \cdot \Ex{x \sim \calD}{\phi(x)} \le -\delta$, we have
    \[
        \Ex{\widetilde\phi}{\pr{x \sim \calD}{1 - \widetilde\phi(x) \ne f^*(x)}} \le \frac{1}{2} - (2\delta - 3\gamma).
    \]
\end{lemma}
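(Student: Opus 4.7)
The plan is to reduce the lemma to a short algebraic calculation using the identity $\1{b_1 \ne b_2} = b_1 + b_2 - 2b_1 b_2$ for $b_1, b_2 \in \zo$. First I would apply this identity inside the double expectation, swap the order of expectations, and use $\Ex{}{\widetilde\phi(x)} = \phi(x)$ to obtain
\[
    \Ex{\widetilde\phi}{\pr{x \sim \calD}{\widetilde\phi(x) \ne f^*(x)}} = \Ex{x \sim \calD}{\phi(x)} + p - 2\Ex{x \sim \calD}{\phi(x) f^*(x)}.
\]
Plugging in the hypothesis $\Ex{x \sim \calD}{\phi(x) f^*(x)} \ge p \cdot \Ex{x \sim \calD}{\phi(x)} + \delta$ then upper bounds this quantity by $\Ex{x \sim \calD}{\phi(x)} \cdot (1 - 2p) + p - 2\delta$.

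It remains to control the cross term $\Ex{x \sim \calD}{\phi(x)}(1 - 2p)$. Using the assumption $p \in [1/2 - \gamma, 1/2 + \gamma]$ we have $|1 - 2p| \le 2\gamma$, and since $\phi$ is $[0,1]$-valued we have $\Ex{x \sim \calD}{\phi(x)} \in [0, 1]$, so this cross term is at most $2\gamma$ in absolute value. Combining with $p \le 1/2 + \gamma$ yields the desired bound $\tfrac{1}{2} + 3\gamma - 2\delta = \tfrac{1}{2} - (2\delta - 3\gamma)$.

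For the second claim, I would simply apply the first with $\phi$ replaced by $1 - \phi$. Sampling each $\widetilde\phi(x) \sim \Bern(\phi(x))$ independently and then flipping gives a random function distributed as coordinate-wise $\Bern(1 - \phi(x))$, so the first part applies to the classifier $1 - \widetilde\phi$ and the test function $1 - \phi$. The correlation transforms as
\[
    \Ex{x \sim \calD}{(1 - \phi(x)) f^*(x)} - p \cdot \Ex{x \sim \calD}{1 - \phi(x)} = -\Big(\Ex{x \sim \calD}{\phi(x) f^*(x)} - p \cdot \Ex{x \sim \calD}{\phi(x)}\Big) \ge \delta,
\]
so the hypothesis of the first claim is met and the symmetric bound follows immediately. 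No step presents a real obstacle; the whole lemma is bookkeeping with Boolean arithmetic. The only mildly nontrivial point is the accounting for the $3\gamma$ slack, which decomposes as $2\gamma$ from the worst case of $(1 - 2p)\cdot\Ex{x \sim \calD}{\phi(x)}$ plus $\gamma$ from $p$ itself possibly exceeding $1/2$.
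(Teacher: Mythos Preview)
Your proposal is correct and essentially identical to the paper's proof: the same Boolean identity, the same swap of expectations, and the same $2\gamma + \gamma$ accounting for the slack. The paper handles the second claim with a one-line ``by symmetry,'' which your substitution $\phi \mapsto 1 - \phi$ makes explicit.
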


\begin{proof}
    It suffices to prove the first part; the second part follows by symmetry. By the identity $\1{b_1 \ne b_2} = b_1 + b_2 - 2b_1b_2$ for $b_1, b_2 \in \zo$, it holds for every possible realization of $\widetilde\phi: \calX \to \zo$ that
    \[
        \pr{x \sim \calD}{\widetilde\phi(x) \ne f^*(x)}
    =   \Ex{x \sim \calD}{\widetilde\phi(x) + f^*(x) - 2\widetilde\phi(x)f^*(x)}.
    \]
    Taking an expectation over the randomness of $\widetilde\phi$ shows that 
    \begin{align*}
        \Ex{\widetilde\phi}{\pr{x \sim \calD}{\widetilde\phi(x) \ne f^*(x)}}
    &=  \Ex{\widetilde\phi}{\Ex{x \sim \calD}{\widetilde\phi(x) + f^*(x) - 2\widetilde\phi(x)f^*(x)}}\\
    &=  \Ex{x \sim \calD}{\phi(x) + f^*(x) - 2\phi(x)f^*(x)}\\
    &=  \Ex{x \sim \calD}{\phi(x)} + p - 2\Ex{x \sim \calD}{\phi(x)f^*(x)}\\
    &\le p + \Ex{x \sim \calD}{\phi(x)} - 2\left(p\cdot\Ex{x \sim \calD}{\phi(x)} + \delta\right)\\
    &=  p + (1 - 2p)\cdot\Ex{x \sim \calD}{\phi(x)} - 2\delta,
    \end{align*}
    where the fourth step applies the assumption that $\Ex{x \sim \calD}{\phi(x)f^*(x)} - p\cdot\Ex{x \sim \calD}{\phi(x)} \ge \delta$.
    Since $1 - 2p \in [-2\gamma, 2\gamma]$ and $\Ex{x \sim \calD}{\phi(x)} \in [0, 1]$, the second term above is at most $2\gamma$. It follows that the expected error of $\widetilde\phi$ is at most $p + 2\gamma - 2\delta
    \le \left(\frac{1}{2} + \gamma\right) + 2\gamma - 2\delta
    =   \frac{1}{2} - (2\delta - 3\gamma)$.
\end{proof}

\begin{proof}[Proof of \Cref{prop:SQ-refutation-implies-SQ-weak-learning}]
    Let $\calA$ denote the hypothetical SQ algorithm that solves biased-$(\alpha, \eta)$-refutation for $\calC$. Let $\eps, \tau' = \Theta(\tau)$ be sufficiently small such that: (1) $\alpha \le 1/2 - (\eps + 2\tau')$; (2) $\tau \ge 4\eps + 22\tau'$. We construct an SQ algorithm $\calA'$ that weakly learns $\calC$ by simulating $\calA$ on the distribution of $(x, f^*(x))$ where $x \sim \calD$ and $f^* \in \calC$ is the unknown target function in the weak learning instance:
    \begin{itemize}
        \item \textbf{Step 1:} Query the SQ oracle (for the weak learning instance) to estimate the value of $p \coloneqq \Ex{x \sim \calD}{f^*(x)}$ using the constant function $\phi(x) \equiv 1$. Let $\widehat p \in [p - \tau', p + \tau']$ be the output of the oracle. If $\widehat p + \tau' \le 1/2 - \eps$, output the constant function $0$ and terminate. If $\widehat p - \tau' \ge 1/2 + \eps$, output the constant function $1$ and terminate.
        \item \textbf{Step 2:} Simulate the refutation algorithm $\calA$. Whenever $\calA$ tries to query the SQ oracle (for the refutation instance) with test function $\phi: \X \times \zo \to [0, 1]$, consider the function $\Delta: \calX \to [-1, 1]$ defined as $\Delta(x) \coloneqq \phi(x, 1) - \phi(x, 0)$ and $\Delta': \calX \to [0, 1]$ defined as $\Delta'(x) \coloneqq \frac{\Delta(x) + 1}{2}$.
        \item \textbf{Step 3:} Query the SQ oracle (for weak learning) with test function $\Delta'$ to estimate $\mu \coloneqq \Ex{x \sim \calD}{\Delta'(x) \cdot f^*(x)}$. Let $\widehat\mu \in [\mu - \tau', \mu + \tau']$ denote the output of the SQ oracle. Check whether it holds that
        \[
            \left|\widehat\mu - \widehat p \cdot \Ex{x \sim \calD}{\Delta'(x)}\right| \le \frac{\tau}{2} - 2\tau'.
        \]
        If so, we compute $\Ex{x \sim \calD}{\phi(x, 0) + p\cdot \Delta(x)}$, return the result to the refutation algorithm $\calA$, and continue the simulation by going back to Step~2. Otherwise, go to Step~4.
        \item \textbf{Step 4:} We apply \Cref{lemma:rounding} to $\Delta'$ and obtain a randomized boolean function $\widetilde\phi$ from either $\Delta'$ or $1 - \Delta'$. We query the SQ oracle to obtain an estimate $\widehat\eps$ of
        \[
            \pr{x \sim \calD}{\widetilde\phi(x) \ne f^*(x)}
        =   \Ex{x \sim \calD}{\widetilde\phi(x)} + \Ex{x \sim \calD}{f^*(x)} - 2\Ex{x \sim \calD}{\widetilde\phi(x)\cdot f^*(x)}.
        \]
        If $\widehat\eps \le 1/2 - \eps - 3\tau'$, we return the function $\widetilde\phi$. Otherwise, repeat this step.
    \end{itemize}

    If $\calA'$ outputs a constant classifier in the first step, the output clearly has an error $\le 1/2 - \eps$. Thus, we may focus on the case that $|\widehat p - 1/2| \le \eps + \tau'$. Since $|\widehat p - p| \le \tau'$, we must have $|p - 1/2| \le \gamma \coloneqq \eps + 2\tau'$ in this case. The rest of the proof proceeds in the following three steps:
    \begin{itemize}
        \item If $\Delta'$ has a low correlation with $f^*$ (i.e., $\left|\widehat\mu - \widehat p \cdot \Ex{x \sim \calD}{\Delta'(x)}\right| \le \tau - 4\tau'$ holds in Step~3 of $\calA'$), the answer $\Ex{x \sim \calD}{\phi(x, 0) + p \cdot \Delta(x)}$ that we return to $\calA$ is a valid answer for an SQ oracle with tolerance $\tau$.
        \item If $\Delta'$ has a high correlation with $f^*$ (i.e., $\left|\widehat\mu - \widehat p \cdot \Ex{x \sim \calD}{\Delta'(x)}\right| > \tau - 4\tau'$), we will find a good $\widetilde\phi$ without repeating Step~4 too many times.
        \item If $\Delta'$ never has a high correlation with $f^*$, the execution of $\calA$ will be indistinguishable from that in the ``noise'' case of the refutation instance. Therefore, a high-correlation $\Delta'$ must be found with a good probability.
    \end{itemize}

    \paragraph{Low correlation gives accurate answers.} Suppose that, for some test function $\phi: \calX \times \zo \to [0, 1]$ chosen by $\calA$ and the corresponding $\Delta'$, it holds in Step~3 that
    \[
        \left|\widehat\mu - \widehat p \cdot \Ex{x \sim \calD}{\Delta'(x)}\right| \le \frac{\tau}{2} - 2\tau'.
    \]
    Recall that $\widehat\mu$ is within an additive error of $\tau'$ to $\mu = \Ex{x \sim \calD}{\Delta'(x) \cdot f^*(x)}$ and $\widehat p$ is within error $\tau'$ to $p = \Ex{x \sim \calD}{f^*(x)}$. We have
    \[
        \left|\Ex{x \sim \calD}{\Delta'(x) \cdot f^*(x)} - p \cdot \Ex{x \sim \calD}{\Delta'(x)}\right|
    \le \left|\widehat\mu - \widehat p \cdot \Ex{x \sim \calD}{\Delta'(x)}\right| + 2\tau'
    \le  \frac{\tau}{2}.
    \]
    Then, the difference between the correct answer,
    \[
        \Ex{x \sim \calD}{\phi(x, f^*(x))}
    =   \Ex{x \sim \calD}{\phi(x, 0) + (\phi(x, 1) - \phi(x, 0)) \cdot f^*(x)}
    =   \Ex{x \sim \calD}{\phi(x, 0) + \Delta(x) \cdot f^*(x)},
    \]
    and the answer $\Ex{x \sim \calD}{\phi(x, 0) + p\cdot \Delta(x)}$ returned by $\calA'$ is exactly
    \[
        \left|\Ex{x \sim \calD}{\Delta(x)\cdot f^*(x)} - p \cdot \Ex{x \sim \calD}{\Delta(x)}\right|
    =   2 \cdot \left|\Ex{x \sim \calD}{\Delta'(x)\cdot f^*(x)} - p \cdot \Ex{x \sim \calD}{\Delta'(x)}\right|
    \le \tau.
    \]
    In other words, $\calA'$ simulates a valid SQ oracle with tolerance $\tau$ when the correlation is low.

    \paragraph{High correlation gives good $\widetilde\phi$.} Now, suppose that $\left|\widehat\mu - \widehat p \cdot \Ex{x \sim \calD}{\Delta'(x)}\right| > \frac{\tau}{2} - 2\tau'$ holds in Step~3. Again, since $\widehat\mu \approx \mu = \Ex{x \sim \calD}{\Delta'(x) \cdot f^*(x)}$ and $\widehat p \approx p = \Ex{x \sim \calD}{f^*(x)}$ hold up to error $\tau'$, we have
    \[
        \left|\Ex{x \sim \calD}{\Delta'(x) \cdot f^*(x)} - p \cdot \Ex{x \sim \calD}{\Delta'(x)}\right|
    \ge \left|\widehat\mu - \widehat p \cdot \Ex{x \sim \calD}{\Delta'(x)}\right| - 2\tau'
    >   \frac{\tau}{2} - 4\tau'.
    \]
    By the assumption that $\tau \ge 4\eps + 22\tau'$, the above implies $\left|\Ex{x \sim \calD}{\Delta'(x) \cdot f^*(x)} - p \cdot \Ex{x \sim \calD}{\Delta'(x)}\right| \ge \delta \coloneqq 2\eps + 7\tau'$. Recall that we assumed $p \in [1/2 - \gamma, 1/2 + \gamma]$ for $\gamma = \eps + 2\tau'$. Applying \Cref{lemma:rounding} to $\Delta'$ shows that $\Delta'$ can be rounded to a random function $\widetilde\phi: \calX \to \zo$ with an expected error of at most
    \[
        \frac{1}{2} - (2\delta - 3\gamma)
    =   \frac{1}{2} - (\eps + 8\tau').
    \]
    By Markov's inequality, the probability that $\widetilde\phi$ has an error $\le 1/2 - (\eps + 6\tau')$ is at least
    \[
        1 - \frac{1/2 - (\eps + 8\tau')}{1/2 - (\eps + 6\tau')}
    =   \frac{2\tau'}{1/2 - (\eps + 6\tau')}
    = \Omega(\tau).
    \]
    Note that in Step~4, $\widehat\eps$ is within an additive error of $3\tau'$ to the actual error of $\widetilde\phi$. Then, if $\widetilde\phi$ has an error $\le 1/2 - (\eps + 6\tau')$, we would have
    \[
        \widehat\eps
    \le \pr{x \sim \calD}{\widetilde\phi(x) \ne f^*(x)} + 3\tau'
    \le \frac{1}{2} - \eps - 3\tau',
    \]
    and algorithm $\calA'$ would terminate. Therefore, whenever Step~4 is entered, at most $O(1/\tau)$ repetitions are needed in expectation. This shows that $\calA'$ makes at most $O(q + 1/\tau)$ SQs in expectation.

    \paragraph{The probability of making high-correlation queries.} Now, we argue that the hypothetical refutation algorithm $\calA$ must make the aforementioned high-correlation query. To this end, we couple the execution of $\calA$ simulated by our weak learner $\calA'$ (the \emph{simulated copy}) with a slight variant of it (the \emph{imaginary copy}): In the imaginary copy, we never check the correlation or go to Step~4; we always return $\Ex{x \sim \calD}{\phi(x, 0) + p \cdot \Delta(x)}$ for every query $\phi: \calX \times \zo \to [0, 1]$ that $\calA$ makes.

    Note that the imaginary copy of $\calA$ exactly runs on a refutation instance in which the distribution is the product distribution of $\calD$ and $\Bern(p)$, i.e., the label $y$ is a $p$-biased coin flip regardless of $x$. Since we assumed that $|p - 1/2| \le \gamma = \eps + 2\tau' \le 1/2 - \alpha$, we have $p \in [\alpha, 1 - \alpha]$. Then, by the soundness guarantee of $\calA$, the imaginary copy $\calA$ must output $\noise$ with probability at least $2/3$.

    In contrast, the simulated copy of $\calA$ runs on an instance in which the labels are consistent with $f^* \in \calC$. Then, the completeness of $\calA$ ensures that the simulated copy outputs $\structure$ with probability $\ge 2/3$. Therefore, in the coupling between the simulated and imaginary copies, they must diverge with probability at least $1/3$. The only way for the two copies to disagree is that, during the execution of the simulated copy, the algorithm makes an SQ with a high-correlation test function. Therefore, algorithm $\calA'$ outputs a classifier with error $\le 1/2 - \eps$ with probability at least $1/3$. Since $\calA'$ never returns an incorrect answer (i.e., a classifier with error $> 1/2 - \eps$), repeating $\calA'$ a constant number of times would boost the success probability to $2/3$, thereby giving an $\eps$-weak learner for class $\calC$ on distribution $\calD$.
\end{proof}

\subsection{Put Everything Together}
So far, we have established a reduction from SQ weak learning to SQ refutation (\Cref{prop:SQ-refutation-implies-SQ-weak-learning}), and one from SQ refutation to MQ-SQ testable learning (\Cref{prop:MQ-SQ-TLQ-implies-SQ-refutation}). We now combine them and prove a lower bound for MQ-SQ testable learning in terms of the \emph{statistical query dimension} (SQ dimension) of the concept class introduced by Blum, Furst, Jackson, Kearns, Mansour, and Rudich~\cite{BFJKMR94}. 

\begin{definition}[SQ dimension]
    The SQ dimension of a concept class $\calC \subseteq \{f: \calX \to \zo\}$ on a distribution $\calD$ over $\calX$ is the maximum number $d$ such that there exists $f_1, f_2, \ldots, f_d \in \calC$ that satisfy
    \[
        \pr{x \sim \calD}{f_i(x) \ne f_j(x)} \in \left[\frac{1 - 1/d^3}{2}, \frac{1 + 1/d^3}{2}\right]
    \]
    for all $i \ne j \in [d]$.
\end{definition}

\begin{theorem}
\label{thm:MQ-SQ-lower-bound-in-SQ-DIM}
    The following holds for a sufficiently small constant $\eps_0 > 0$: Suppose that $\calC$ is a concept class with SQ dimension $d$ on distribution $\calD$ and $\|\calD\|_2^2 \le O(1 / \poly(d))$. Let $c \ge 1$ and $\eps, \delta \le \eps_0$. Then, no MQ-SQ algorithm can $(c, \eps, \delta)$-testably learn $\calC$ on distribution $\calD$ by making $q \le o(\poly(d))$ queries to an MQ-SQ oracle with tolerance $\tau \ge \omega(1/\poly(d))$ such that $\|\Dstar\|_2^2 \le O(1 / \poly(d))$ holds for all queries of types I~and~II.
\end{theorem}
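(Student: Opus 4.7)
The plan is to chain the two preceding propositions with the Blum et al.\ \cite{BFJKMR94} SQ-dimension lower bound for weak SQ learning. Suppose for contradiction that some MQ-SQ algorithm $\calA$ is a $(c, \eps, \delta)$-PAC testable learner for $\calC$ over $\calD$ making $q = o(\poly(d))$ queries at tolerance $\tau = \omega(1/\poly(d))$, with $\|\Dstar\|_2^2 \le O(1/\poly(d))$ on all Types~I and~II queries. First I would fix refutation parameters $\eta = 0$ and $\alpha$ equal to a small absolute constant (e.g., $1/4$); since $c \ge 1$ is a fixed constant and $\eps, \tau$ can be taken below any sufficiently small $\eps_0$, this simultaneously satisfies the completeness/soundness hypothesis $\alpha > c\eta + \eps + (c+4)\tau + 6\tau'$ of \Cref{prop:MQ-SQ-TLQ-implies-SQ-refutation} and the balance condition $\alpha \le 1/2 - \Omega(\tau)$ needed by \Cref{prop:SQ-refutation-implies-SQ-weak-learning}.

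Next I would apply \Cref{prop:MQ-SQ-TLQ-implies-SQ-refutation} with $\tau' = \tau/4$ to turn $\calA$ into an SQ algorithm that solves biased-$(\alpha, 0)$-refutation for $\calC$ over $\calD$ using $q' = q + O(1)$ SQ queries of tolerance $\tau/4$. To control the failure bound $\delta + O(q)\cdot e^{-\Omega(\tau^2 B)}$, I would take $B = \Theta\!\bigl(\min\{p^2(1-p)^2/\|\calD\|_2^2,\, 1/\sup\|\Dstar\|_2^2\}\bigr)$; the hypotheses $p \in [\alpha, 1-\alpha]$ and $\|\calD\|_2^2, \|\Dstar\|_2^2 = O(1/\poly(d))$ give $B = \Omega(\poly(d))$, so $\tau^2 B = \omega(1)\cdot\poly(d) \gg \log q$ and the overall failure probability is a constant strictly less than $1/3$.

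Then I would feed the resulting refutation algorithm into \Cref{prop:SQ-refutation-implies-SQ-weak-learning}, obtaining an $\Omega(\tau)$-weak SQ learner for $\calC$ over $\calD$ that makes $O(q' + 1/\tau) = o(\poly(d))$ queries at tolerance $\Omega(\tau) = \omega(1/\poly(d))$. The contradiction comes from the BFJKMR SQ-dimension lower bound: a class of SQ dimension $d$ admits no $\gamma$-weak SQ learner at advantage $\gamma = \omega(1/\poly(d))$ using a sub-polynomial (in $d$) number of queries of tolerance $\omega(1/\poly(d))$, because of the standard $\Omega(\gamma^2 d)$-query / $O(\gamma)$-tolerance trade-off implicit in the SQ-dimension bound.

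The main obstacle I anticipate is purely bookkeeping the exponents in the various occurrences of ``$\poly(d)$.'' Specifically, I must ensure the polynomial bounding $\|\calD\|_2^2$ and $\sup\|\Dstar\|_2^2$ is large enough relative to both (i) the polynomial controlling $1/\tau$, so that $\tau^2 B$ dominates $\log q$ in \Cref{prop:MQ-SQ-TLQ-implies-SQ-refutation}'s failure bound, and (ii) the polynomial $d^{\Theta(1)}$ appearing in the BFJKMR bound, so that the final step has slack. These are the routine checks that explain the theorem's qualitative $o(\cdot)$ versus $\omega(\cdot)$ phrasing rather than a single matched polynomial.
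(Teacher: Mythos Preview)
Your proposal is correct and follows essentially the same route as the paper: chain \Cref{prop:MQ-SQ-TLQ-implies-SQ-refutation} and \Cref{prop:SQ-refutation-implies-SQ-weak-learning} with $\eta=0$ and a small constant $\alpha$, then invoke the BFJKMR SQ-dimension lower bound for the contradiction. The only cosmetic differences are that the paper takes $B=\Theta((\log q)/\tau^2)$ (the minimal value needed) rather than your maximal choice, and handles possibly non-constant $c$ by the explicit WLOG $\tau \le \eps_0/c$ rather than assuming $c$ is fixed; neither changes the argument.
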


For concreteness, consider the hypercube $\calX = \zo^n$ and the uniform distribution over it. It is well-known that the family of parity functions has an SQ dimension of $2^n$. Furthermore, for every $k \le n^{1 - \Omega(1)}$, both $k$-juntas and depth-$k$ decision trees have SQ dimensions of $n^{\Omega(k)}$, as they both contain all parity functions of $\le k$ variables. Thus, \Cref{thm:MQ-SQ-lower-bound-in-SQ-DIM} gives $2^{\Omega(n)}$ or $n^{\Omega(k)}$ lower bounds against MQ-SQ testable learners for these classes. Regarding the constraint on $\Dstar$, if $\Dstar$ is the uniform distribution over a $d'$-dimensional subcube, we need $2^{-d'} = \|\Dstar\|_2^2 \le O(1/\poly(d))$, so ensuring $d' = \Omega(n)$ would suffice. (Recall from \Cref{sec:MQ-SQ-examples} that this condition holds when implementing many existing query-based learners as MQ-SQ algorithms.) 

\begin{proof}
    Suppose towards a contradiction that such an MQ-SQ algorithm exists. Ignoring all the other parameters for now, \Cref{prop:MQ-SQ-TLQ-implies-SQ-refutation} shows that there is an algorithm that refutes $\calC$ by making $q' = q + O(1)$ queries to an SQ oracle with tolerance $\tau' = \tau/4$. Applying \Cref{prop:SQ-refutation-implies-SQ-weak-learning} then gives an algorithm that $\Omega(\tau)$-weakly learns parity functions using $O(q' + 1/\tau') = O(q + 1/\tau)$ queries to an SQ oracle with tolerance $\Theta(\tau)$. By~\cite[Theorem 12]{BFJKMR94}, to $\Omega(1/d^3)$-weakly learn concept class $\calC$ using an SQ oracle with tolerance $\Omega(d^{-1/3})$, at least $\Omega(d^{1/3})$ queries are needed. Since $q, 1/\tau = o(\poly(d))$, we obtain a contradiction.
    
    Now, we set the parameters in \Cref{prop:MQ-SQ-TLQ-implies-SQ-refutation,prop:SQ-refutation-implies-SQ-weak-learning} carefully. We may assume that $\tau \le \eps_0 / c$ without loss of generality; a smaller tolerance makes the SQ oracle (and thus the lower bound result) stronger. Since $\eps, \delta, c\tau \le \eps_0$ are sufficiently small and $\tau' = \tau/4$, we can choose $\alpha = 0.1$ and $\eta = 0$ in \Cref{prop:MQ-SQ-TLQ-implies-SQ-refutation} such that the first condition $\alpha > c\eta + \eps + (c + 4)\tau + 6\tau'$ is satisfied. Furthermore, the condition that $\alpha \le 1/2 - \Omega(\tau)$ in \Cref{prop:SQ-refutation-implies-SQ-weak-learning} would also hold. Recall that the failure probability increases from $\delta \le \eps_0$ to $\delta + O(q) \cdot e^{-\Omega(\tau^2B)}$ in \Cref{prop:MQ-SQ-TLQ-implies-SQ-refutation}. Setting $B = \Theta((\log q) / \tau^2) = o(\poly(d))$ suffices to control the new failure probability by $2\eps_0$.
    
    It remains to check the second and the third conditions of \Cref{prop:MQ-SQ-TLQ-implies-SQ-refutation}. For the third, we need the MQ-SQ testable learner to restrict the distribution $\Dstar$ in its queries such that $\|\Dstar\|_2^2 \le 1/B$. This is ensured by $\|\Dstar\|_2^2 \le O(1/\poly(d))$ and $B \le o(\poly(d))$. Finally, to check the second condition that $B \le p^2(1-p)^2/\|\calD_x\|_2^2$, we note that $\|\calD_x\|_2^2 = \|\calD\|_2^2 \le O(1/\poly(d))$. Furthermore, by the way in which the reduction works in the proof of \Cref{prop:SQ-refutation-implies-SQ-weak-learning}, whenever the refutation algorithm is called, the labels are nearly balanced, i.e., $p^2(1-p)^2 = \Omega(1)$. Therefore, the second condition is always satisfied by our choice of $B = o(\poly(d))$. This completes the proof.
\end{proof}

\bibliographystyle{alpha}
\bibliography{main}

@inproceedings{KM91,
  title={Learning decision trees using the Fourier spectrum},
  author={Kushilevitz, Eyal and Mansour, Yishay},
  booktitle={Symposium on Theory of Computing (STOC)},
  pages={455--464},
  year={1991}
}

@inproceedings{Blum03,
  title={Open problem: learning a function of $r$ relevant variables},
  author={Blum, Avrim},
  booktitle={Conference on Computational Learning Theory (COLT)},
  pages={731--733},
  year={2003}
}

@book{ODonnell14,
  title={Analysis of boolean functions},
  author={O'Donnell, Ryan},
  year={2014},
  publisher={Cambridge University Press}
}

@inproceedings{Vadhan17,
  title={On learning vs. refutation},
  author={Vadhan, Salil},
  booktitle={Conference on Learning Theory (COLT)},
  pages={1835--1848},
  year={2017}
}

@inproceedings{KL18,
  author =	{Kothari, Pravesh K. and Livni, Roi},
  title =	{{Improper Learning by Refuting}},
  booktitle =	{Innovations in Theoretical Computer Science (ITCS)},
  pages =	{55:1--55:10},
  year =	{2018}
}

@article{BLQT22,
  title={Properly learning decision trees in almost polynomial time},
  author={Blanc, Guy and Lange, Jane and Qiao, Mingda and Tan, Li-Yang},
  journal=jacm,
  volume={69},
  number={6},
  pages={1--19},
  year={2022}
}

@inproceedings{GKK23,
  title={A moment-matching approach to testable learning and a new characterization of rademacher complexity},
  author={Gollakota, Aravind and Klivans, Adam R. and Kothari, Pravesh K.},
  booktitle={Symposium on Theory of Computing (STOC)},
  pages={1657--1670},
  year={2023}
}

@inproceedings{RV23,
  title={Testing distributional assumptions of learning algorithms},
  author={Rubinfeld, Ronitt and Vasilyan, Arsen},
  booktitle={Symposium on Theory of Computing (STOC)},
  pages={1643--1656},
  year={2023}
}

@inproceedings{BFJKMR94,
  title={Weakly learning DNF and characterizing statistical query learning using Fourier analysis},
  author={Blum, Avrim and Furst, Merrick and Jackson, Jeffrey and Kearns, Michael and Mansour, Yishay and Rudich, Steven},
  booktitle={Proceedings of the twenty-sixth annual ACM symposium on Theory of computing},
  pages={253--262},
  year={1994}
}

@article{DKK+23,
  title={Efficient testable learning of halfspaces with adversarial label noise},
  author={Diakonikolas, Ilias and Kane, Daniel and Kontonis, Vasilis and Liu, Sihan and Zarifis, Nikos},
  journal={Advances in Neural Information Processing Systems},
  volume={36},
  pages={39470--39490},
  year={2023}
}

@article{GKSV23,
  title={Tester-learners for halfspaces: Universal algorithms},
  author={Gollakota, Aravind and Klivans, Adam and Stavropoulos, Konstantinos and Vasilyan, Arsen},
  journal={Advances in Neural Information Processing Systems},
  volume={36},
  pages={10145--10169},
  year={2023}
}

@article{MOS04,
  title={Learning functions of k relevant variables},
  author={Mossel, Elchanan and O'Donnell, Ryan and Servedio, Rocco A},
  journal={Journal of Computer and System Sciences},
  volume={69},
  number={3},
  pages={421--434},
  year={2004},
  publisher={Elsevier}
}

@article{BF02,
  title={On using extended statistical queries to avoid membership queries},
  author={Bshouty, Nader H and Feldman, Vitaly},
  journal={Journal of Machine Learning Research},
  volume={2},
  number={Feb},
  pages={359--395},
  year={2002}
}

@inproceedings{GKK08,
  title={Agnostically learning decision trees},
  author={Gopalan, Parikshit and Kalai, Adam Tauman and Klivans, Adam R},
  booktitle={Proceedings of the fortieth annual ACM symposium on Theory of computing},
  pages={527--536},
  year={2008}
}

@article{Feldman09,
  title={On the power of membership queries in agnostic learning},
  author={Feldman, Vitaly},
  journal={The Journal of Machine Learning Research},
  volume={10},
  pages={163--182},
  year={2009},
  publisher={JMLR. org}
}

@article{ELSW07,
  title={Separating models of learning from correlated and uncorrelated data},
  author={Elbaz, Ariel and Lee, Homin K and Servedio, Rocco A and Wan, Andrew},
  journal={The Journal of Machine Learning Research},
  volume={8},
  pages={277--290},
  year={2007},
  publisher={JMLR. org}
}

@article{FS09,
  title={Separating models of learning with faulty teachers},
  author={Feldman, Vitaly and Shah, Shrenik},
  journal={Theoretical computer science},
  volume={410},
  number={19},
  pages={1903--1912},
  year={2009},
  publisher={Elsevier}
}

@article{MRS25,
  title={Quality control in sublinear time: a case study via random graphs},
  author={Marcussen, Cassandra and Rubinfeld, Ronitt and Sudan, Madhu},
  journal={arXiv preprint arXiv:2508.16531},
  year={2025}
}

@inproceedings{KSV24a,
  title={Testable learning with distribution shift},
  author={Klivans, Adam and Stavropoulos, Konstantinos and Vasilyan, Arsen},
  booktitle={The Thirty Seventh Annual Conference on Learning Theory},
  pages={2887--2943},
  year={2024},
  organization={PMLR}
}

@article{STW24,
  title={Testably learning polynomial threshold functions},
  author={Slot, Lucas and Tiegel, Stefan and Wiedmer, Manuel},
  journal={Advances in Neural Information Processing Systems},
  volume={37},
  pages={3781--3831},
  year={2024}
}

@InProceedings{KSV24b,
  title = 	 {Learning Intersections of Halfspaces with Distribution Shift: Improved Algorithms and SQ Lower Bounds},
  author =       {Klivans, Adam and Stavropoulos, Konstantinos and Vasilyan, Arsen},
  booktitle = 	 {Proceedings of Thirty Seventh Conference on Learning Theory},
  pages = 	 {2944--2978},
  year = 	 {2024},
  editor = 	 {Agrawal, Shipra and Roth, Aaron},
  volume = 	 {247},
  series = 	 {Proceedings of Machine Learning Research},
  month = 	 {30 Jun--03 Jul},
  publisher =    {PMLR},
  pdf = 	 {https://proceedings.mlr.press/v247/klivans24b/klivans24b.pdf},
  url = 	 {https://proceedings.mlr.press/v247/klivans24b.html},
}

@inproceedings{DLS14,
  title={From average case complexity to improper learning complexity},
  author={Daniely, Amit and Linial, Nati and Shalev-Shwartz, Shai},
  booktitle={Proceedings of the forty-sixth annual ACM symposium on Theory of computing},
  pages={441--448},
  year={2014}
}

@inproceedings{DS16,
  title={Complexity theoretic limitations on learning dnf’s},
  author={Daniely, Amit and Shalev-Shwartz, Shai},
  booktitle={Conference on Learning Theory},
  pages={815--830},
  year={2016},
  organization={PMLR}
}

@inproceedings{Daniely16,
  title={Complexity theoretic limitations on learning halfspaces},
  author={Daniely, Amit},
  booktitle={Proceedings of the forty-eighth annual ACM symposium on Theory of Computing},
  pages={105--117},
  year={2016}
}

@article{BL97,
  title={Selection of relevant features and examples in machine learning},
  author={Blum, Avrim L and Langley, Pat},
  journal={Artificial intelligence},
  volume={97},
  number={1-2},
  pages={245--271},
  year={1997},
  publisher={Elsevier}
}

\end{document}